\documentclass{article}

\usepackage{microtype}
\usepackage{graphicx}
\usepackage{subcaption}
\usepackage{booktabs} % for professional tables
\usepackage{multirow}
\usepackage{bbm}
\usepackage{hyperref}
\usepackage{enumitem}
\usepackage[ruled,vlined]{algorithm2e}
\usepackage{amsmath,amssymb,bm}

\newcommand{\supp}{\operatorname{supp}}

\usepackage[accepted]{icml2026}

\usepackage{amsmath}
\usepackage{amssymb}
\usepackage{mathtools}
\usepackage{amsthm}

\usepackage[capitalize,noabbrev]{cleveref}

\theoremstyle{plain}
\newtheorem{theorem}{Theorem}[section]

\newtheorem{lemma}[theorem]{Lemma}

\theoremstyle{definition}

\newtheorem{assumption}[theorem]{Assumption}
\theoremstyle{remark}

\usepackage[textsize=tiny]{todonotes}

\icmltitlerunning{Sparse Bayesian Deep Functional Learning with Structured Region Selection}

\begin{document}

\twocolumn[
  \icmltitle{Sparse Bayesian Deep Functional Learning with Structured Region Selection}

  % It is OKAY to include author information, even for blind submissions: the
  % style file will automatically remove it for you unless you've provided
  % the [accepted] option to the icml2026 package.

  % List of affiliations: The first argument should be a (short) identifier you
  % will use later to specify author affiliations Academic affiliations
  % should list Department, University, City, Region, Country Industry
  % affiliations should list Company, City, Region, Country

  % You can specify symbols, otherwise they are numbered in order. Ideally, you
  % should not use this facility. Affiliations will be numbered in order of
  % appearance and this is the preferred way.
  \icmlsetsymbol{equal}{*}

  \begin{icmlauthorlist}
    \icmlauthor{Xiaoxian Zhu}{yyy}
    \icmlauthor{Yingmeng Li}{yyy}
    \icmlauthor{Shuangge Ma}{comp}
    \icmlauthor{Mengyun Wu}{yyy}
    % \icmlauthor{Firstname5 Lastname5}{yyy}
    % \icmlauthor{Firstname6 Lastname6}{sch,yyy,comp}
    % \icmlauthor{Firstname7 Lastname7}{comp}
    % %\icmlauthor{}{sch}
    % \icmlauthor{Firstname8 Lastname8}{sch}
    % \icmlauthor{Firstname8 Lastname8}{yyy,comp}
    %\icmlauthor{}{sch}
    %\icmlauthor{}{sch}
  \end{icmlauthorlist}

  \icmlaffiliation{yyy}{School of Statistics and Data Science, Shanghai University of Finance and Economics}
  \icmlaffiliation{comp}{Department of Biostatistics, Yale School of Public Health}
  %\icmlaffiliation{sch}{School of ZZZ, Institute of WWW, Location, Country}

  \icmlcorrespondingauthor{Mengyun Wu}{wu.mengyun@mail.shufe.edu.cn}
  %\icmlcorrespondingauthor{Firstname2 Lastname2}{first2.last2@www.uk}

  % You may provide any keywords that you find helpful for describing your
  % paper; these are used to populate the "keywords" metadata in the PDF but
  % will not be shown in the document
  \icmlkeywords{Functional Data Analysis, Sparse Bayesian Learning, Region Selection, Deep Neural Networks, Model Interpretability}

  \vskip 0.3in
]

% this must go after the closing bracket ] following \twocolumn[ ...

% This command actually creates the footnote in the first column listing the
% affiliations and the copyright notice. The command takes one argument, which
% is text to display at the start of the footnote. The \icmlEqualContribution
% command is standard text for equal contribution. Remove it (just {}) if you
% do not need this facility.

% Use ONE of the following lines. DO NOT remove the command.
% If you have no special notice, KEEP empty braces:
\printAffiliationsAndNotice{}  % no special notice (required even if empty)
% Or, if applicable, use the standard equal contribution text:
% \printAffiliationsAndNotice{\icmlEqualContribution}

\begin{abstract}
% With the rapid advancement of technology, increasingly complex structured and continuous data are being generated across many application domains. Such data are naturally suited for modeling within the functional data analysis framework. However, conventional functional regression methods often rely on linear assumptions, which struggle to capture complex nonlinear relationships between functional predictors and scalar responses. Moreover, while local sparsity is common in practical problems, most existing deep neural network (DNN) based methods do not support interpretable region selection. To address these gaps, this paper proposes a sparse Bayesian functional deep neural network framework for functional predictors. The framework integrates B-spline representations into a Bayesian DNN, enabling both nonlinear modeling and region selection. We introduce a group-normal spike-and-slab prior on the first hidden layer to identify active subregions of functional effects, thereby enhancing interpretability. {\color{red}The Bayesian posterior inference naturally quantifies estimation uncertainty.} Theoretically, we establish approximation error bounds, posterior consistency, and region selection consistency. Empirically, comprehensive simulations and real-world case studies demonstrate the effectiveness and superiority of the proposed method, particularly in settings where traditional functional models are misspecified or lack flexibility.
In modern applications such as ECG monitoring, neuroimaging, wearable sensing, and industrial equipment diagnostics, complex and continuously structured data are ubiquitous, presenting both challenges and opportunities for functional data analysis. However, existing methods face a critical trade-off: conventional functional models are limited by linearity, whereas deep learning approaches lack interpretable region selection for sparse effects. To bridge these gaps, we propose a sparse Bayesian functional deep neural network (sBayFDNN). It learns adaptive functional embeddings through a deep Bayesian architecture to capture complex nonlinear relationships, while a structured prior enables interpretable, region-wise selection of influential domains with quantified uncertainty. Theoretically, we establish rigorous approximation error bounds, posterior consistency, and region selection consistency. These results provide the first theoretical guarantees for a Bayesian deep functional model, ensuring its reliability and statistical rigor. Empirically, comprehensive simulations and real‑world studies confirm the effectiveness and superiority of sBayFDNN. Crucially, sBayFDNN excels in recognizing intricate dependencies for accurate predictions and more precisely identifies functionally meaningful regions, capabilities fundamentally beyond existing approaches.
\end{abstract}

\section{Introduction}

% With rapid technological advances, many applications are generating increasingly complex data, much of which exhibit inherent structure and continuity over time or space. Such data, naturally viewed as realizations of underlying functions, are well suited for analysis within the framework of functional data analysis (FDA), which captures their smoothness and dynamic structure to model complex temporal or spatial patterns. Its effectiveness has been demonstrated across a broad spectrum of real-world applications.
% {\color{blue}For example, ECG recordings are analyzed as functional curves to estimate the underlying morphological structure of heartbeats \cite{matuk2022bayesian}. Similarly, neuroimaging data are modeled as continuous fields to capture spatiotemporal dynamics constrained by the brain's geometry \cite{pang2023geometric}.}
% Similarly, brain signals from neuroimaging are often modeled as functions to investigate patterns of neural connectivity \cite{tsai2024latent}. }
% Similarly, traffic flow data in transportation systems can be treated as functional observations, facilitating more accurate and continuous traffic state prediction \cite{ma2024network}.

With rapid technological advances, diverse fields—from healthcare to neuroscience—are generating increasingly complex data that exhibit inherent structure and continuity over time or space. For instance, electrocardiogram (ECG) recordings are analyzed as functional curves to elucidate cardiac morphology and identify pathological patterns, offering insights into cardiovascular health \cite{pang2023geometric}. Similarly, neuroimaging data are modeled as spatiotemporal fields to capture brain dynamics constrained by anatomical geometry, advancing the study of neural functions and disorders  \cite{tsai2024latent}. Such data are naturally viewed as realizations of underlying functions and are suited for analysis within functional data analysis (FDA), which uses their smoothness and structure to model temporal or spatial patterns. The analysis of these functional observations is important, particularly in human health studies, where interpreting such signals can inform diagnosis, monitoring, and treatment.

In FDA, supervised learning with functional predictors is fundamentally important yet faces two major challenges. First, the relationship between functional predictors and responses is often intrinsically complex and nonlinear. Ignoring such nonlinearity risks significant model misspecification, potentially leading to poor predictive performance in real-world applications. 
% Second, many applications exhibit local sparsity, where the functional coefficient is nonzero only over specific subregions of the domain. 
Second, and more critically, many applications exhibit local sparsity or region-specific effects. This means the predictive relationship is not globally active across the entire function domain; instead, it is concentrated within one or a few specific, contiguous subregions (e.g., time intervals or wavelength bands) where the functional coefficient is nonzero, while being negligible or zero elsewhere. Failure to account for this structure may introduce substantial noise and degrade both interpretability and estimation efficiency. 

These two challenges are commonly intertwined in practice. For example, in bedside monitoring, physiological waveforms such as ECG—driven by intricate and dynamic pathophysiological states—not only manifest highly complex and nonlinear patterns but also frequently exhibit local sparsity \cite{moor2023foundation}.
% , where clinically meaningful information is concentrated within specific subintervals of the waveform while remaining negligible elsewhere \cite{moor2023foundation}. 
Specifically, clinically actionable information is not distributed uniformly but is concentrated within physiologically meaningful regions of the waveform, such as the QRS complex for arrhythmia analysis or the ST segment for ischemia detection. Signals from other intervals are often non-informative. This makes interpretable region selection a core enabling step, as it precisely targets the localized subdomains where complex nonlinear relationships are most active and meaningful, thereby forming the foundation for efficient and robust functional models.

%These dual aspects motivate the urgent need for methods that can simultaneously capture complex nonlinear relationships and perform interpretable region selection.

% A substantial body of work has focused on supervised learning with functional data. Classical functional linear regression (FLR) methods primarily estimate smooth coefficient functions but do not inherently support region selection—a gap later addressed by locally sparse FLR estimators. However, such methods remain constrained by linearity. While nonlinear functional models offer greater flexibility, they often rely on specific, prespecified structures and may struggle with highly complex relationships. Deep neural networks (DNNs) provide a powerful alternative for function approximation and have been adapted to functional data, yet existing DNN-based approaches largely prioritize prediction accuracy and lack mechanisms for interpretable region selection. Although feature selection in DNNs is an active area, most techniques are designed for scalar inputs and do not leverage the continuous, structured domain of functional data. These limitations collectively highlight the need for a DNN-based framework that simultaneously handles nonlinear complexity and performs structured, interpretable selection on the functional domain.

Supervised learning with functional data has been extensively studied. Classical functional linear regression (FLR) estimates smooth coefficient functions but lacks inherent region selection—a limitation partially mitigated by later sparse FLR variants, though linearity constraints remain. Nonlinear functional models offer flexibility yet often depend on prespecified structures, limiting their capacity to capture highly complex relationships. While deep neural networks (DNNs) provide strong approximation power and have been adapted to functional settings, existing DNN-based methods focus predominantly on prediction and do not incorporate interpretable, structured region selection on the functional domain. Moreover, common DNN feature selection techniques are designed for scalar inputs and fail to leverage the continuous nature of functional data. These gaps underscore the need for a DNN framework that jointly handles nonlinear complexity and performs interpretable region selection in functional settings.

\begin{figure*}[t]
  \centering
  \includegraphics[width=\textwidth]{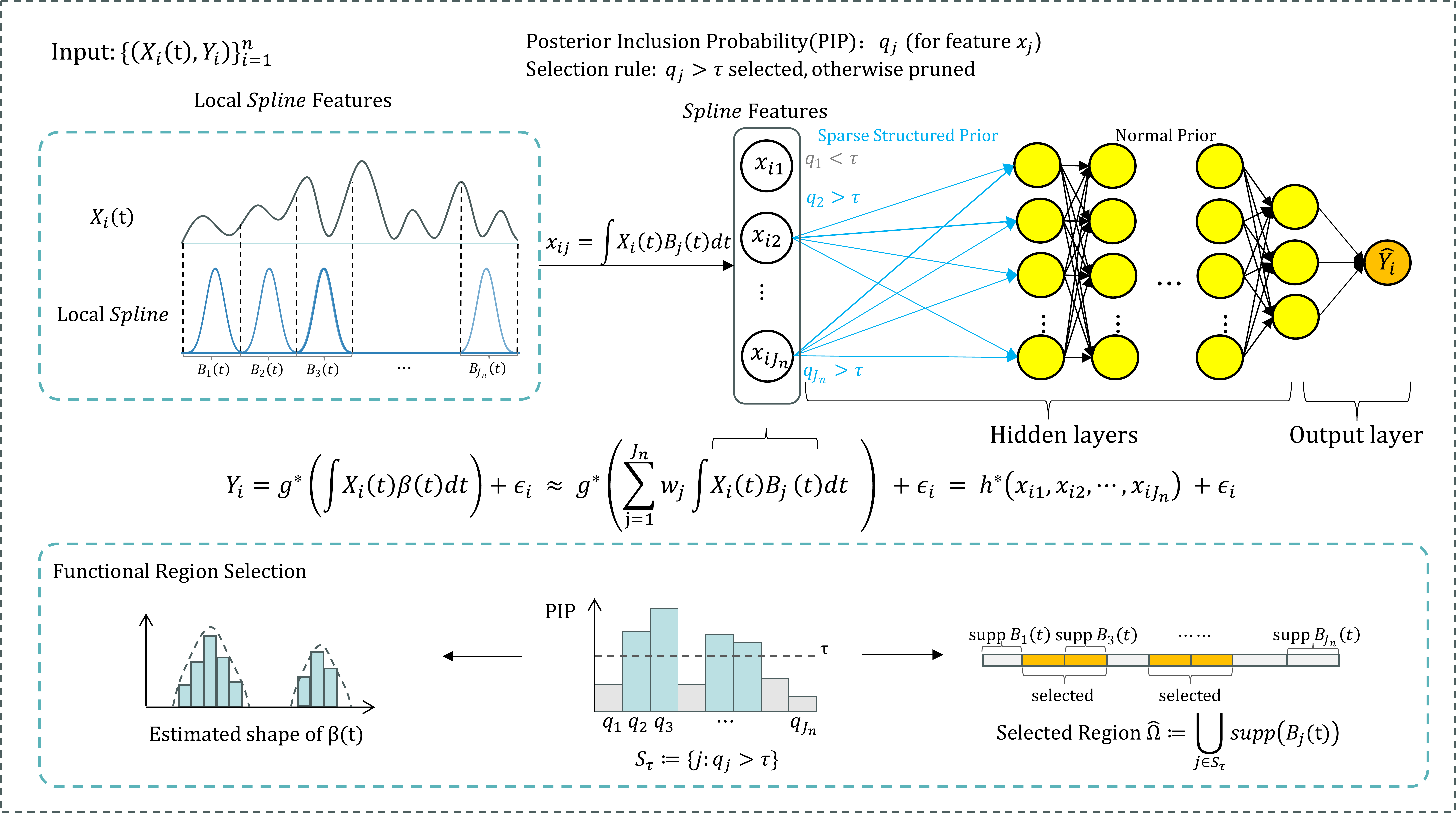}
  \caption{Workflow of sBayFDNN. The functional predictor $X_i(t)$ is projected onto locally supported B-spline bases to form spline features $\boldsymbol{x}_i=(x_{i1},\ldots,x_{i,J_n})^{\top}$. A DNN with a spike-and-slab prior on the first-layer weight columns yields feature-wise posterior inclusion probabilities (PIPs) $q_j$, which are thresholded to select spline features and mapped back to the function domain to produce an estimated active region $\widehat{\Omega}$.}
  \label{workflow}
\end{figure*}

We propose a sparse Bayesian deep neural network (sBayFDNN) that integrates functional embedding learning with a Bayesian DNN architecture to perform interpretable region‑wise selection for functional predictors. The model captures flexible nonlinear function‑to‑scalar relationships while providing posterior inference for uncertainty in region selection. Our main contributions are:
\begin{itemize}
\item A Bayesian Functional DNN Framework with Uncertainty Quantification: We introduce a model that embeds functional predictors through a deep Bayesian network to capture complex nonlinear associations with scalar responses. The framework delivers not only point estimates but also posterior uncertainty for the selected regions, overcoming the rigidity of traditional functional regression. 
\item Interpretable Region Selection via Structured Sparsity: A sparse prior is imposed on the first hidden layer to perform region-wise functional selection. This significantly enhances model interpretability while mitigating the black‑box nature of conventional DNNs, yielding actionable insights into influential domains.
\item Theoretical Guarantees: We establish rigorous theoretical results, including approximation error bounds, posterior consistency, and region selection consistency. These provide the first theoretical foundation for a Bayesian deep functional model, thereby opening a new venue for statistically rigorous and interpretable functional deep learning.
\item Empirical Validation: Comprehensive simulations and diverse real-world studies confirm that sBayFDNN outperforms existing methods in both prediction accuracy and region identification, especially in challenging scenarios where conventional models are misspecified or inflexible, demonstrating its practical superiority.
\end{itemize}

\section{Related Works}

\subsection{Classical Functional Regression Methods}
Classical functional regression methods, represented by the functional linear model and its extensions, establish relationships between functional predictors and scalar responses using techniques such as B-splines \citep{cardot2003spline}, reproducing kernel Hilbert spaces \cite{crambes2009smoothing}, and smoothing regularization \citep{cai2012minimax}. 
To enhance interpretability, sparse penalization methods have been introduced for local region selection, including sparse functional linear regression \cite{lee2012sparse}, sparse functional data embeddings with localized functional principal component analysis \cite{chen2015localized}, as well as smooth locally sparse estimators \citep{lin2017locally,belli2022smoothly} and their Bayesian counterparts \citep{zhu2025bayesian}. To move beyond linearity, nonlinear methods such as functional generalized additive models \cite{mclean2014functional} and functional single‑index models have been proposed \citep{jiang2020functional}. These nonlinear models are further combined with the $L_0$ norm \cite{chamon2020functional} and the functional SCAD penalty \citep{nie2023estimating} to achieve local sparsity.

However, these approaches face key limitations: linear models cannot adequately capture complex nonlinear relationships, while existing nonlinear methods rely on pre-defined kernels or bases, limiting their flexibility and capacity to model intricate dependencies directly from data.

\subsection{Deep Learning for Functional Data}
% The application of deep learning to functional data has developed over recent years, with methods primarily aimed at enhancing representation learning and predictive performance. Early approaches introduced adaptive basis layers within DNNs to learn data-driven functional representations \citep{yao2021deep}. Subsequent work developed DNNs specifically designed for functional inputs \citep{thind2023deep} and classifiers built on functional principal components \citep{wang2023deep}. Autoencoder architectures have also been adapted for functional latent representation learning \citep{wu2024functional}.   

Deep learning for functional data has advanced in recent years, with methods primarily aimed at enhancing representation learning and predictive performance. Early approaches introduced adaptive basis layers to learn data-driven functional representations within DNNs \citep{yao2021deep}, and were followed by DNNs specifically designed for functional inputs \citep{thind2023deep} and classifiers built on functional principal components \citep{wang2023deep}. Autoencoder architectures have also been adapted for functional latent representation learning \citep{wu2024functional}.

Despite their predictive power, these methods often operate as ``black boxes'', lacking explicit mechanisms for region selection on the functional domain. This limits their utility in scientific applications where identifying which specific regions of the data drive predictions is essential.

\subsection{Sparsity and Selection in Neural Networks}
A parallel line of research incorporates sparsity into neural networks for vector-valued inputs.
This includes frameworks performing variable selection via group sparsity penalties on the first hidden layer \citep{dinh2020consistent,chu2023estimating,luo2025sparse},
architectures employing sparsity-inducing mechanisms on linear input layers \citep{chen2021nonlinear,atashgahi2023supervised},
and residual layers used to achieve feature-wise sparsity \citep{lemhadri2021lassonet,fan2025multi}. 
Furthermore, \citet{Sun2022Consistent} proposed a Bayesian framework to learn sparse DNNs through connection pruning, establishing posterior consistency but not traditional input-level variable selection. Beyond variable selection, similar sparse DNN techniques have also been developed for applications in other fields, such as biological or social network reconstruction \cite{fan2026bilevel,yang2026heterogeneous}. 

While these techniques enhance interpretability for vector data, they are not designed to exploit the continuous, structured nature of functional inputs. Consequently, a significant gap remains for a method that integrates nonlinear deep learning with structured, interpretable region selection and uncertainty quantification specifically for functional data.

\section{Methodology}\label{sec:methodology}

Let $X_1(t),\dots,X_n(t)$ be $n$ independent functional covariates defined on the closed interval $\mathcal{T}\subset\mathbb{R}$. For theoretical and computational purposes, $\mathcal{T}$ is standardized to $[0,1]$. Let $Y_i$ denote a scalar response. We consider a functional single-index model (see Figure~\ref{workflow} for a schematic overview):
\begin{equation}
    Y_i = g^*\!\left(\int_{\mathcal{T}} X_i(t)\,\beta(t)\,dt\right) + \varepsilon_i,
    \label{equation functional_model}
\end{equation}
where $\varepsilon_i \sim \mathcal{N}(0,\sigma_\varepsilon^2)$, $\beta(\cdot)$ is an unknown coefficient function, and $g^*(\cdot)$ is an unknown nonlinear function. Without loss of generality, we assume the domain of $g^*(\cdot)$ is $[0,1]$; this can be achieved by suitably rescaling $\int_{\mathcal{T}} X_i(t)\beta(t)dt$ and absorbing the scaling into the definition of $g^*$. For notational simplicity, we retain the original notation in the subsequent exposition.

The function $g^*(\cdot)$ allows for a flexible, potentially nonlinear relationship between the scalar response and the functional covariate through the single-index projection.
Motivated by applications where the association is driven by only a few subregions of $\mathcal T$ (e.g., a small number of time intervals), we assume that $\beta(\cdot)$ is effectively localized and aim to identify the corresponding active regions.

To circumvent the infinite-dimensionality of $\beta(\cdot)$, we approximate it using a truncated B-spline basis expansion:
\begin{equation}
    \beta_{J_n}(t)=\sum_{j=1}^{J_n} w_{J_n,j}\,B_j(t),
    \label{equation beta_expansion}
\end{equation}
which yields the following finite-dimensional representation:
\begin{equation}\label{eq:finite_rep}
\begin{split}
    Y_i
\;\approx\;
g^*\!\bigl(\boldsymbol{\eta}(X_i)^\top \boldsymbol{w}_{J_n}\bigr)
+\varepsilon_i
\;=\;
h^*\!\bigl(\boldsymbol{\eta}(X_i)\bigr)
+\varepsilon_i,
\end{split}
\end{equation}
by replacing $\beta(\cdot)$ with $\beta_{J_n}(\cdot)$, where $\int_{\mathcal T} X_i(t)\beta_{J_n}(t)\,dt = \boldsymbol\eta(X_i)^\top\boldsymbol w_{J_n}$. 
Here, $\{B_j(\cdot)\}_{j=1}^{J_n}$ denotes a collection of $J_n$ B-spline basis functions, and $\boldsymbol w_{J_n}:=(w_{J_n,1},\ldots,w_{J_n,J_n})^\top\in\mathbb R^{J_n}$ is the corresponding vector of coefficients. The spline features $\boldsymbol\eta(X_i)=(x_{i1},\ldots,x_{iJ_n})^\top\in\mathbb R^{J_n}$, where $x_{ij}:=\int_{\mathcal T} X_i(t)\,B_j(t)\,dt$. We further introduce $h^*(\cdot)$, which absorbs the B-spline coefficient vector $\boldsymbol w_{J_n}$, thus reducing the problem to learning a finite-dimensional function. 

Next, we aim to estimate $h^*(\cdot)$ using a DNN $F_{\boldsymbol\theta}$ that takes $\boldsymbol\eta(X_i)\in\mathbb R^{J_n}$ as input.
The network can be written as
\begin{equation}\label{DNN}
F_{\boldsymbol\theta} = A_{H_n}\circ \sigma \circ A_{H_n-1}\circ \cdots \circ \sigma \circ A_1.
\end{equation}
% {\color{red}
% The DNN $F_{\boldsymbol{\theta}}$ directly estimates the composite function $h^*(x_1,\dots,x_{J_n}) = g^*\bigl(\sum_{j=1}^{J_n} b_j x_j\bigr)$, which is the exact transformation of the single‑index model after spline expansion. We use the DNN $F_{\boldsymbol{\theta}}$ as a flexible nonparametric estimator of this projected regression function. This connects the functional single-index model with the fitted DNN: the spline scores provide the finite-dimensional inputs, while the DNN estimates the corresponding conditional mean function.}

In (\ref{DNN}), we consider a feedforward DNN with $H_n-1$ hidden layers and width $L_h$ at layer $h$, where $L_0=J_n$ and $L_{H_n}=1$, with a common activation $\sigma(\cdot)$ in the hidden layers and a linear output layer. We consider the ReLU activation in our study. Let $\boldsymbol\theta=\{(\boldsymbol W_h,\boldsymbol b_h)\}_{h=1}^{H_n}$, where $\boldsymbol W_h\in\mathbb R^{L_h\times L_{h-1}}$ and $\boldsymbol b_h\in\mathbb R^{L_h}$ denote the weights and biases. The affine map is defined as $A_h(\boldsymbol u):=\boldsymbol W_h\boldsymbol u+\boldsymbol b_h$.
%Here, $f_\theta(\eta(X_i))$ is used to approximate $h^*(\eta(X_i))$ and estimate $\theta$ from the data.

Here, the DNN $F_{\boldsymbol{\theta}}$ directly estimates the composite function $h^*(x_{i1},\dots,x_{i,J_n}) = g^*\bigl(\sum_{j=1}^{J_n} w_{J_n,j} x_{i,j}\bigr)$, which is the transformation of the single‑index model after spline expansion. This perspective makes clear that the single‑index structure is a special case of the DNN model class. Importantly, our goal is not to explicitly recover the exact single‑index architecture; rather, we aim to find a DNN $F_{\boldsymbol{\theta}}$ within this class that accurately estimates the conditional expectation $E(Y_i\mid X_i(t))$ under the true single‑index model, which is a standard approach in nonparametric statistical learning.

In (\ref{eq:finite_rep}), since the B-spline basis is locally supported, the localization of $\beta(\cdot)$ over $\mathcal T$ is naturally reflected in the sparsity pattern of $\boldsymbol w_{J_n}$ in \eqref{equation beta_expansion}.
Therefore, we consider the setting where $w_{J_n}$ is sparse, so that 
$
\boldsymbol \eta(X_i)^\top \boldsymbol w_{J_n}=\boldsymbol\eta_{S_{J_n}}(X_i)^\top\boldsymbol  w_{S_{J_n}},
$
where $S_{J_n}\subset\{1,\ldots,J_n\}$ denotes an unknown support set, $\boldsymbol\eta_{S_{J_n}}(X_i)$ and $\boldsymbol w_{S_{J_n}}$ denote the corresponding subvectors. With (\ref{DNN}), since $\boldsymbol w_{J_n}$ is absorbed into $h^*(\cdot)$, the sparsity of $\boldsymbol w_{J_n}$ naturally translates into sparsity of the first‑layer weight matrix $\boldsymbol W_1\in\mathbb R^{L_1\times J_n}$. Specifically, denoting by $\boldsymbol W_{1,*j}\in\mathbb R^{L_1}$ the $j$-th column of $\boldsymbol W_1$, we have that if $||\boldsymbol W_{1,*j}||_2\neq 0$, then $j\in S_{J_n}$.

% In this parameterization, the first-layer weight matrix $W_1$ induces a data-adaptive linear projection of the spline features $\eta(\cdot)$. By imposing structured sparsity on the columns of $W_1$ (e.g., column-wise sparsity), the network effectively identifies a subset of influential spline features, yielding an estimated subset of active spline features.

Based on (\ref{eq:finite_rep}) and (\ref{DNN}), we propose a sparse Bayesian functional DNN (sBayFDNN) framework, employing the following prior distributions for the model parameters: 
% \begin{enumerate}[leftmargin=*,labelsep=0pt,align=left]
% \item
% $\pi(\boldsymbol W_{1,*g} \mid \gamma_g)$\\
% $= \prod_{h=1}^{L_1}
% \Bigl[\phi(W_{1,hg};0,\sigma_{1,n}^2)\Bigr]^{\gamma_g}
% \Bigl[\phi(W_{1,hg};0,\sigma_{0,n}^2)\Bigr]^{1-\gamma_g}$,\\
% $\forall g\in\{1,\dots,J_n\}$.
% \item
% $\pi(\boldsymbol\gamma)
% = \prod_{g=1}^{J_n}\lambda_n^{\gamma_g}(1-\lambda_n)^{1-\gamma_g}$.
% \item 
% $\pi(W_{h,ab})=\phi(W_{h,ab};0,\sigma^2), \qquad \forall\, h\in\{2,\dots,H_n\}$.
% \item
% $\pi(b_{h,a})=\phi(b_{h,a};0,\sigma^2),\qquad
% \forall\, h\in\{1,\dots,H_n\}$.
% \end{enumerate}
\begin{eqnarray}
&&\nonumber\text{for }j \in \{1,\dots,J_n\},\\
&&\boldsymbol W_{1,*j} \mid \gamma_j \sim
\Bigl[\mathcal{N}(0,\sigma_{1,n}^2 \boldsymbol{I})\Bigr]^{\gamma_j}\Bigl[\mathcal{N}(0,\sigma_{0,n}^2\boldsymbol{I})\Bigr]^{1-\gamma_j},\\
&&\pi(\gamma_j)\sim \operatorname{Bern}(\lambda_n),\\
&&W_{h,ab}\sim \mathcal{N}(0,\sigma^2),~\forall\, h\in\{2,\dots,H_n\},\\
&&b_{h,a}\sim \mathcal{N}(0,\sigma^2),~
\forall\, h\in\{1,\dots,H_n\}.
\end{eqnarray}
Here, $\mathcal{N}(c,d)$ denotes the normal distribution with mean $c$ and variance (covariance) $d$, and $\operatorname{Bern}(\lambda_n)$ denotes the Bernoulli distribution with success probability $\lambda_n  \in (0,1)$. The symbol $\boldsymbol{I}$ represents an identity matrix of appropriate dimensions. The entry $W_{h,ab}$ corresponds to the $(a,b)$-th element of the matrix $\boldsymbol{W}_h$, where $a\in{1,\ldots,L_h}$ and $b\in{1,\ldots,L_{h-1}}$. Each $\gamma_j$ (for $j \in {1,\dots,J_n}$) is a binary indicator. We set $\sigma_{1,n}^2 > \sigma_{0,n}^2 > 0$, with $\sigma_{0,n}^2$ taken to be a small positive value, and let $\sigma^2 > 0$.
%$\boldsymbol\gamma=(\gamma_1,\ldots,\gamma_{J_n})^\top\in\{0,1\}^{J_n}$

% \paragraph{Prior specification.}
% To facilitate the description of the prior distributions, we
% first introduce some notation.
% \begin{enumerate}[leftmargin=*,labelsep=0pt,align=left]
% \item $W_{1,*g}\in\mathbb R^{L_1}$ is the $g$th column of the first-layer weight matrix $W_1\in\mathbb R^{L_1\times J_n}$. The entry $W_{h,ab}$ denotes the $(a,b)$th element of $W_h$, where $a\in\{1,\ldots,L_h\}$ and $b\in\{1,\ldots,L_{h-1}\}$.
% \item $\gamma=(\gamma_1,\ldots,\gamma_{J_n})^\top\in\{0,1\}^{J_n}$ denotes a binary indicator vector.
% \end{enumerate}

As introduced above, distinct priors are specified for the first layer and the subsequent layers. To enable functional region selection in the spline‑feature domain, we impose a group‑wise continuous spike‑and‑slab prior on each $\boldsymbol{W}_{1,*j}$ for $j = 1,\dots,J_n$. When $\gamma_j = 0$, the spike variance $\sigma_{0,n}^2$ strongly shrinks all entries of $\boldsymbol{W}_{1,*j}$ toward zero with high probability; when $\gamma_j = 1$, the slab variance $\sigma_{1,n}^2$ permits the entries to take appreciable nonzero values, thereby allowing the corresponding spline‑basis feature to contribute to the functional representation.
% Marginally, integrating out $\gamma_g$ yields the two-component Gaussian mixture prior
% \[
% \boldsymbol W_{1,*g}\sim
% \lambda_n\,\mathcal N\!\left(0,\sigma_{1,n}^2  I_{L_1}\right)
% +(1-\lambda_n)\,\mathcal N\!\left(0,\sigma_{0,n}^2  I_{L_1}\right).
% \]
% Here $I_d$ denotes the $d\times d$ identity matrix.
% For the remaining layers, weights and biases are assigned entrywise independent Gaussian priors $\mathcal N(0,\sigma^2)$.
% This choice regularizes the network while retaining sufficient flexibility in the hidden layers, so that the nonlinear component can effectively learn the unknown structure once the relevant spline features are identified in the input layer.
For the remaining layers, weights and biases are assigned entrywise independent Gaussian priors $\mathcal{N}(0,\sigma^2)$, which preserves the flexibility needed in the hidden layers to learn complex, nonlinear patterns based on the features identified by the first hidden layer.

\section{Optimization-based Bayesian Inference}

% Given the model and prior specification in Section~\ref{sec:methodology}, we describe an optimization-based procedure for parameter estimation and functional region identification.

The proposed Bayesian inference proceeds via three steps: (i) compute a (local) posterior mode (MAP) of the network parameters under the marginal prior; (ii) derive feature-level posterior inclusion probabilities from the first‑layer; and (iii) map the selected spline features to an estimated active region. The detailed procedure is described below, while hyperparameter settings, sensitivity analyses, and runtime results are provided in Appendix A.

\subsection{MAP Fitting}
% Fix a truncation level $J_n$ and construct the spline features $\boldsymbol\eta(X_i)\in\mathbb R^{J_n}$ as in Section~\ref{sec:methodology}.
% Let $\boldsymbol\theta=\{(\boldsymbol W_h,\boldsymbol b_h)\}_{h=1}^{H_n}$ collect all network weights and biases.
For $\boldsymbol\theta$, we optimize the marginal posterior in which the inclusion indicators $\boldsymbol{\gamma}$ are integrated out, and denote the resulting marginal prior by $\pi(\boldsymbol\theta)$. Then, we compute a MAP by minimizing the negative log-posterior objective
\begin{equation}\label{eq:map_obj}
\hat{\boldsymbol\theta} \in \arg\min_{\boldsymbol \theta}\;
\frac{1}{2\sigma_\varepsilon^2}\sum_{i=1}^n\Bigl(Y_i-F_{\boldsymbol\theta}(\boldsymbol\eta(X_i))\Bigr)^2
\;-\;\log \pi(\boldsymbol\theta),
\end{equation}
where $F_{\boldsymbol\theta}$ is the $H_n$-layer feedforward network defined in (\ref{DNN}).
Optimization is carried out using stochastic gradient methods (see Appendix A for the detailed algorithm).

%an iterate sequence $\{\boldsymbol\theta^{(t)}\}_{t=1}^T$ and
% {\color{red}The optimization path $\{\boldsymbol\theta^{(t)}\}$ provides a convenient stability summary for feature relevance.
% In particular, we evaluate closed-form spike-and-slab inclusion scores at $\boldsymbol\theta^{(t)}$ and aggregate them across iterations to obtain frequency-based measures, which can be used for region extraction and optional FDR control.}

\subsection{MAP plug-in Posterior Inclusion Probabilities}
Denote $A_{1,n}=\lambda_n(\sigma_{1,n}^2)^{-L_1/2}$ and
$A_{0,n}=(1-\lambda_n)(\sigma_{0,n}^2)^{-L_1/2}$. Then,
for each feature index $j$, applying Bayes’ rule to (5) and (6) gives the conditional posterior inclusion probability (PIP):
$\Pr(\gamma_j=1\mid \boldsymbol W_{1,*j})=\frac{A_{1,n}\exp\!\left(-\frac{\|\boldsymbol W_{1,*j}\|_2^2}{2\sigma_{1,n}^2}\right)}
{A_{1,n}\exp\!\left(-\frac{\|\boldsymbol W_{1,*j}\|_2^2}{2\sigma_{1,n}^2}\right)
+
A_{0,n}\exp\!\left(-\frac{\|\boldsymbol W_{1,*j}\|_2^2}{2\sigma_{0,n}^2}\right)}.$

We compute plug‑in PIP $\hat{q}_j :=\Pr(\gamma_j=1\mid \boldsymbol{\hat{W}}_{1,*j}) $ based on the MAP estimate $\hat{\boldsymbol\theta}$ and select spline features by thresholding:
$\widehat S_{\tau}:=\{j:\hat{q}_j>\tau\}$ with default $\tau=1/2$. Since $\hat{q}_j$ is monotone in $\|\boldsymbol{\hat{W}}_{1,*j}\|_2^2$, this rule is equivalent to thresholding the first-layer column norms.

\subsection{Mapping Selected Features to an Active Region on $\mathcal T$}\label{subsec:region_map}
Given $\widehat S_{\tau}\subset\{1,\ldots,J_n\}$, we map the selected spline features back to the functional domain using the local support of B-splines.
Let $\operatorname{supp}(B_j(t))\subset\mathcal T$ denote the support of the $j$th basis function.
We define the estimated active region as
$
\widehat\Omega := \bigcup_{j\in\widehat S_{\tau}}\operatorname{supp}(B_j),
$
and represent $\widehat\Omega$ as a union of disjoint subintervals by merging adjacent components.
% Optionally, we replace the fixed thresholding rule by an FDR-controlled selection based on the aggregated inclusion frequencies along the optimization path.

\subsection{Practical Justification of MAP-based PIPs}
While full Bayesian sampling is more common for uncertainty quantification, MAP-based plug-in PIPs are also well established in sparse Bayesian modeling and sparse deep learning \cite{gan2019bayesian,rovckova2018bayesian,rovckova2018spike,yang2021gembag,Sun2022Consistent}, where they offer both theoretical guarantees and computational efficiency. In our setting, we therefore treat the resulting PIPs as approximate posterior selection scores rather than exact posterior marginals. We further examine their empirical stability across random initializations; the corresponding results are reported in Appendix~\ref{app:cross-seed-stability}.

\section{Theoretical Properties}
%We begin by introducing some notation. 
Let $\beta^*$ and $\beta_{J_n}^*$ be the true coefficient function and its truncated counterpart, respectively. Let $\boldsymbol{\omega}_{J_n}^* \in \mathbb{R}^{J_n}$ be the vector of true basis coefficients for $\beta_{J_n}^*$, $S^*_{J_n} = \operatorname{supp}(\boldsymbol{\omega}^*_{J_n})$ its support, and $s_n=|S^*_{J_n}|$ the sparsity level. Define $e_j^*= \mathbb{I}(j \in S^*_{J_n})$ as the true binary selection indicator and $\Omega^*\subset \mathcal{T}$ as the true non-zero region of $\beta^*$. 
Denote $\Omega^*(\kappa)=\{t\in \mathcal{T}:|\beta^*(t)|>\kappa\}$ as the strong-signal region and $\Omega^*(\kappa)^c$ as its complement. Denote \(a_n \lesssim b_n\) if \(a_n \leq C b_n\) for some constant \(C>0\), and \(a_n \asymp b_n\) if \(c \, b_n \leq a_n \leq C b_n\) for constants \(0<c<C\).

Denote $\mu^*(X) = g^*\left(\int_{\mathcal{T}}X(t)\beta^*(t)dt\right)$ as the true mean function and $\mu_{\boldsymbol{\theta}}(X):= F_{\boldsymbol{\theta}}(\boldsymbol{\eta}(X))$ as the network output defined in (\ref{DNN}).
Consider the class of fully-connected, $J_n$-input ReLU networks, denoted by $\mathcal{NN}_{J_n}({H_n},\bar L, E_n)$, with depth $H_n$, constant maximum hidden width $\bar L= \max_{2 \le k \le H_n} L_k$, and parameter bound $\|\boldsymbol{\theta}\|_\infty \le E_n$. We define the column-wise support of the network parameters as $\operatorname{supp}_{\mathrm{col}}(\boldsymbol{\theta})= \{j \in \{1,\cdots,J_n\}: \|\boldsymbol{W}_{1,*j}\|_2 > 0\}$. For any subset $T \subset \{1,\cdots,J_n\}$, the associated column-sparse network class is defined as
  $\mathcal{NN}_{J_n}^{\mathrm{col}}(T; H_n, \bar L, E_n) 
  := \bigl\{ F_{\boldsymbol{\theta}} \in \mathcal{NN}_{J_n}(H_n, \bar L, E_n) : \operatorname{supp}_{\mathrm{col}}(\boldsymbol{\theta}) = T \bigr\}$.

The following assumptions are imposed.

\begin{assumption}
    $\sup\limits_{X_i\in\mathcal X}\int_{\mathcal{T}} |X_i(t)|dt \le C_X$, where
$\mathcal{X}$ denotes the function space to which $X_i(t)$ belongs, and $C_X$ is a positive constant .
    \label{assumption_design_boundedness}
\end{assumption}

\begin{assumption}\label{assumption_smooth_coefficient}
$\beta^*(t)\in\mathcal H^{\alpha_\beta}([0,1])$ for some $\alpha_\beta>0$, where $\mathcal{H}^{\alpha}(I)$ denotes the H\"older space of functions on an interval $I$ with smoothness order $\alpha$.
$\Omega^*$ is a finite union of intervals and
$\bigl|\Omega^*(\kappa_{J_n})^c\bigr|
\rightarrow 0$, as $n\to\infty$,
with $\kappa_{J_n}=c_{\kappa}C_{\beta}J_n^{-{\alpha_\beta}}$, where $c_{\kappa}>4$ and $C_{\beta}$ are constants defined in Lemma \ref{prop:block1-main} (Appendix).
Moreover, the strong-signal region has proportional size in the sense that$\frac{|\Omega^*(\kappa_{J_n})|}{|\Omega^*|}\ge c_\Omega$
for some constant $c_\Omega\in(0,1]$ and all sufficiently large $n$.
\end{assumption}

\begin{assumption}
    $g^*(\cdot)\in\mathcal H^{\alpha_g}([0,1])$ for some constant $\alpha_g>0$.
    \label{assumption_Holder_link}
\end{assumption}
% These are standard regularity conditions in functional data analysis \cite{cai2012minimax,nie2023estimating}. Assumption \ref{assumption_design_boundedness}, on the uniform boundedness of the predictor trajectories, controls the scale of the data and ensures the problem is well-posed. Assumptions \ref{assumption_smooth_coefficient} and \ref{assumption_Holder_link} impose smoothness on the functional coefficient and the nonparametric link, respectively, enabling regularization and governing convergence rates.
These are standard regularity conditions in functional data analysis \cite{cai2012minimax, nie2023estimating}. Assumption \ref{assumption_design_boundedness} ensures the well-posedness of the problem by uniformly bounding the predictor trajectories.
%Assumption \ref{assumption_smooth_coefficient} imposes smoothness on the functional coefficient $\beta^*$, which facilitates regularization and controls the convergence rate; it also introduces a minimum-signal strength condition over the active region of $\beta^*$, ensuring that the nonzero parts of the coefficient are detectable. 
% {Assumption~\ref{assumption_smooth_coefficient} imposes H\"older smoothness on the functional coefficient $\beta^*$, which ensures that $\beta^*$ admits accurate spline approximation and controls the convergence rate.  
% We also assume that the active region $\Omega^*$ is not overly complex, which excludes highly fragmented supports and aligns with typical domain-selection interpretations. 
% Finally, we require that the strong-signal region occupies essentially all of $\Omega^*$ up to a negligible set; this is analogous to a minimum-signal assumption in high-dimensional selection and guarantees that the nonzero part of $\beta^*$ is sufficiently detectable to translate basis-level selection into meaningful support recovery on the original domain. We further focus on the practically common proportional sparsity regime, where the strong-signal region retains a non-vanishing fraction of the active domain. With locally supported spline bases, this implies that the number of active groups scales proportionally with $J_n$, i.e., $s_n \asymp J_n$.
% }
% Assumption \ref{assumption_Holder_link} similarly enforces smoothness on the nonparametric link function, further supporting stable estimation.
Assumption~\ref{assumption_smooth_coefficient} imposes H\"older smoothness on $\beta^*$, guaranteeing its accurate spline approximation. We further assume the active region $\Omega^*$ is not overly complex, which excludes highly fragmented supports and aligns with typical domain-selection interpretations. Crucially, we require that the strong-signal region essentially covers $\Omega^*$ up to a negligible set; this acts as a minimum-signal condition in the functional setting, ensuring that the nonzero part of $\beta^*$ is detectable enough to translate basis selection into faithful support recovery on the domain. This minimum signal strength condition is standard in high dimensional variable and region selection literature. We further focus on the practically common proportional sparsity regime, where the strong-signal region retains a non-vanishing fraction of the active domain, which, under locally supported spline bases, implies that the number of active groups scales as $s_n \asymp J_n$. Assumption \ref{assumption_Holder_link} similarly enforces smoothness on the nonparametric link function, thereby supporting stable estimation.

 % {\color{red}As in the broader high-dimensional variable-selection literature, this type of minimum-signal assumption is mainly a theoretical regularity condition: it is typically stated at the level of an unspecified positive constant and is rarely directly verifiable in real data, but is needed to formalize exact recovery guarantees.}

All theoretical results are established for the DNN‑based model $Y \sim p_{\mu_{\boldsymbol{\theta}}}$ with $\mu_{\boldsymbol{\theta}}(X)=F_{\boldsymbol{\theta}}(\boldsymbol{\eta}(X))$. The true data are generated from the single‑index model $Y \sim p_{\mu^*}$ with $\mu^*(X)=g^*(\int X(t)\beta^*(t)dt)$. Theorems~\ref{theorem_1},~\ref{theorem:Posterior_consistency}, and~\ref{thm:selection_consistency} therefore analyze the approximation, posterior contraction, and selection consistency of the DNN estimator when the truth follows a single‑index structure, demonstrating that the fitted DNN procedure is statistically valid even though the model class is larger.
\begin{theorem}
\label{theorem_1}
  Suppose that Assumptions~\ref{assumption_design_boundedness}--\ref{assumption_Holder_link} hold. Then:
  \begin{enumerate}[label=\textup{(\roman*)},leftmargin=2em]
    \item 
    There exists a network parameter vector $\boldsymbol{\theta}$ such that:
    \[
      \begin{aligned}
         &F_{\boldsymbol{\theta}} = F \in \mathcal{NN}_{{J_n}}^{\mathrm{col}}(S^*_{J_n};{H_n},\bar L, E_n), 
         %\\         &\supp_{\mathrm{col}}(\boldsymbol{\theta}) = S^*_{J_n}.
      \end{aligned}
    \]

    \item 
    With $\alpha_1=\min(\alpha_g,1)$,
    \[
      \sup_{X\in\mathcal X}|\mu_{\boldsymbol{\theta}}(X)-\mu^*(X)|
      \lesssim
      H_n^{-2\alpha_1} + J_n^{-\alpha_\beta\alpha_1}.
    \]
  \end{enumerate}
\end{theorem}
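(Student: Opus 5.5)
The plan is to build the network explicitly as a composition of two pieces. The first layer computes the spline index $u(X)=\boldsymbol\eta(X)^\top\boldsymbol\omega^*_{J_n}$, using only columns in $S^*_{J_n}$. The remaining layers form a ReLU approximation of $g^*$ on $[0,1]$.

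\textbf{Error decomposition.} Writing $\mu_{\boldsymbol\theta}(X)=\tilde g(u(X))$, we split
\[
|\mu_{\boldsymbol\theta}(X)-\mu^*(X)|\le |\tilde g(u(X))-g^*(u(X))|+|g^*(u(X))-g^*(u^*(X))|,
\]
where $u^*(X)=\int X\beta^*$. This separates a network-approximation error from a spline-truncation error.

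\textbf{Spline-truncation term.} We use the spline approximation result of Lemma~\ref{prop:block1-main} in the Appendix, namely $\|\beta^*-\beta^*_{J_n}\|_\infty\le C_\beta J_n^{-\alpha_\beta}$. Combined with Assumption~\ref{assumption_design_boundedness}, it gives $|u(X)-u^*(X)|\le C_XC_\beta J_n^{-\alpha_\beta}$ uniformly over $\mathcal X$. Since $g^*\in\mathcal H^{\alpha_g}$, it is $\alpha_1$-Hölder with $\alpha_1=\min(\alpha_g,1)$ on a compact interval. The second term is therefore $\lesssim J_n^{-\alpha_\beta\alpha_1}$.

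There is a range issue: $u(X)$ may leave $[0,1]$ slightly. We handle it by extending $g^*$ to a Hölder function on a slightly larger interval. Alternatively, we add a clipping $\min(\max(u,0),1)$, which costs two ReLUs and one extra layer. Clipping is 1-Lipschitz, so the bound is preserved.

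\textbf{Column support.} The coefficient vector $\boldsymbol\omega^*_{J_n}$ is supported exactly on $S^*_{J_n}$, by definition. To make the first layer realize $u$, we set $\boldsymbol W_{1,*j}=\omega^*_{J_n,j}\boldsymbol e_1$ (or put $\pm$ copies across two neurons, to pass $u$ through the ReLU as $\sigma(u)-\sigma(-u)$) for $j\in S^*_{J_n}$, and zero columns otherwise. Then $\operatorname{supp}_{\mathrm{col}}(\boldsymbol\theta)=S^*_{J_n}$ holds exactly, which gives part (i). The remaining width is reserved for the approximator of $g^*$.

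\textbf{Network-approximation term.} Here we invoke a known one-dimensional ReLU approximation theorem. Fixed-width, depth-$H$ ReLU networks approximate an $\alpha$-Hölder function on $[0,1]$ to accuracy $H^{-2\alpha}$, as in the deep-narrow results of Lu--Shen--Yang--Zhang / Shen--Yang--Zhang for $d=1$ with constant width. These networks also have weights bounded by some $E_n$ growing at most polynomially. Applied to $g^*$ with exponent $\alpha_1$, this gives $\sup_{v\in[0,1]}|\tilde g(v)-g^*(v)|\lesssim H_n^{-2\alpha_1}$ with width $\bar L$ constant. Composing with the first layer costs only constant extra depth and width. Adding the two error terms yields (ii).

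\textbf{Main obstacle.} I expect the hardest step to be matching the stated rate $H_n^{-2\alpha_1}$ with \emph{constant} width and bounded parameters $E_n$. It needs the bit-extraction-type construction from the deep ReLU literature and a check that the weight magnitudes stay within $E_n$. If that construction's weights are too large, I would fall back to the Yarotsky-style piecewise-linear interpolation with rate $H_n^{-\alpha_1}$ and adjust $E_n$. The factor $2$ relies on the optimal-depth construction for $\alpha_1\le1$.
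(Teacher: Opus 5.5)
Your proposal is correct and follows essentially the same route as the paper: a single first-layer neuron computes $\boldsymbol\eta(X)^\top\boldsymbol\omega^*_{J_n}$, which makes the column support exactly $S^*_{J_n}$, and the remaining constant-width layers implement a one-dimensional deep ReLU approximant of $g^*$ with error $\lesssim H_n^{-2\alpha_1}$ (the paper uses Yarotsky's very-deep construction specialized to $d=1$, Lemma~\ref{lem:yarotsky-1d}, where you cite the Shen--Yang--Zhang results); the error is then split into this network term plus the truncation term controlled by Lemma~\ref{prop:block1-main}(iii). Your clipping of the index to $[0,1]$, the ReLU pass-through of the index, and your concern that the approximant's weights stay within $E_n$ are details the paper's proof leaves implicit, not a different argument.
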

Theorem \ref{theorem_1} establishes that the true sparse model structure is exactly representable within the proposed architecture (i) and provides a non-asymptotic error bound (ii). This bound guarantees that the approximation error decays to zero as the network depth $H_n$ and the number of spline bases $J_n$ grow, with the rates governed by the smoothness of both the functional coefficient $\beta(t)$ and the link function $g^*(\cdot)$. % characterizes the approximation capabilities of the designed sparse neural network (\ref{DNN}). It

% {\color{red}Define the approximation error level as
% \[
%   \tilde\xi_n:= \inf_{\substack{\boldsymbol{\theta}: \operatorname{supp}_{\mathrm{col}}(\boldsymbol{\theta}) = S^*_{J_n} \\ \|\boldsymbol{\theta}\|_\infty \le E_n}} 
%   \sup_{X \in \mathcal{X}} \bigl| \mu_{\boldsymbol{\theta}}(X) - \mu^*(X) \bigr|.
% \]
% }

\begin{assumption} For some constants $\tau'>0$ and ${\alpha_{\sigma}}>0$: 
$\lambda_n \lesssim \frac{1}{J_n\bigl[(n\bar L)^{H_n}(J_n+1)L_1\bigr]^{\tau'}}$,    
$\frac{E_n^2}{H_n(\log n+\log\bar L)} \lesssim \sigma_{1,n}^2 \lesssim n^{{\alpha_{\sigma}}}$, and $\frac{E_n^2}{H_n(\log n+\log\bar L)}\lesssim\sigma^{2}$.
\label{assumption_parameter}
\end{assumption}

\begin{assumption}\label{assumption_DNN}
The DNN $F_{\boldsymbol{\theta}}$ in (\ref{DNN}) satisfies: $\bar L\asymp L_1\asymp 1$, $H_n \;\asymp\; \min\bigl\{\,J_n^{\alpha_\beta/2}, s_n\bigr\}$ with $s_n\asymp J_n$, and $\|\boldsymbol{\theta}\|_\infty \le E_n$, where $E_n=n^{c_1}$ for some positive constant $c_1$.
\end{assumption}

Assumption \ref{assumption_parameter} specifies the rates for key hyperparameters in the Bayesian framework, ensuring the prior is sufficiently diffuse to permit effective posterior contraction toward the true parameter, a standard requirement in Bayesian theory  \citep{ghosal2000convergence}. Assumption \ref{assumption_DNN} links the network depth $H_n$ to the sparsity level $s_n$ so that architectural growth respects the intrinsic sparse structure. Similar conditions are commonly adopted in related theoretical analyses. %restricts the DNN architecture with bounded width and polynomial weights, while 

Let \(\Pi(A \mid D_n)\) denote the posterior probability of an event \(A\) given the observed data $D_n = \{(X_i, Y_i)\}_{i=1}^n$. Let $p_{\mu^*}(\cdot\mid X)$ be the true conditional density of $Y$ given $X$, $p_{\mu_{\boldsymbol{\theta}}}(\cdot\mid X)$ be the approximate density induced by the finite-dimensional representation (\ref{eq:finite_rep}) and the sparse DNN defined in (\ref{DNN}), and $d(\cdot,\cdot)$ denote the Hellinger distance.

\begin{theorem}
  \label{theorem:Posterior_consistency}
   Suppose that Assumptions~\ref{assumption_design_boundedness}--\ref{assumption_DNN} hold and there exists an error sequence $\varepsilon_n^2$ satisfying $ \varepsilon_n^2
  \;\lesssim\;
  \frac{s_n\log(J_n/s_n)}{n}
  +\frac{s_n\bigl(H_n\log n+\log J_n\bigr)}{n}
  +\Bigl(H_n^{-2\alpha_1}+J_n^{-\alpha_\beta\alpha_1}\Bigr)^2$ such that $\sigma_{0,n}^2 \le \tilde{M}_{n,1}(\varepsilon_n)$ and $\max\{\sigma^2,\sigma_{0,n}^2,\sigma_{1,n}^2\}\le \tilde{M}_{n,2}(\varepsilon_n)$, where $\tilde{M}_{n,1}(\varepsilon_n)$ and $\tilde{M}_{n,2}(\varepsilon_n)$ are defined in (\ref{M_1}) and (\ref{M_2}) of Appendix. Then, for some constant $c>0$, we have
  \[
  \begin{aligned}
      P\Bigl[
    &\Pi\bigl\{d(p_{\mu_{\boldsymbol{\theta}}},p_{\mu^*})>4\varepsilon_n \mid D_n\bigr\}
    \ge 2\exp(-c\,n\,\varepsilon_n^2)
    \Bigr]\\
   & \le 2\exp(-c\,n\,\varepsilon_n^2),
  \end{aligned}
  \]
  and
  \[
    \mathbb{E}\Bigl[\Pi\bigl\{d(p_{\mu_{\boldsymbol{\theta}}},p_{\mu^*})>4\varepsilon_n \mid D_n\bigr\}\Bigr]
    \le 4\exp(-2c\,n\,\varepsilon_n^2).
  \]
\end{theorem}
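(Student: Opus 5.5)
We follow the standard Ghosal--Ghosh--van der Vaart posterior contraction route, in the form adapted to sparse Bayesian DNNs by \citet{Sun2022Consistent}. It suffices to construct sieves $\mathcal{P}_n$ of densities $p_{\mu_{\boldsymbol\theta}}$ satisfying three conditions for a constant $c>0$.
\begin{itemize}
\item[(a)] \emph{Entropy:} $\log N(\varepsilon_n,\mathcal P_n,d)\le n\varepsilon_n^2$.
\item[(b)] \emph{Remaining mass:} $\Pi(\mathcal P_n^c)\le e^{-(c+4)n\varepsilon_n^2}$.
\item[(c)] \emph{Prior thickness:} $\Pi\{\boldsymbol\theta: KL(p_{\mu^*},p_{\mu_{\boldsymbol\theta}})\le\varepsilon_n^2,\ V(p_{\mu^*},p_{\mu_{\boldsymbol\theta}})\le\varepsilon_n^2\}\ge e^{-n\varepsilon_n^2}$.
\end{itemize}
Given (a)--(c), the Hellinger tests of Le Cam/Ghosal combined with the usual evidence lower bound give both displayed inequalities. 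The probability bound and the expectation bound come from the same exponential-testing argument: the test errors contribute $e^{-cn\varepsilon_n^2}$ terms, and the denominator lower bound fails with probability at most $e^{-cn\varepsilon_n^2}$. This yields the constants $2$ and $4$.

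\textbf{Step 1: prior thickness (c).} Take the network $\boldsymbol\theta^*$ from Theorem~\ref{theorem_1}(i). It has column support $S^*_{J_n}$, $\|\boldsymbol\theta^*\|_\infty\le E_n$, and sup-norm approximation error $\xi_n:=H_n^{-2\alpha_1}+J_n^{-\alpha_\beta\alpha_1}\lesssim\varepsilon_n$. With Gaussian noise, $KL$ and $V$ are bounded by constant multiples of $\sup_X|\mu_{\boldsymbol\theta}-\mu^*|^2$. It therefore suffices to lower bound the prior mass of a neighbourhood $\{\boldsymbol\theta:\sup_X|\mu_{\boldsymbol\theta}-\mu_{\boldsymbol\theta^*}|\le\varepsilon_n/2\}$.

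A Lipschitz-in-parameters bound for ReLU networks shows this neighbourhood contains the following box:
\begin{itemize}
\item all entries in active columns and in deeper layers lie within $\delta_n\asymp\varepsilon_n/[H_n(\bar L E_n)^{H_n}(C_X J_n)]$ of $\boldsymbol\theta^*$;
\item inactive columns $j\notin S^*_{J_n}$ satisfy $\|\boldsymbol W_{1,*j}\|_\infty\le\delta_n/J_n$.
\end{itemize}
The spline features are bounded by Assumption~\ref{assumption_design_boundedness}, which is what makes this Lipschitz bound usable.

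We then bound the prior mass of the box by the event $\gamma=e^*$ times three factors.
\begin{itemize}
\item The event $\gamma=e^*$ has probability $\lambda_n^{s_n}(1-\lambda_n)^{J_n-s_n}$.
\item Slab and $\mathcal N(0,\sigma^2)$ coordinates each contribute roughly $\delta_n e^{-E_n^2/\sigma^2}/\sigma$. The lower bounds on $\sigma_{1,n}^2,\sigma^2$ in Assumption~\ref{assumption_parameter} keep $E_n^2/\sigma^2\lesssim H_n\log n$.
\item Spike coordinates contribute mass close to one, provided $\sigma_{0,n}^2\le\tilde M_{n,1}(\varepsilon_n)$, i.e.\ $\sigma_{0,n}\ll\delta_n/(J_n\sqrt{\log J_n})$.
\end{itemize}
The number of effectively free parameters is $O(s_nL_1+H_n\bar L^2)$, and by Assumption~\ref{assumption_DNN} this is $\lesssim s_n$. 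The negative log of the total mass is then $\lesssim s_n\log(1/\lambda_n)+s_n(H_n\log n+\log J_n)$. By Assumption~\ref{assumption_parameter} this matches $n\varepsilon_n^2$ up to constants.

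\textbf{Step 2: sieve and entropy (a), (b).} Define $\mathcal P_n$ as the set of $\boldsymbol\theta$ satisfying:
\begin{itemize}
\item at most $k_n\asymp s_n$ first-layer columns have $\|\boldsymbol W_{1,*j}\|_\infty>\delta_n'$, with $\delta_n'$ a threshold of order $\delta_n/J_n$;
\item all entries are bounded by $E_n$.
\end{itemize}
For the entropy bound (a), inactive columns perturb the output by at most $\varepsilon_n$. The active part is covered by choosing the support, giving $\log\binom{J_n}{k_n}\lesssim s_n\log(J_n/s_n)$, times an $\ell_\infty$-grid on $O(k_n+H_n)$ parameters at resolution $\delta_n$. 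This gives $\log N\lesssim s_n\log(J_n/s_n)+s_n(H_n\log n+\log J_n)\lesssim n\varepsilon_n^2$.

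For the remaining mass (b), there are two contributions.
\begin{itemize}
\item Large entries: Gaussian tails with $\max\{\sigma^2,\sigma_{1,n}^2,\sigma_{0,n}^2\}\le\tilde M_{n,2}(\varepsilon_n)$ make $\Pi(\|\boldsymbol\theta\|_\infty>E_n)$ exponentially small.
\item Too many large columns: each column exceeds $\delta_n'$ with probability at most $\lambda_n+P(\text{spike}>\delta_n')\lesssim\lambda_n$. A binomial tail gives $P(\mathrm{Bin}(J_n,\lambda_n)>k_n)\le(J_n\lambda_n)^{k_n}$. The polynomial smallness of $\lambda_n$ in Assumption~\ref{assumption_parameter}, with exponent $\tau'$, makes this $\le e^{-(c+4)n\varepsilon_n^2}$.
\end{itemize}

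\textbf{Main obstacle.} The delicate step is the prior-thickness bound. The Lipschitz constant of a depth-$H_n$ network grows like $(\bar L E_n)^{H_n}$, so $\log(1/\delta_n)\asymp H_n\log n$. This must be absorbed by the $s_nH_n\log n$ term while the spike width $\sigma_{0,n}$ is simultaneously kept small enough. If the crude Lipschitz bound loses too much, we will first try bounding layerwise perturbations directly around the specific $\boldsymbol\theta^*$ from Theorem~\ref{theorem_1}, whose bounded width lets the growth factor be controlled by $\|\boldsymbol\theta^*\|_\infty$.
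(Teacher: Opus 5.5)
Your skeleton matches the paper's: the three sieve conditions; a sieve capping the number of first-layer columns above a threshold $\delta_n'$; entropy by zeroing sub-threshold columns and counting supports times an $\ell_\infty$-grid; remaining mass by Gaussian and binomial tails; and prior thickness via the box around the Theorem~\ref{theorem_1} network on $\{\boldsymbol\gamma=\boldsymbol\gamma^*\}$. Your elementary bound $\binom{J_n}{k_n}p_n^{k_n}$ is a fine substitute for the Zubkov--Serov step, provided you use $p_n\le\lambda_n+p_{0,n}$ rather than $p_n\lesssim\lambda_n$. Two steps, however, fail as written.

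\textbf{Sieve radius.} You bound all entries by $E_n$ and assert that $\max\{\sigma^2,\sigma_{0,n}^2,\sigma_{1,n}^2\}\le\tilde M_{n,2}(\varepsilon_n)$ makes $\Pi(\|\boldsymbol\theta\|_\infty>E_n)$ exponentially small. But \eqref{M_2} is calibrated to a separate radius $M_n\ge E_n$, and at radius $E_n$ the claim is false. In Step 1 you invoke the lower bounds of Assumption~\ref{assumption_parameter} to get $E_n^2/\sigma^2\lesssim H_n\log n$. So a single deep coordinate already has $\Pi(|\theta_j|>E_n)\ge\exp(-CH_n\log n)$. Yet (b) demands $\exp\{-(c+4)n\varepsilon_n^2\}$, and your own entropy count forces $n\varepsilon_n^2\gtrsim s_nH_n\log n$ with $s_n\asymp J_n\to\infty$. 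The prior variance must be large enough for thickness at scale $E_n$ yet small enough for negligible tails beyond the sieve radius. Both hold only if the sieve radius is $M_n\gg E_n$, roughly $M_n^2\gtrsim E_n^2\,n\varepsilon_n^2/(H_n\log n)$, still with $\log M_n=O(\log n)$. That is the paper's choice, and it propagates. The truncation threshold $\delta_n'$ and the grid mesh in the entropy step must be built with the Lipschitz factor $(c_0M_n)^{H_n-1}$; the logarithmic cost stays $O(H_n\log n)$. The thickness box, by contrast, keeps $E_n$. This is why \eqref{M_1} involves two different thresholds: $\delta_n'$ from \eqref{delta} and $\omega_n'$ from \eqref{omega}.

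\textbf{Exponential control of the evidence.} Your condition (c) uses a KL/second-moment neighbourhood. From it, the standard evidence lower bound (Chebyshev on the averaged log-likelihood ratio, Lemma 8.1 of Ghosal, Ghosh and van der Vaart) fails with probability only $O\{1/(n\varepsilon_n^2)\}$, not $e^{-cn\varepsilon_n^2}$. Consequently neither the displayed probability bound nor the expectation bound $4e^{-2cn\varepsilon_n^2}$ follows, and your sentence asserting an exponential failure probability is unsupported. The paper instead uses Lemma~\ref{lem:general-ggv} (Jiang's version), whose thickness condition is phrased through $d_t(p,p^*)=t^{-1}\{\int p^*(p^*/p)^t-1\}$. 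Apply Markov's inequality to the $(-t)$-th power of the prior-normalized evidence over the neighbourhood, then Jensen. This bounds the probability that this evidence falls below $e^{-xn\varepsilon_n^2}$ by $e^{-txn\varepsilon_n^2}(1+tb'\varepsilon_n^2)^n\le e^{-t(x-b')n\varepsilon_n^2}$, which is exponentially small. The repair is cheap here. In the Gaussian model, $d_1(p_{\mu_{\boldsymbol\theta}},p_{\mu^*})=\mathbb{E}_X[\exp\{(\mu_{\boldsymbol\theta}(X)-\mu^*(X))^2/\sigma_\varepsilon^2\}]-1\lesssim\sup_X|\mu_{\boldsymbol\theta}-\mu^*|^2$ once the sup-distance is small. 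So the sup-norm box from your Step 1 verifies the $d_1$ condition with $t=1$, exactly as in the paper.
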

Theorem \ref{theorem:Posterior_consistency} establishes the posterior contraction rate $\varepsilon_n$ of sBayFDNN. This result implies that, with high probability, the posterior distribution concentrates around the true data‑generating process at rate $\varepsilon_n$. Specifically, if $\alpha_\beta\le 2$, then $H_n\asymp J_n^{\alpha_\beta/2}$, setting $J_n \asymp \big(\frac{n}{\log n}\big)^{\frac{1}{1+\alpha_\beta/2+2\alpha_\beta\alpha_1}}$ yields $\varepsilon_n^2 \asymp \big(\frac{\log n}{n}\big)^{\frac{2\alpha_\beta\alpha_1}{1+\alpha_\beta/2+2\alpha_\beta\alpha_1}}$.
If $\alpha_\beta>2$, then $H_n\asymp s_n$, taking $J_n \asymp \big(\frac{n}{\log n}\big)^{\frac{1}{2+4\alpha_1}}$ leads to $\varepsilon_n^2 \asymp \big(\frac{\log n}{n}\big)^{\frac{2\alpha_1}{1+2\alpha_1}}$.

% We define $q_j := \Pi(\gamma_j = 1 \mid D_n) = \mathbb{E}_{\Pi} [\gamma_j \mid D_n]$, and let $e_j^* := \mathbb{I}(j \in S^*_{J_n})$ be the true selection indicator. For any threshold $\widehat q \in (0,1)$, the estimated support is $\widehat S_{\widehat q} := \{j : q_j > \widehat q\}$. 
% Let $\Pi(\cdot \mid D_n)$ denote the joint posterior distribution given the data $D_n = \{(X_i, Y_i)\}_{i=1}^n$.
%For any two sets $A, B$, we denote their symmetric difference by $A \Delta B := (A \setminus B) \cup (B \setminus A)$. 
Further, define the structural difference as
\[
  \rho_n(\varepsilon_n) := \max_{1 \le j \le J_n} \mathbb{E}\Bigl[ |\gamma_j - e_j^*| \cdot \mathbb{I} \{ \boldsymbol{\theta} \notin A_n(\varepsilon_n) \} \Bigm| D_n \Bigr],
\]
where $\mathbb{E}(\cdot|D_n)$ is the conditional expectation, and $A_n(\varepsilon_n)=\Bigl\{\boldsymbol{\theta}:\ d\bigl(p_{\mu_{\boldsymbol{\theta}}},p_{\mu^*}\bigr)\ge \varepsilon_n\Bigr\}$.

\begin{assumption}
$\rho_n(4\varepsilon_n)\to0$ as $n\to \infty$ and $\varepsilon_n\to 0$.
\label{assumption_identifiability}
\end{assumption}

Assumption \ref{assumption_identifiability} serves as an identifiability condition. It implies that as $n\to\infty$ and $\varepsilon_n\to 0$, any candidate model that is close to the true data-generating process in terms of Hellinger distance must asymptotically share the same underlying structure, thereby guaranteeing the consistent selection \cite{Sun2022Consistent}. This assumption should be understood as excluding correlation-equivalent sparse solutions that are nearly indistinguishable in predictive terms. Its role is not to assert that such ambiguity never arises in functional data, but rather to isolate a regime in which exact structural recovery is theoretically meaningful. When neighboring components of the functional predictor are highly correlated, the assumption becomes harder to verify, and the support recovered by the method may be better interpreted as a predictive sparse representative rather than a uniquely identifiable ground-truth structure.

\begin{theorem}
  \label{thm:selection_consistency} 
  Suppose Assumptions~\ref{assumption_design_boundedness}--\ref{assumption_identifiability} hold.
  Then:
  \begin{enumerate}[label=\textup{(\roman*)},leftmargin=2.2em]
    \item $\max_{1\le j\le J_n} \bigl| \hat{q}_j - e_j^* \bigr| \xrightarrow{P} 0$. 
    \item $P \bigl( S^*_{J_n} \subset \widehat S_{\tau} \bigr) \to 1$ for any prespecified $\tau \in (0,1)$.
    \item $P\bigl( \widehat S_{1/2} = S^*_{J_n} \bigr) \to 1$.
    \item $|\widehat{\Omega}\Delta\Omega^*|\xrightarrow{P} 0$.
  \end{enumerate}
\end{theorem}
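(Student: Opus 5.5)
The plan follows the route of \citet{Sun2022Consistent}: turn posterior contraction (Theorem~\ref{theorem:Posterior_consistency}) together with the identifiability condition (Assumption~\ref{assumption_identifiability}) into uniform convergence of the inclusion probabilities. Parts (ii)--(iv) then follow from (i) by thresholding arguments and the local support of B-splines.

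For (i), I would first work with the full posterior inclusion probability $q_j=\Pi(\gamma_j=1\mid D_n)=\mathbb{E}(\gamma_j\mid D_n)$. For every $j$,
\[
|q_j-e_j^*|\le \mathbb{E}\bigl[|\gamma_j-e_j^*|\,\mathbb{I}\{\boldsymbol\theta\notin A_n(4\varepsilon_n)\}\mid D_n\bigr]+\Pi\bigl(A_n(4\varepsilon_n)\mid D_n\bigr).
\]
Taking the maximum over $j$, the first term is at most $\rho_n(4\varepsilon_n)\to 0$ by Assumption~\ref{assumption_identifiability}. The second term is at most $2\exp(-cn\varepsilon_n^2)$ with probability at least $1-2\exp(-cn\varepsilon_n^2)$ by Theorem~\ref{theorem:Posterior_consistency}, and this vanishes because $n\varepsilon_n^2\gtrsim s_n\log n\to\infty$. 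Hence $\max_j|q_j-e_j^*|\xrightarrow{P}0$.

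The remaining step for (i) is to pass from $q_j$ to the MAP plug-in $\hat q_j=\Pr(\gamma_j=1\mid\hat{\boldsymbol W}_{1,*j})$. Since $\hat q_j$ is a monotone logistic function of $\|\hat{\boldsymbol W}_{1,*j}\|_2^2$, it suffices to show two things with probability tending to one:
\begin{itemize}
\item for $j\notin S^*_{J_n}$, $\|\hat{\boldsymbol W}_{1,*j}\|_2^2$ lies below the crossover point $\delta_n^2$, where $\hat q_j=1/2$;
\item for $j\in S^*_{J_n}$, $\|\hat{\boldsymbol W}_{1,*j}\|_2^2$ lies well above $\delta_n^2$.
\end{itemize}
Here $\delta_n^2\asymp\sigma_{0,n}^2\log(A_{0,n}/A_{1,n})$ up to constants. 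The margin in both conditions should grow so that $\hat q_j$ is pushed to $0$ or $1$ uniformly in $j$.

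To get these two conditions I would argue that the MAP lies in the high-posterior region. Posterior concentration, together with the prior mass/entropy bounds from the proof of Theorem~\ref{theorem:Posterior_consistency}, implies that the posterior mode satisfies $d(p_{\mu_{\hat{\boldsymbol\theta}}},p_{\mu^*})\lesssim\varepsilon_n$. On that set, identifiability forces the conditional law of $\gamma_j$ to match $e_j^*$, and hence forces the first-layer column norms to separate at the spike/slab crossover. This can be done via the conditional representation $\mathbb{E}[\gamma_j\mid D_n]=\mathbb{E}[\Pr(\gamma_j=1\mid \boldsymbol W_{1,*j})\mid D_n]$ and a concentration argument for $\boldsymbol W_{1,*j}$ around the mode. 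This transfer from posterior averages to the plug-in mode is the main obstacle. I would first try a Laplace-type, local-mode argument restricted to the event where the posterior is concentrated, following the MAP-consistency argument in \citet{Sun2022Consistent}. If that fails, I would state the result under the (implicit) requirement that $\hat{\boldsymbol\theta}$ falls in the contracting neighborhood.

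Given (i), parts (ii) and (iii) are direct. For (ii), on the event $\max_j|\hat q_j-e_j^*|<\min(\tau,1-\tau)$, every $j\in S^*_{J_n}$ has $\hat q_j>1-\min(\tau,1-\tau)\ge\tau$. For (iii), take $\tau=1/2$: on $\max_j|\hat q_j-e_j^*|<1/2$ we get both $\hat q_j>1/2$ for all $j\in S^*_{J_n}$ and $\hat q_j<1/2$ for all $j\notin S^*_{J_n}$, so $\widehat S_{1/2}=S^*_{J_n}$.

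For (iv), on the event $\widehat S_{1/2}=S^*_{J_n}$ we have $\widehat\Omega=\bigcup_{j\in S^*_{J_n}}\supp(B_j)$, and it remains to compare this deterministic set with $\Omega^*$. I would use the spline approximation lemma (Lemma~\ref{prop:block1-main}, $\|\beta^*-\beta^*_{J_n}\|_\infty\le C_\beta J_n^{-\alpha_\beta}$) together with $c_\kappa>4$.
\begin{itemize}
\item Covering: on $\Omega^*(\kappa_{J_n})$ the truncated spline is nonzero, so some active basis covers each point, which gives $\Omega^*(\kappa_{J_n})\subset\widehat\Omega$.
\item Leakage: each active basis $B_j$ has support overlapping $\Omega^*$ and of length $O(J_n^{-1})$. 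Since $\Omega^*$ is a finite union of intervals, $\widehat\Omega\setminus\Omega^*$ is contained in $O(J_n^{-1})$-neighbourhoods of finitely many endpoints, so its measure is $O(J_n^{-1})$.
\end{itemize}
Combining the two,
\[
|\widehat\Omega\Delta\Omega^*|\le|\Omega^*\setminus\Omega^*(\kappa_{J_n})|+O(J_n^{-1})\to0
\]
by Assumption~\ref{assumption_smooth_coefficient}. Together with (iii), this gives convergence in probability.
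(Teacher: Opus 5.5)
Your proposal follows the paper's proof essentially step by step. It uses the same split of $|q_j-e_j^*|$ over $A_n(4\varepsilon_n)$, controlled by $\rho_n(4\varepsilon_n)$ and Theorem~\ref{theorem:Posterior_consistency}, and the same appeal to Sun et al.\ (2022, Theorem~2.3) to pass from $q_j$ to the MAP plug-in $\hat q_j$. It then uses thresholding for (ii)--(iii) and the strong-signal sandwich of Lemma~\ref{prop:block1-main} for (iv). The paper also makes the $q_j\to\hat q_j$ transfer only by citation. Your fallback of merely assuming $\hat{\boldsymbol\theta}$ lies in the contraction neighbourhood would not be enough, because $\rho_n$ controls posterior averages, not the column norms $\|\hat{\boldsymbol W}_{1,*j}\|_2$ at a single point.

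In (iv), the lemma gives the constant $C_\beta J_n^{-\alpha_\beta}$ only on $\Omega^*(2\kappa_{J_n})$, so your covering step should use $2\kappa_{J_n}$. Your leakage step relies on the locality in Lemma~\ref{prop:block1-main}(ii). With that in place, bounding the boundary layer around the fixed finite union $\Omega^*$, as you do, is slightly cleaner than the paper's layer around $\Omega^*(\kappa_{J_n})$, whose number of components is not controlled uniformly in $n$.
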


Theorem \ref{thm:selection_consistency} establishes the asymptotic consistency of spline feature selection based on MAP plug-in posterior probabilities, showing that the estimated selection probabilities converge to the true binary indicators. Furthermore, it guarantees the asymptotically exact recovery of the nonzero region in the continuous function domain from its discrete coefficient support, thereby achieving exact structural selection for the functional effect.

\section{Experiments}

We evaluate sBayFDNN on synthetic and real-world functional data against five competitors: (1) FNN, a spline-feature-based feedforward network using truncated basis expansion \citep{thind2023deep}; (2) AdaFNN, which learns adaptive basis functions via auxiliary networks \citep{yao2021deep}; (3) cFuSIM, a functional single-index method with localized regularization \citep{nie2023estimating}; (4) BFRS, a Bayesian functional region selector with neighborhood structure \citep{zhu2025bayesian}; and (5) SLoS, a functional linear estimator with local sparsity via an fSCAD penalty \citep{lin2017locally}. Predictive RMSE is reported for all methods; region-recovery metrics (Recall, Precision, F1) are provided only for sBayFDNN, cFuSIM, BFRS, and SLoS, as FNN and AdaFNN do not perform region selection.

\subsection{Simulation Studies}

% We design our simulation studies along three key dimensions: (1) the true coefficient function $\beta^*(t)$, which varies from a single interior bump (Simple) to a boundary bump (Medium) and then to a pair of narrow oscillating regions (Complex); (2) the link function $g^*$, which ranges from linear to logistic, sinusoidal, and a composite nonlinear form; and (3) the signal-to-noise ratio (SNR) of the response, set to 5 or 10. Functional covariates are generated from a truncated cosine basis, observed on a discrete grid with measurement noise. Detailed data-generating mechanisms are provided in the Appendix.

Our simulation studies vary three key aspects: (1) the true coefficient function $\beta^*(t)$, which spans a single interior bump (Simple), a boundary bump (Medium), and a pair of narrow oscillating peaks (Complex); (2) the link function $g^*$, taken as linear, logistic, sinusoidal, or a composite nonlinear form; and (3) the response signal‑to‑noise ratio (SNR), set to 5 or 10. Functional covariates are generated from a truncated cosine basis, observed on a discrete grid with added measurement noise (see Appendix for full details).

\begin{figure*}[t]
  \centering
  \includegraphics[width=\textwidth]{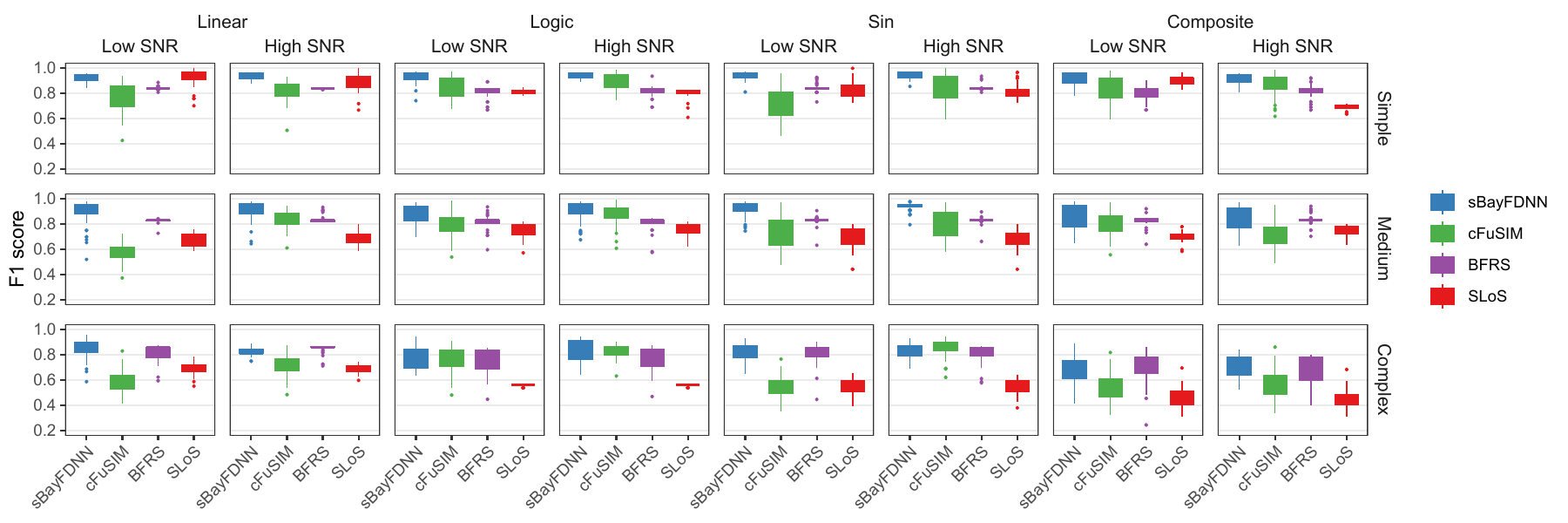}
  \caption{F1 scores across $g$ functions, SNR settings, and $\beta(t)$ scenarios.}
  \label{fig:f1-total}
\end{figure*}

Figure~\ref{fig:f1-total} summarizes region-recovery performance (F1 score) across all simulation scenarios (see Appendix for Recall and Precision). sBayFDNN consistently achieves high mean F1 with tight interquartile ranges, demonstrating robust and stable region identification across both linear and nonlinear regimes. Its advantage is more pronounced in more challenging scenarios with lower signal-to-noise ratios, as well as under harder localization regimes and complex nonlinear link functions $g^*$. By contrast, several baselines exhibit increased dispersion and more frequent low-F1 outcomes as moves away from central support or $g^*$ departs from linearity.

% Although performance modestly declines as the localization task becomes more challenging—moving from a single interior bump to a boundary-adjacent or two separated oscillating bumps, sBayFDNN remains competitive with or superior to other region-selection methods (cFuSIM, BFRS, SLoS) in most settings.

The distributions also suggest different selection behaviors across methods. For example, cFuSIM often returns broader active regions (capturing truly active intervals but at the cost of more false positives), whereas linear region-selection baselines deteriorate under strong nonlinearity due to model misspecification. In contrast, sBayFDNN maintains a better balance between recall and precision. Beyond aggregate metrics, we further evaluate uncertainty quantification. Taking the high-SNR, Medium-$\beta$, logistic-link scenario as an example, Figure~\ref{fig:selection_uncertainty} presents the estimated PIPs from sBayFDNN together with the normalized true coefficient function. The estimated PIPs vary with the signal strength of the true $\beta(t)$: regions with larger signal magnitude are assigned higher inclusion probabilities, while weaker or boundary regions receive more moderate probabilities. This suggests that the PIPs capture not only the location of the active regions but also the relative strength and detectability of the underlying signal. %, yielding robust F1 scores without scenario-specific tuning

\begin{figure}[htbp]
    \centering
     \includegraphics[width=0.8\linewidth]{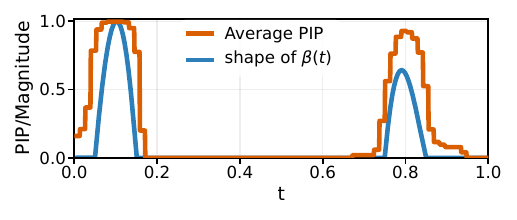} 
    \caption{PIPs from sBayFDNN in the high-SNR, Medium-$\beta$, logistic-link scenario. 
    %The orange curve shows the average plug-in PIP function $p(t)$ across replications, and the blue curve shows the normalized magnitude of the true coefficient function, $|\beta(t)|/\max_{u\in[0,1]}|\beta(u)|$, over the same domain.
    }
    \label{fig:selection_uncertainty}
\end{figure}

% Overall, the results are consistent with the intended design of sBayFDNN: structured sparsity in the first layer supports localized region identification while retaining downstream flexibility for nonlinear response mechanisms.
% In contrast, BayFDNN maintains a better balance between recall and precision across regimes, yielding robust F1 without requiring scenario-specific tuning.

\begin{figure*}[htbp]
  \centering
  \includegraphics[width=\textwidth]{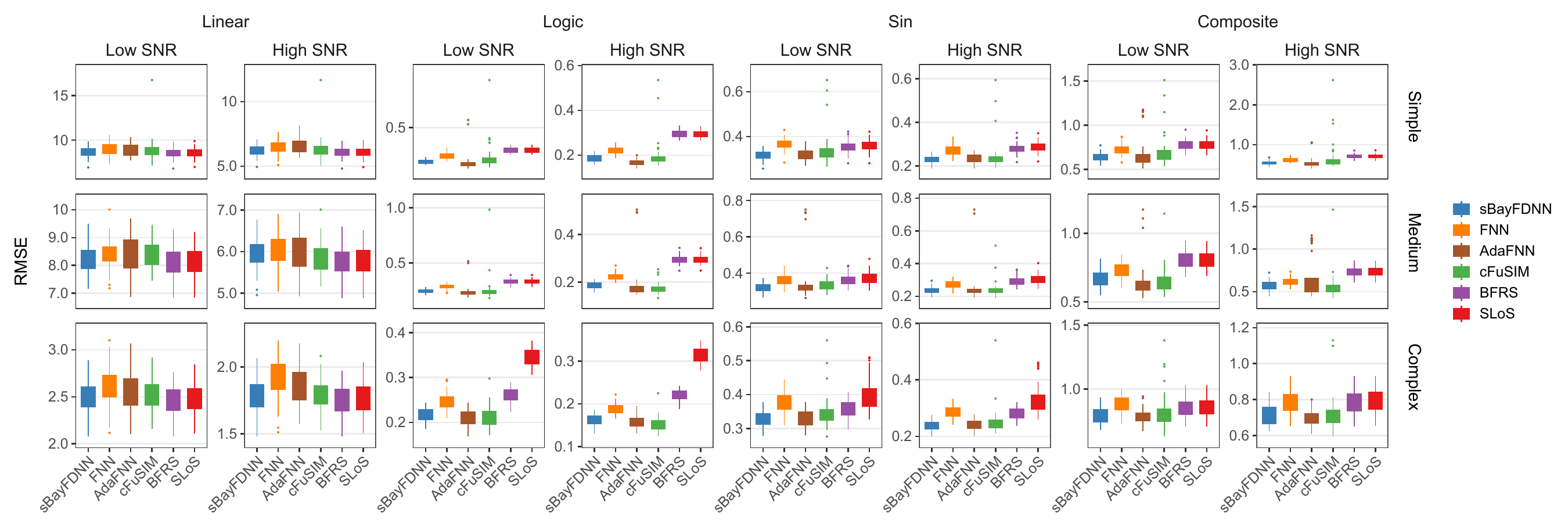}
  \caption{RMSE across $g$ functions, SNR settings, and $\beta(t)$ scenarios.}
  \label{fig:rmse-total}
\end{figure*}

% Figure~\ref{fig:rmse-total} summarizes predictive accuracy. sBayFDNN achieves the lowest or near-lowest mean RMSE in most scenarios, and its RMSE distributions are typically concentrated, suggesting that region interpretability is not obtained at the expense of predictive performance. The advantage is most pronounced under nonlinear links, where accurate approximation of $g^*(\cdot)$ requires flexible function classes. sBayFDNN achieves substantially higher predictive accuracy than the spline-based FNN while matching the performance of the adaptive-basis AdaFNN, yet offers greater stability than the latter. Against the functional single-index method cFuSIM, sBayFDNN attains comparable or better accuracy, particularly under pronounced nonlinearity where its deep architecture proves advantageous. Linear benchmarks (BFRS, SLoS) remain competitive only when the link function is nearly linear, while their performance deteriorates markedly with increasing nonlinearity. % by avoiding the challenging joint optimization of basis functions and regression mapping
Figure~\ref{fig:rmse-total} shows that sBayFDNN achieves the lowest or near-lowest mean RMSE in most scenarios, with concentrated distributions, indicating that region interpretability does not come at the cost of predictive performance. Its advantage is most evident under nonlinear links, where flexible function approximation is crucial. Here, sBayFDNN substantially outperforms the spline-based FNN, matches the accuracy of AdaFNN with greater stability, and surpasses cFuSIM when nonlinearity is pronounced. Linear methods (BFRS, SLoS) remain competitive only under nearly linear settings, with performance degrading sharply as nonlinearity increases.

To further assess robustness beyond the main settings, we additionally examine low-SNR regimes, FGAM mean misspecification, and non-Gaussian heavy-tailed errors; these results are reported in Appendix~\ref{app:low-snr} and Appendix~\ref{app:stress-tests}. Together, these results affirm the sBayFDNN framework: structured sparsity in the first layer supports interpretable region identification with inherent uncertainty quantification, while subsequent deep layers furnish the flexibility needed for accurate nonlinear prediction.

% Compared with neural baselines, BayFDNN matches or improves upon FNN and AdaFNN in both central tendency and stability. FNN provides a strong predictive benchmark by applying a feedforward network to spline features; BayFDNN attains comparable predictive accuracy while additionally producing an interpretable active-region estimate. AdaFNN can perform well when a small learned basis captures the dominant functional variation, but its variability increases in some regimes due to the additional difficulty of jointly learning basis functions and the regression map. Linear baselines (BFRS and SLoS) remain competitive when the response mechanism is close to linear, but are more frequently outperformed under stronger nonlinearities, consistent with their more restrictive modeling assumptions.

% Relative to cFuSIM, BayFDNN often attains comparable or better RMSE, especially in more nonlinear regimes where the deep network approximation is advantageous.
% Linear baselines (BFRS and SLoS) are competitive when the response mechanism is close to linear, but can be outperformed under strongly nonlinear links, consistent with their more restrictive modeling assumptions.
% Taken together, the results support the core motivation of BayFDNN: a structured sparse first layer enables region interpretability, while the remaining network layers preserve sufficient representational capacity for accurate nonlinear prediction.

\subsection{Real Data Analysis}

We evaluate our method on four benchmark datasets: ECG, Tecator, Bike rental, and IHPC, using their official train/validation/test splits throughout. In the ECG task, Lead‑II signals are used as functional inputs to predict the QRS duration—a clinically informative measure of ventricular depolarization relevant for detecting conduction abnormalities \citep{HUMMEL2009553,kashani2005significance}. For the Tecator dataset, near‑infrared absorbance spectra of meat samples serve as functional inputs to predict water content \citep{Tecator}. In the Bike rental forecasting task \citep{bike_sharing_275}, the daily rental‑demand profile (a 24‑point curve) is used to predict total rentals over the following 7 days. For the IHPC dataset \citep{individual_household_electric_power_consumption_235}, daily minute‑averaged active‑power trajectories are taken as functional inputs to predict the next‑day total energy usage. We report standard predictive metrics (RMSE/MAE) for all datasets. For ECG and Tecator, silver‑standard region annotations are available, allowing us to also evaluate region‑identification metrics. Detailed data information are provided in Appendix.

%Together, these tasks demonstrate the applicability of our method across distinct domains: physiological signal analysis and spectral regression. 
%the Individual Household Electric Power Consumption (
% To assess region identification, we define silver-standard intervals on the original domains and then map them to the normalized domain $\mathcal T=[0,1]$ induced by our fixed-window (ECG) and wavelength normalization (Tecator).
% For ECG, we use a $120$\,ms window \citep{yu2003high,HUMMEL2009553} centered at the R peak, i.e., $[-0.06,0.06]$ seconds relative to the R peak, as a silver-standard proxy for the QRS complex extent. 
% For Tecator, we use the water-related absorption band around 970–980 nm\citep{VANKOLLENBURG2021121865} and define the silver-standard wavelength interval as $[965,985]$\,nm. 
% Table~\ref{tab:emp_overview_two_datasets_rmse_f1_recall} summarizes prediction and region identification results.
\begin{table*}[htbp]
\centering
\caption{Performance on ECG and Tecator datasets.
F1, Recall, and Precision are reported only for methods that output an estimated active region; otherwise shown as ``--''. Best results within each dataset/metric are in bold.}
\label{tab:emp_overview_two_datasets_rmse_f1_recall}
\setlength{\tabcolsep}{7pt}
\renewcommand{\arraystretch}{0.8}
\begin{tabular}{lcccccccccc}
\toprule
& \multicolumn{5}{c}{ECG} & \multicolumn{5}{c}{Tecator} \\
\cmidrule(lr){2-6}\cmidrule(lr){7-10}
Method
& RMSE & MAE & F1 & Recall & Precision 
& RMSE & MAE & F1 & Recall & Precision \\
\midrule
sBayFDNN      & \textbf{12.069} & \textbf{8.711} & \textbf{0.634} & \textbf{1.000} & \textbf{0.464} &\textbf{2.138} & \textbf{1.594} & \textbf{0.339}   & \textbf{1.000}  &\textbf{0.204}  \\
FNN       & 12.991 & 9.239 & --    & -- &--   & 2.217  & 1.613 & --    & -- &--   \\
AdaFNN    & 14.083 & 10.198 & --    & --  &--   &3.027  & 2.299  & --    & -- &--    \\
cFuSIM    & 17.677 & 12.861 & 0.501 & 1.000 &0.334 & 3.932 & 3.283 & 0.137 & 0.977 & 0.074  \\
BFRS      & 16.297 & 11.805 & 0.396 & 0.784 &0.265& 2.691 & 2.250 & 0.211 & 0.714 & 0.124\\
SLoS      & 16.258 & 11.782 & 0.412 & 0.815 &0.276 & 2.567 & 2.068 & 0.228 & 0.854 & 0.132\\
\bottomrule
\end{tabular}
\end{table*}

\begin{table}[!htbp]
\centering
\caption{Performance on Bike rental and IHPC datasets.}
\label{tab:emp_overview_bike_ihpc_rmse_mae}
\setlength{\tabcolsep}{7pt}
\renewcommand{\arraystretch}{0.8}
\begin{tabular}{lcccc}
\toprule
& \multicolumn{2}{c}{Bike} & \multicolumn{2}{c}{IHPC} \\
\cmidrule(lr){2-3}\cmidrule(lr){4-5}
Method
& RMSE & MAE
& RMSE & MAE \\
\midrule
sBayFDNN  & \textbf{0.618} & \textbf{0.497} & \textbf{0.536} & \textbf{0.409} \\
FNN       & 0.699 & 0.535 & 0.549 & 0.409 \\
AdaFNN    & 0.720 & 0.577 & 0.552 & 0.420 \\
cFuSIM    & 0.693 & 0.539 & 0.548 & 0.411 \\
BFRS      & 0.749 & 0.550 & 0.552 & 0.416 \\
SLoS      & 0.684 & 0.506 & 0.549 & 0.411 \\
\bottomrule
\end{tabular}
\end{table}
As shown in Table~\ref{tab:emp_overview_two_datasets_rmse_f1_recall} and \ref{tab:emp_overview_bike_ihpc_rmse_mae}, sBayFDNN delivers the strongest overall performance across all datasets. While all methods attain relatively low precision in region selection, sBayFDNN achieves substantially higher recall and F1. Figure~\ref{fig:realdata_selection_uncertainty} plots the estimated PIPs on the original domains with sBayFDNN for ECG and Tecator. For ECG, sBayFDNN assigns higher inclusion strength within and near the clinically motivated QRS interval (shaded), with PIP values approaching 1 close to the boundaries. This is consistent with the fact that QRS duration is determined by onset/offset timing, making the endpoint morphology the most informative. Although the selected region is somewhat wider than the predefined silver interval, we do not interpret this as a localization failure. The QRS silver interval is a simplified fixed proxy, whereas the predictive morphology for QRS duration may extend into adjacent transition regions—particularly around onset/offset boundaries—which also carry clinically relevant information. For Tecator, the PIP increases within the predefined water band (965--985\,nm; shaded), while additional elevated PIP regions appear at earlier wavelengths (including $\sim$930\,nm, often regarded as lipid/fat-associated in short-wave NIR), plausibly reflecting strong collinearity/compositional coupling between water and fat and other broad predictive structure. These results demonstrate that sBayFDNN can recover physically interpretable regions without sacrificing predictive accuracy.

% For ECG, sBayFDNN assigns higher inclusion strength within and near the clinically motivated QRS interval (shaded), with peaks close to the boundaries, consistent with the fact that duration is determined by onset/offset timing and thus endpoint morphology is most informative. For Tecator, the PIP increases within the predefined water band (965--985\,nm; shaded), while additional peaks appear at earlier wavelengths (including $\sim$930\,nm, often regarded as lipid/fat-associated in short-wave NIR), plausibly reflecting strong collinearity/compositional coupling between water and fat and other broad predictive structure. These results demonstrate that sBayFDNN can recover physically interpretable regions without sacrificing predictive accuracy.

% Figure~\ref{fig:realdata_selection_uncertainty} plots the \emph{Average PIP} (average plug-in posterior inclusion probability over five repeats) on the original domains. 
% For ECG, the model assigns higher inclusion strength within and near the clinically motivated QRS interval (shaded), with peaks close to the boundaries, consistent with the fact that duration is determined by onset/offset timing and thus endpoint morphology is most informative. 
% For Tecator, the Average PIP increases within the predefined water band (965--985\,nm; shaded), while additional peaks appear at earlier wavelengths (including $\sim$930\,nm, often regarded as lipid/fat-associated in short-wave NIR), plausibly reflecting strong collinearity/compositional coupling between water and fat and other broad predictive structure.

\section{Conclusion}
We have presented a sparse Bayesian functional DNN framework for nonlinear scalar-on-function regression with automatic region selection. By integrating B-spline expansions with a Bayesian neural network and imposing a structured spike-and-slab prior, the proposed model captures complex nonlinear dependence between functional predictors and scalar responses through data-driven functional representations and a flexible deep architecture. The framework yields interpretable region-wise selection together with uncertainty quantification, supported by theoretical guarantees. Simulations and experiments on multiple real-world datasets confirm its selection accuracy and competitive predictive performance, demonstrating practical utility in identifying region-specific functional effects.
% {\color{red}With fixed-order B-splines, each basis function has local support of order $1/J_n$ and overlaps with its neighbors, so the selected objects remain unions of local intervals rather than degenerating into pointwise variables.} 

\begin{figure}[htbp]
    \centering
     \includegraphics[width=\linewidth]{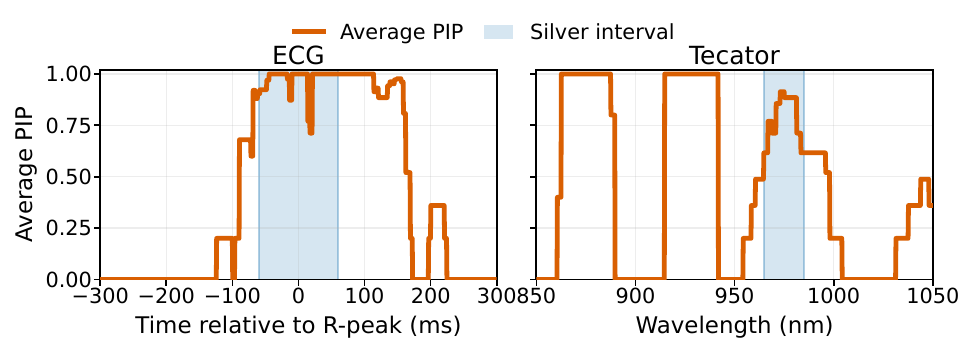} 
    \caption{PIPs from sBayFDNN for ECG and Tecator datasets. 
    }
    \label{fig:realdata_selection_uncertainty}
\end{figure}

At the same time, several limitations should be noted. First, the method depends on the B-spline representation, including the choice of $J_n$, which controls a bias--resolution trade-off; although our additional analysis shows that the method does not collapse to pointwise screening at larger resolutions, fully adaptive basis selection remains an important direction for future work. Second, the current framework focuses on a single functional predictor and a single projection direction within each selected region. Extending it to multiple functional inputs, or to richer within-region embeddings with multiple projection directions, is natural but raises additional challenges in identifiability, since the relevant object may become a region-wise subspace rather than a single direction. Addressing this setting will likely require extra structural constraints such as orthogonality, group sparsity, or subspace-level regularization. Third, region-level interpretation in real applications depends on the covariance structure of the functional predictor; in settings with strong local collinearity, the selected regions are more appropriately viewed as data-supported predictive regions rather than oracle-exact support recovery, and should therefore be interpreted together with domain knowledge.

Several extensions are of practical interest. One is to jointly model functional and discrete covariates rather than pre-adjusting for potential confounders. Another is to replace the current $l_2$ loss with a more robust alternative for noisy or heavy-tailed settings. Finally, embedding the framework in generalized linear or survival-type models would broaden its applicability to binary, count, and time-to-event outcomes, further extending its utility in functional data analysis.

%Future work includes extending the framework to jointly model functional and discrete covariates, rather than pre‑adjusting for confounders. The approach can also be generalized to handle multiple functional predictors with simultaneous variable and region selection. Replacing the $l_2$ loss with a more robust alternative may improve stability in noisy settings, while embedding the model within a generalized linear model would allow applications to binary, count, or survival outcomes, further broadening its utility in functional data analysis.

% \section*{Accessibility}

% Authors are kindly asked to make their submissions as accessible as possible
% for everyone including people with disabilities and sensory or neurological
% differences. Tips of how to achieve this and what to pay attention to will be
% provided on the conference website \url{http://icml.cc/}.

\section*{Software and Data}

The implementation code for sBayFDNN is available at https://github.com/mengyunwu2020/sBayFDNN, while the download link for the accompanying real dataset is provided in Appendix D.

% If a paper is accepted, we strongly encourage the publication of software and
% data with the camera-ready version of the paper whenever appropriate. This can
% be done by including a URL in the camera-ready copy. However, \textbf{do not}
% include URLs that reveal your institution or identity in your submission for
% review. Instead, provide an anonymous URL or upload the material as
% ``Supplementary Material'' into the OpenReview reviewing system. Note that
% reviewers are not required to look at this material when writing their review.

% Acknowledgements should only appear in the accepted version.
\newpage
\section*{Acknowledgments}
We thank the Area Chair and the anonymous reviewers for their insightful feedback, which was
instrumental in strengthening this paper. This research was supported by the MOE Project of Humanities and Social Sciences (25YJCZH291); Shanghai Science and Technology Development Funds (23JC1402100); Shanghai Research Center for Data Science and Decision Technology; National Institutes of Health (CA204120); and National Science Foundation (2209685).

% If a paper is accepted, the final camera-ready version can (and usually should)
% include acknowledgements.  Such acknowledgements should be placed at the end of
% the section, in an unnumbered section that does not count towards the paper
% page limit. Typically, this will include thanks to reviewers who gave useful
% comments, to colleagues who contributed to the ideas, and to funding agencies
% and corporate sponsors that provided financial support.

\section*{Impact Statement}

This work introduces a Bayesian deep learning framework for interpretable region selection in functional data. It advances nonlinear function‑to‑scalar regression by integrating structured sparsity with principled uncertainty quantification. The primary aim is to enhance machine learning methodology for functional and structured data. The proposed framework benefits domains where interpretability and reliability are essential, such as spectral chemometrics, neural signal analysis, and clinical monitoring. By enabling precise identification of informative functional subdomains, the method can support more accurate material composition analysis, refined physiological signal interpretation, and more targeted diagnostic interventions.

% Authors are \textbf{required} to include a statement of the potential broader
% impact of their work, including its ethical aspects and future societal
% consequences. This statement should be in an unnumbered section at the end of
% the paper (co-located with Acknowledgements -- the two may appear in either
% order, but both must be before References), and does not count toward the paper
% page limit. In many cases, where the ethical impacts and expected societal
% implications are those that are well established when advancing the field of
% Machine Learning, substantial discussion is not required, and a simple
% statement such as the following will suffice:

% ``This paper presents work whose goal is to advance the field of Machine
% Learning. There are many potential societal consequences of our work, none
% which we feel must be specifically highlighted here.''

% The above statement can be used verbatim in such cases, but we encourage
% authors to think about whether there is content which does warrant further
% discussion, as this statement will be apparent if the paper is later flagged
% for ethics review.

% In the unusual situation where you want a paper to appear in the
% references without citing it in the main text, use \nocite

\bibliography{example_paper}
\bibliographystyle{icml2026}

%%%%%%%%%%%%%%%%%%%%%%%%%%%%%%%%%%%%%%%%%%%%%%%%%%%%%%%%%%%%%%%%%%%%%%%%%%%%%%%
%%%%%%%%%%%%%%%%%%%%%%%%%%%%%%%%%%%%%%%%%%%%%%%%%%%%%%%%%%%%%%%%%%%%%%%%%%%%%%%
% APPENDIX
%%%%%%%%%%%%%%%%%%%%%%%%%%%%%%%%%%%%%%%%%%%%%%%%%%%%%%%%%%%%%%%%%%%%%%%%%%%%%%%
%%%%%%%%%%%%%%%%%%%%%%%%%%%%%%%%%%%%%%%%%%%%%%%%%%%%%%%%%%%%%%%%%%%%%%%%%%%%%%%
\newpage
\appendix
\onecolumn

% You can have as much text here as you want. The main body must be at most $8$
% pages long. For the final version, one more page can be added. If you want, you
% can use an appendix like this one.

% The $\mathtt{\backslash onecolumn}$ command above can be kept in place if you
% prefer a one-column appendix, or can be removed if you prefer a two-column
% appendix.  Apart from this possible change, the style (font size, spacing,
% margins, page numbering, etc.) should be kept the same as the main body.

\section{Details for the Posterior Inference}

\subsection{Hyperparameter Settings}
\label{app:hyperparams}

All responses $Y$ are standardized using training-set statistics; unless otherwise stated, all computations are carried out on this standardized scale and we set the noise variance to $\sigma_\varepsilon^2=1$.
For non-sparsified network parameters, we use independent Gaussian priors with variance $\sigma^2=1$ (numerically equal to $\sigma_\varepsilon^2$ under standardization, but conceptually distinct).
We use $R=5$ random restarts for each $J_n$.

\smallskip
\noindent\textbf{Simulation defaults.}
\begin{itemize}[leftmargin=*]
  \item Basis: B-splines with degree 4 (unless otherwise stated).
  \item Network: fully-connected ReLU, $64$--$64$--$64$--$1$; first-layer column-wise spike-and-slab selection.
  \item Optimizer: mini-batch SGD, learning rate $10^{-3}$, batch size 64, max 80{,}001 iterations, early stopping patience 3{,}000.
  \item Spike-and-slab: $(\lambda_n,\sigma_{0,n}^2,\sigma_{1,n}^2)=(10^{-5},10^{-5},2\times10^{-3})$.% $\sigma_\varepsilon^2=1$ on the standardized scale.
\end{itemize}

We treat the projection truncation level $J_n$ as a resolution hyperparameter and select it via restart-aggregated evidence in simulations or restart-aggregated validation loss when evidence computation is prohibitive. We use the candidate set $\mathcal J=\{55,60,70,80\}$ for $J_n$ in simulations; for real datasets, $\mathcal J$ is chosen as a small neighborhood around a dataset-specific baseline resolution. This choice is further supported by an additional exploration of $J_n$ sensitivity across replications in Appendix~\ref{app:jn-exploration-case3}.

We train with a small learning rate and a relatively large early-stopping patience to obtain stable solutions across restarts. 
The sparsification prior is intentionally strong in simulations; empirically, the same default sparsification hyperparameters remain effective on ECG ($n\approx 2$--$3\times 10^4$), consistent with increasingly data-dominated inference as $n$ grows.

For competing methods, we aim to ensure fair comparisons by using comparable model capacity whenever applicable (e.g., similar depth/width for neural-network baselines).
Method-specific hyperparameters are taken from authors' recommended defaults when available, and otherwise selected by cross-validation or validation loss on the same training/validation split.

\subsection{Stochastic Gradient Algorithm (SGD) for the MAP Fitting}

We first provide the derivations on the marginal prior on $\boldsymbol{\theta}$. Denote $K_{1_n}=(J_n+1)L_1+\sum_{k=2}^{H_n}(L_{k-1}+1)L_k$ as the dimension of $\boldsymbol{\theta}$ and $\mathcal W\subset\{1,\dots,K_{1_n}\}$ as the set of indices in $\boldsymbol{\theta}$
corresponding to the weights of the first-layer $\{W_{1,hg}\}_{h\le L_1,\,g\le J_n}$,
and let $\mathcal G:=\{1,\dots,K_{1_n}\}\setminus\mathcal W$. In addition, let $\phi(\cdot;0,s^2)$ denote the $\mathcal N(0,s^2)$ density.

Based on the priors introduced in Section \ref{sec:methodology}, we have
\begin{eqnarray*}
&&\pi(\boldsymbol W_{1,*j} \mid \gamma_j)= \prod_{h=1}^{L_1}
\Bigl[\phi(W_{1,hj};0,\sigma_{1,n}^2)\Bigr]^{\gamma_j}
\Bigl[\phi(W_{1,hj};0,\sigma_{0,n}^2)\Bigr]^{1-\gamma_j},
\forall j\in\{1,\dots,J_n\};\\
&&\pi(\gamma_j)
= \lambda_n^{\gamma_j}(1-\lambda_n)^{1-\gamma_j};\\
&&
\pi(W_{h,ab})=\phi(W_{h,ab};0,\sigma^2), \qquad \forall\, h\in\{2,\dots,H_n\};\\
&&\text{and}\\
&&\pi(b_{h,a})=\phi(b_{h,a};0,\sigma^2),\qquad
\forall\, h\in\{1,\dots,H_n\}.
\end{eqnarray*}
Then, the marginal prior on $\boldsymbol{\theta}$ is $\pi(\boldsymbol{\theta})=\sum_{\boldsymbol{\gamma}}\pi(\boldsymbol{\theta},\boldsymbol{\gamma})$, where
\[
  \pi(\boldsymbol{\theta},\boldsymbol{\gamma})=\prod_{j=1}^{J_n}\left[\pi(\gamma_j)\pi(\boldsymbol W_{1,*j} \mid \gamma_j)\right]\prod_{j\in\mathcal G}\phi(\theta_j;0,\sigma^2).
\]

Based on the marginal prior on $\boldsymbol{\theta}$, we develop the following SGD algorithm (Algorithm \ref{alg:evidence_sparse_dnn}) for optimizing (\ref{eq:map_obj}).

\begin{algorithm}[htbp]
\caption{Sparse DNN elicitation with projection-size selection}
\label{alg:evidence_sparse_dnn}
\KwIn{
$\mathcal D_{\mathrm{tr}}=\{(X_i,Y_i)\}_{i=1}^{n_{\mathrm{tr}}}$, $\mathcal D_{\mathrm{va}}=\{(X_i,Y_i)\}_{i=1}^{n_{\mathrm{va}}}$;
candidate projection dimensions $\mathcal J=\{J_1,\ldots,J_M\}$; basis specification for $\{B_j(t)\}_{j\ge1}$;
random restarts $R$; hyperparameters $\sigma_{0,n}^2,\sigma_{1,n}^2,\sigma^2,\lambda_n$; noise variance $\sigma_\varepsilon^2$;
$\mathsf{Crit}\in\{\text{evidence},\text{val}\}$.
}
\KwOut{Selected projection dimension $J^\star$, aggregated predictor $\widehat f(\cdot)$, averaged inclusion scores $\widehat{\mathbf q}$, and selected feature mask $\widehat{\bm\gamma}$.}

\BlankLine
\ForEach{$J\in\mathcal J$}{
\textbf{Step 0 (projection).}
Construct $\{B_1(t),\ldots,B_J(t)\}$ and compute $\bm\eta_J(X)$ for all $(X,Y)\in\mathcal D_{\mathrm{tr}}\cup\mathcal D_{\mathrm{va}}$.\;

\For{$r=1,\ldots,R$}{
\textbf{Step 1 (initialize).} Randomly initialize $\bm\theta$.\;

\textbf{Step 2 (MAP training).} Obtain
\[
\widehat{\bm\theta}_{J,r}\in \arg\min_{\bm\theta}\;
\mathcal L_{n,J}(\bm\theta) =
\frac{1}{2\sigma_\varepsilon^2}\sum_{(X_i,Y_i)\in\mathcal D_{\mathrm{tr}}}\!\Bigl(Y_i-F_{\bm\theta}(\bm\eta_J(X_i))\Bigr)^2
-\log \pi(\bm\theta).
\]

\textbf{Step 3 (validation score).}
Set $v_{J,r}:=\mathrm{MSE}\!\left(\widehat{\bm\theta}_{J,r};\,\mathcal D_{\mathrm{va}}\right)$.\;

\textbf{Step 4 (feature mask and evidence surrogate).}
Let $\widehat{\mathbf W}^{(1)}_{J,r}\in\mathbb R^{w\times J}$ be the first-layer weight matrix in $\widehat{\bm\theta}_{J,r}$.
For $j=1,\ldots,J$, set
\[
\gamma_{J,r,j}
=\mathbbm 1\!\left(\bigl\|\widehat{\mathbf W}^{(1)}_{J,r,:,j}\bigr\|_2^2>\tau_n\right),
\quad
\tau_n
:=\frac{\log\!\bigl((1-\lambda_n)/\lambda_n\bigr)+\frac{w}{2}\log(\sigma_{1,n}^2/\sigma_{0,n}^2)}
{\frac{1}{2\sigma_{0,n}^2}-\frac{1}{2\sigma_{1,n}^2}}.
\]
Denote $\widehat{\bm\gamma}_{J,r}:=(\gamma_{J,r,1},\ldots,\gamma_{J,r,J})^\top$ and construct the evidence surrogate
$\widehat{\bm\theta}^{\,\mathrm{s}}_{J,r}$ by replacing
$\widehat{\mathbf W}^{(1)}_{J,r}$ with
$\widehat{\mathbf W}^{(1)}_{J,r}\mathrm{diag}(\widehat{\bm\gamma}_{J,r})$
(used only for evidence computation).\;

\textbf{Step 5 (evidence score; post-sparsification).}
Compute $\ell_{J,r}:=\log \mathrm{Ev}\!\left(\widehat{\bm\theta}^{\,\mathrm{s}}_{J,r};\,\mathcal D_{\mathrm{tr}}\right)$.\;
}
\BlankLine
\textbf{Step 6 (aggregate over restarts).}
Set $\bar\ell_J:=R^{-1}\sum_{r=1}^R \ell_{J,r}$ and $\bar v_J:=R^{-1}\sum_{r=1}^R v_{J,r}$.\;
}

\BlankLine
\textbf{Step 7 (select $J$).}
\eIf{$\mathsf{Crit}=\text{evidence}$}{
$J^\star \in \arg\max_{J\in\mathcal J}\ \bar\ell_J$.\;
}{
$J^\star \in \arg\min_{J\in\mathcal J}\ \bar v_J$.\;
}

\BlankLine
\textbf{Step 8 (aggregate predictor and PIPs at $J^\star$).}
Define the aggregated predictor
$\widehat f(x):=\frac{1}{R}\sum_{r=1}^R F_{\widehat{\bm\theta}_{J^\star,r}}(\bm\eta_{J^\star}(x)).$

For each $r=1,\ldots,R$ and $j=1,\ldots,J^\star$, compute the restart-specific plug-in inclusion score
$q_{J^\star,r,j}$ from $\widehat{\mathbf W}^{(1)}_{J^\star,r,:,j}$.

Compute the restart-specific plug-in inclusion score $q_{J^\star,r,j}$ then set
$\widehat q_j:=\frac{1}{R}\sum_{r=1}^R q_{J^\star,r,j},
   j=1,\ldots,J^\star.$

Let $\widehat{\mathbf q}:=(\widehat q_1,\ldots,\widehat q_{J^\star})^\top$ and define the final feature mask
$\widehat\gamma_j=\mathbbm 1(\widehat q_j>\tau), j=1,\ldots,J^\star,$ for a prespecified thresholding rule.\;

\Return $(J^\star,\widehat f,\widehat{\mathbf q},\widehat{\bm\gamma})$.
\end{algorithm}

\clearpage
After selecting $J^\star$, we form the final predictor by averaging the outputs of the $R$ independently trained models at $J^\star$, i.e.,
$\hat y(x)=\frac{1}{R}\sum_{r=1}^R F_{\widehat{\bm\theta}_{J^\star,r}}(\bm\eta_{J^\star}(x)).$
For interpretability summaries, we first compute restart-specific plug-in PIPs at the selected resolution $J^\star$, average them across the same $R$ fitted models, and then map the averaged PIPs to the functional domain for region-level interpretation.

In Step~5 of Algorithm~\ref{alg:evidence_sparse_dnn}, we score each run using a 
Laplace-type evidence surrogate \cite{mackay1992evidence,liang2013bayesian}. 
This score is used only as a practical model-comparison criterion across candidate 
projection dimensions and random restarts, rather than as an exact marginal likelihood 
for the full nonconvex posterior.

Let $\mathcal L_{n,J}(\bm\theta)$ denote the negative log-posterior objective used in MAP 
training under projection dimension $J$, and define
$h_{n,J}(\bm\theta):=-\mathcal L_{n,J}(\bm\theta)/n_{\mathrm{tr}}$.
Let $\bm H_{n,J}(\bm\theta):=\nabla^2_{\bm\theta} h_{n,J}(\bm\theta)$.
For this score, we evaluate $h_{n,J}$ and $\bm H_{n,J}$ at the sparsified surrogate 
parameter $\widehat{\bm\theta}^{\,\mathrm{s}}_{J,r}$ constructed in Step~4, while 
predictive evaluations are based on the original MAP estimate 
$\widehat{\bm\theta}_{J,r}$.

Let $\mathcal I_{J,r}$ index the parameters retained in the sparsified surrogate, and let
$d_{J,r}:=|\mathcal I_{J,r}|$. Denote by 
$\bm H_{n,J}(\bm\theta)_{\mathcal I_{J,r},\mathcal I_{J,r}}$ the corresponding principal 
submatrix. The Laplace-type log-score is defined as
\begin{equation}
\label{eq:laplace_logevd_appendix}
\ell_{J,r}
:= n_{\mathrm{tr}}\, h_{n,J}\!\left(\widehat{\bm\theta}^{\,\mathrm{s}}_{J,r}\right)
+\frac{d_{J,r}}{2}\log(2\pi)
-\frac{d_{J,r}}{2}\log(n_{\mathrm{tr}})
-\frac{1}{2}\log \det\!\left(
-\bm H_{n,J}\!\left(\widehat{\bm\theta}^{\,\mathrm{s}}_{J,r}\right)_{\mathcal I_{J,r},\mathcal I_{J,r}}
\right).
\end{equation}
Here the Hessian is restricted to the retained parameters of the sparsified surrogate. 
Because the underlying DNN posterior is nonconvex and potentially multimodal, 
$\ell_{J,r}$ should be interpreted as a computationally tractable Laplace-type 
selection score, not as exact Bayesian evidence.

We compute the log-determinant term in~\eqref{eq:laplace_logevd_appendix} using the 
eigenvalues of the restricted negative Hessian for numerical stability, and apply 
standard stabilization when needed, such as adding a small diagonal jitter.

\subsection{Examination of Evidence-Based $J_n$ Selection}
\label{app:evidence-jn}

To assess the practical effectiveness of the evidence-based criterion for selecting $J_n$, we examine how highly the selected candidate ranks relative to the oracle choice defined by F1 over the candidate set.
Based on 50 replicates, Table~\ref{tab:evidence-jn-seedlevel-topk} summarizes the resulting seed-level rankings across three representative scenarios with composite $g$ and SNR$=10$ (see Sections 6.1 and C.2 for details). Since the candidate set contains eight possible $J_n$ values in each run, this analysis is intended to evaluate whether the evidence-based criterion tends to identify competitive choices, rather than to claim exact recovery of the oracle $J_n$ in every case.

\begin{table*}[htbp]
\centering
\small
\setlength{\tabcolsep}{4pt}
\caption{Top-$k$ effectiveness of evidence-based $J_n$ selection (seed-level; F1 criterion). Candidate set size per run is 8.}
\label{tab:evidence-jn-seedlevel-topk}
\begin{tabular}{ccccccc}
\toprule
Scenario & Mean Rank & Top-1 & Top-2 & Top-3 & Within 15\% of Oracle F1 & Within 30\% of Oracle F1 \\
\midrule
Complex $\beta(t)$, Composite $g$ & 3.85 & 11.0\% & 21.4\% & 28.6\% & 50.0\% & 92.9\% \\
Simple $\beta(t)$, Composite $g$ & 2.5 & 28.6\% & 42.9\% & 64.3\% & 100.0\% & 100.0\% \\
Moderate $\beta(t)$, Composite $g$ & 3.33 & 33.3\% & 33.3\% & 33.3\% & 66.7\% & 100.0\% \\
Overall & 3.17 & 17.6\% & 23.5\% & 44.1\% & 73.5\% & 97.1\% \\
\bottomrule
\end{tabular}
\end{table*}

The results suggest that the evidence-based criterion is reasonably effective in practice.
Although it does not always identify the oracle-ranked $J_n$, the selected candidate is often competitive: overall, 44.1\% of the selections fall within the top three candidates, 73.5\% are within 15\% of the oracle F1, and 97.1\% are within 30\% of the oracle F1.
This indicates that the criterion frequently yields near-oracle choices, especially in the simpler scenarios, while leaving room for future exploration of potentially stronger model-selection rules.

\subsection{Sensitivity Analysis of $J_n$}
\label{app:jn-exploration-case3}

To examine how the number of spline bases $J_n$ affects support recovery and prediction performance, we perform a sensitivity analysis under two simulation scenarios: simple $\beta(t)$ with composite $g$ at SNR $=10$ and SNR $=5$.
For each scenario, we generated 50 independent data replications and evaluated
\(
J_n \in \{20,40,\ldots,240\}
\).
For each replication and each $J_n$, we fit the model once and computed region-selection and prediction metrics.
We then summarized results as mean(sd) over the 50 replications.

In addition to the standard recovery metrics, we also consider two measures to characterize the continuity and granularity of the resulting importance profile and selected regions.
Specifically, \textit{Curve Roughness} quantifies the roughness of the importance curve $p(t)$ through
$\frac{1}{T-1}\sum_{k=1}^{T-1}\big|p(t_{k+1})-p(t_k)\big|,$
where larger values indicate a less smooth and more fragmented importance curve.
We also report \textit{MinLen/Mesh}, defined as the minimum selected interval length divided by the mesh size.
This measures how many grid cells are covered by the shortest selected interval, with smaller values indicating more fragmented selections. Together, these two metrics help assess whether increasing $J_n$ leads to more irregular importance profiles or excessively short selected intervals. A third metric, \textit{MeanLen} (the mean interval length of the identified region), is also reported.
\begin{table*}[htbp]
\centering
\small
\setlength{\tabcolsep}{2.5pt}
\caption{Simulation results under different $J_n$ settings for the scenario with simple $\beta(t)$, composite link function $g$, and two SNR levels (reported as mean(sd) over 50 reps).}
\label{tab:jn-case3-ghard-snr10-rep50}
\begin{tabular}{ccccccccc}
\toprule
SNR &$J_n$ & MeanLen & Curve Roughness & RMSE & F1 & Recall & Precision & MinLen/Mesh \\
\midrule
\multirow{12}{*}{$10$} &20 & 0.5500(0.1843) & 0.0056(0.0014) & 0.5587(0.0640) & 0.5573(0.1017) & 1.0000(0.0000) & 0.3926(0.0924) & 11.0000(3.6857) \\
&40 & 0.2800(0.0307) & 0.0077(0.0005) & 0.5515(0.0518) & 0.8366(0.0523) & 1.0000(0.0000) & 0.7225(0.0780) & 11.2000(1.2262) \\
&60 & 0.2286(0.0300) & 0.0077(0.0005) & 0.5522(0.0493) & 0.9175(0.0437) & 0.9807(0.0456) & 0.8684(0.0823) & 13.7143(1.7981) \\
&80 & 0.2087(0.0337) & 0.0077(0.0012) & 0.5677(0.0496) & 0.9323(0.0480) & 0.9500(0.0568) & 0.9245(0.0921) & 16.6947(2.6985) \\
&100 & 0.1782(0.0352) & 0.0084(0.0015) & 0.5708(0.0513) & 0.9095(0.0477) & 0.8867(0.0875) & 0.9462(0.0783) & 17.1875(4.9006) \\
&120 & 0.1489(0.0402) & 0.0083(0.0014) & 0.5698(0.0535) & 0.8803(0.0576) & 0.8229(0.0997) & 0.9601(0.0652) & 16.4276(6.9745) \\
&140 & 0.1360(0.0366) & 0.0097(0.0025) & 0.5647(0.0486) & 0.8720(0.0611) & 0.7928(0.0965) & 0.9797(0.0459) & 17.0265(7.9199) \\
&160 & 0.1084(0.0405) & 0.0102(0.0026) & 0.5733(0.0540) & 0.8477(0.0719) & 0.7505(0.1032) & 0.9854(0.0384) & 14.4821(8.9812) \\
&180 & 0.0921(0.0399) & 0.0106(0.0026) & 0.5764(0.0552) & 0.8180(0.0685) & 0.7014(0.0963) & 0.9927(0.0318) & 13.4591(9.4446) \\
&200 & 0.0707(0.0298) & 0.0112(0.0025) & 0.5751(0.0558) & 0.8021(0.0714) & 0.6783(0.0992) & 0.9946(0.0227) & 9.7347(7.5132) \\
&220 & 0.0605(0.0278) & 0.0114(0.0021) & 0.5689(0.0532) & 0.7702(0.0618) & 0.6325(0.0806) & 0.9949(0.0192) & 9.2074(7.3905) \\
&240 & 0.0569(0.0289) & 0.0127(0.0029) & 0.5724(0.0533) & 0.7583(0.0687) & 0.6210(0.0889) & 0.9862(0.0373) & 9.2339(7.9701) \\
\midrule
\multirow{12}{*}{$5$} &20 & 0.5475(0.1659) & 0.0058(0.0014) & 0.6542(0.0661) & 0.5549(0.0928) & 1.0000(0.0000) & 0.3892(0.0841) & 10.9500(3.3174) \\
&40 & 0.2878(0.0336) & 0.0077(0.0004) & 0.6474(0.0515) & 0.8237(0.0543) & 1.0000(0.0000) & 0.7037(0.0777) & 11.5111(1.3424) \\
&60 & 0.2354(0.0376) & 0.0075(0.0006) & 0.6457(0.0491) & 0.9090(0.0601) & 0.9843(0.0343) & 0.8528(0.1059) & 14.1214(2.2567) \\
&80 & 0.2129(0.0364) & 0.0077(0.0008) & 0.6587(0.0510) & 0.9240(0.0495) & 0.9571(0.0591) & 0.9032(0.0976) & 16.9263(3.2311) \\
&100 & 0.1732(0.0351) & 0.0084(0.0016) & 0.6611(0.0513) & 0.9037(0.0551) & 0.8802(0.0927) & 0.9414(0.0819) & 16.5417(5.0492) \\
&120 & 0.1468(0.0437) & 0.0088(0.0020) & 0.6646(0.0542) & 0.8808(0.0546) & 0.8262(0.0962) & 0.9549(0.0621) & 16.0138(7.4076) \\
&140 & 0.1299(0.0415) & 0.0098(0.0025) & 0.6602(0.0537) & 0.8693(0.0601) & 0.7904(0.0919) & 0.9750(0.0473) & 16.1000(8.2870) \\
&160 & 0.0983(0.0381) & 0.0100(0.0026) & 0.6613(0.0500) & 0.8355(0.0667) & 0.7374(0.1006) & 0.9767(0.0522) & 12.7179(8.2988) \\
&180 & 0.0889(0.0396) & 0.0105(0.0024) & 0.6663(0.0552) & 0.8169(0.0748) & 0.7009(0.1046) & 0.9916(0.0346) & 12.7432(9.2985) \\
&200 & 0.0721(0.0350) & 0.0115(0.0021) & 0.6685(0.0554) & 0.7980(0.0755) & 0.6756(0.1006) & 0.9877(0.0362) & 10.4286(8.6346) \\
&220 & 0.0663(0.0325) & 0.0115(0.0026) & 0.6614(0.0512) & 0.7699(0.0685) & 0.6356(0.0920) & 0.9901(0.0311) & 10.5926(8.8314) \\
&240 & 0.0524(0.0280) & 0.0128(0.0028) & 0.6633(0.0550) & 0.7497(0.0677) & 0.6095(0.0863) & 0.9870(0.0358) & 9.0305(7.6008) \\
\bottomrule
\end{tabular}
\end{table*}

Table \ref{tab:jn-case3-ghard-snr10-rep50} shows a consistent bias--resolution trade-off.
When $J_n$ is small, selected regions are relatively broad (larger mean interval length), which tends to increase recall but reduce precision.
As $J_n$ increases to a moderate range, localization improves and F1 reaches its best levels.
For very large $J_n$, selected regions become increasingly fragmented (smaller interval length and lower MinLen/Mesh), and recall gradually decreases, leading to lower F1. Importantly, even at larger $J_n$, selected regions are still induced by overlapping spline supports in the function domain, rather than pointwise discrete screening. Overall, these results support the practical criterion used in the main text: moderate $J_n$ values provide a better balance between interpretability (region continuity) and selection accuracy.

\subsection{Sensitivity Analysis of Hyperparameters}
We conduct one-at-a-time (OAT) sensitivity analyses by varying one hyperparameter at a time while fixing the others at their default values. Specifically, simulated data under various settings (see Sections~6.1 and~C.2 for details) are examined with 50 replications per scenario. Summarized results for the spike-and-slab hyperparameters ($\lambda_n$, $\sigma_0^2$, and $\sigma_1^2$) and the PIP threshold ($\tau$) are provided in Tables~\ref{tab:sensitivity-hyperparam} and~\ref{tab:sensitivity-tau}, respectively, and the results in both tables are macro-averaged over all simulation scenarios.

For the spike-and-slab hyperparameters, the prediction and recovery results remain reasonably stable across the examined ranges, suggesting that the proposed method is not overly sensitive to moderate changes in these prior hyperparameters. For $\lambda_n$, no clear monotone trend is observed, and the macro-averaged RMSE and recovery metrics remain broadly comparable across values, indicating relative robustness to the prior inclusion probability. In contrast, $\sigma_0^2$ exhibits a clearer effect on the recall--precision trade-off: an extremely small spike variance leads to overly inclusive selection, with near-perfect recall but much lower precision, whereas larger values make the selection more conservative. For $\sigma_1^2$, the effect is also non-monotone, but different values induce different recall--precision trade-offs; the default choice $\sigma_1^2=2\times 10^{-3}$ provides the most balanced overall performance in terms of macro-averaged F1.

As expected, $\tau$ controls the trade-off between recall and precision (Table \ref{tab:sensitivity-tau}). When $\tau$ is small (e.g., $\tau=0.1$), the procedure is more inclusive, leading to very high recall but lower precision and a larger number of selected components. As $\tau$ increases to a moderate range ($\tau=0.2$--$0.5$), the method achieves a more balanced recovery performance, with the best F1 scores attained at $\tau=0.2$ and $\tau=0.3$. When $\tau$ becomes too large ($\tau \ge 0.6$), the selection becomes increasingly conservative, resulting in substantially reduced recall and fewer selected components, which in turn leads to a marked drop in F1. Overall, these results suggest that the method is reasonably stable over a moderate range of threshold values, while overly small or overly large thresholds may induce over-selection or under-selection, respectively.

% Overall, the region recovery results remain stable over a reasonable range of hyperparameter values, indicating that the proposed method is not overly sensitive to moderate perturbations of the tuning parameters. In particular, the main recovery metrics exhibit only limited variation across the OAT settings, suggesting that the default configuration provides a robust practical choice.

\begin{table}[h]
\centering
\small
\setlength{\tabcolsep}{4pt}
\caption{Sensitivity analysis of the spike-and-slab hyperparameters (macro-averaged over scenarios). Results are reported as mean(sd).}
\label{tab:sensitivity-hyperparam}
\begin{tabular}{cccccc}
\toprule
Hyperparameter & Value & RMSE & F1 & Recall & Precision \\
\midrule
\multirow{5}{*}{$\lambda_n$}
& $1\times 10^{-6}$ & 0.6747(0.1180) & 0.7868(0.1472) & 0.7574(0.1582) & 0.8349(0.1261) \\
& $3\times 10^{-6}$ & 0.6813(0.1273) & 0.8182(0.2748) & 0.7859(0.2667) & 0.8532(0.1764) \\
& $1\times 10^{-5}$ & 0.6693(0.1201) & 0.7982(0.1071) & 0.8884(0.1045) & 0.7346(0.1007) \\
& $3\times 10^{-5}$ & 0.6931(0.1237) & 0.7165(0.1995) & 0.6845(0.2687) & 0.8470(0.0874) \\
& $1\times 10^{-4}$ & 0.6743(0.1183) & 0.7947(0.1382) & 0.7746(0.1438) & 0.8315(0.1228) \\
\midrule
\multirow{5}{*}{$\sigma_0^2$}
& $1\times 10^{-6}$  & 0.6542(0.1271) & 0.5463(0.0754) & 0.9990(0.0010) & 0.3826(0.0732) \\
& $3\times 10^{-6}$ & 0.6889(0.1341) & 0.7142(0.2248) & 0.6796(0.3045) & 0.8437(0.0551) \\
& $1\times 10^{-5}$ & 0.6693(0.1201) & 0.7982(0.1071) & 0.8884(0.1045) & 0.7346(0.1007) \\
& $3\times 10^{-5}$ & 0.6937(0.1229) & 0.6684(0.2178) & 0.5682(0.2576) & 0.8952(0.0752) \\
& $1\times 10^{-4}$ & 0.6873(0.1232) & 0.6991(0.1973) & 0.6642(0.2592) & 0.7380(0.0322) \\
\midrule
\multirow{5}{*}{$\sigma_1^2$}
& $1\times 10^{-3}$ & 0.6920(0.1195) & 0.7035(0.1536) & 0.6346(0.2215) & 0.7892(0.0238) \\
& $1.5\times 10^{-3}$ & 0.6832(0.1256) & 0.7150(0.2056) & 0.6785(0.3121) & 0.8606(0.0016) \\
& $2\times 10^{-3}$ & 0.6693(0.1201) & 0.7982(0.1071) & 0.8884(0.1045) & 0.7346(0.1007) \\
& $3\times 10^{-3}$ & 0.6928(0.1303) & 0.7095(0.2294) & 0.6742(0.3097) & 0.8470(0.0529) \\
& $5\times 10^{-3}$ & 0.6534(0.1278) & 0.7759(0.1444) & 0.9854(0.0028) & 0.6696(0.2009) \\
\bottomrule
\end{tabular}
\end{table}

\begin{table}[h]
\centering
\small
\setlength{\tabcolsep}{5pt}
\caption{Sensitivity analysis of the PIP threshold $\tau$ (macro-averaged over scenarios). Results are reported as mean(sd).}
\label{tab:sensitivity-tau}
\begin{tabular}{ccccc}
\toprule
$\tau$ & F1 & Recall & Precision & $n_{\mathrm{selected}}$ \\
\midrule
0.1 & 0.8256(0.0217) & 0.9930(0.0031) & 0.7251(0.0316) & 11.51(0.48) \\
0.2 & 0.8969(0.0137) & 0.9573(0.0083) & 0.8579(0.0244) & 8.30(0.21) \\
0.3 & 0.8969(0.0137) & 0.9573(0.0083) & 0.8579(0.0244) & 8.30(0.21) \\
0.4 & 0.8781(0.0126) & 0.8610(0.0164) & 0.9146(0.0201) & 5.83(0.19) \\
0.5 & 0.8781(0.0126) & 0.8610(0.0164) & 0.9146(0.0201) & 5.83(0.19) \\
0.6 & 0.7428(0.0281) & 0.6498(0.0340) & 0.9201(0.0159) & 3.32(0.22) \\
0.7 & 0.7428(0.0281) & 0.6498(0.0340) & 0.9201(0.0159) & 3.32(0.22) \\
0.8 & 0.4557(0.0407) & 0.3523(0.0353) & 0.6864(0.0497) & 1.20(0.13) \\
0.9 & 0.4557(0.0407) & 0.3523(0.0353) & 0.6864(0.0497) & 1.20(0.13) \\
\bottomrule
\end{tabular}
\end{table}

\subsection{Computational Time Analysis of sBayFDNN on Simulated Data}
\label{app:time-cost}

Tables~\ref{tab:cost-tobs-sbay-repeat5} and~\ref{tab:cost-n-sbay-repeat5} report the computational cost of the proposed method for varying numbers of observed points $T_{obs}$ and sample size $n$, with the running time of sBayFDNN decomposed into projection, training, and evidence evaluation. It is observed that with a fixed sample size, increasing $T_{obs}$ from 200 to 2000 has little effect on the overall runtime. In particular, the projection step is negligible throughout, and the total runtime is dominated by training and evidence computation.
This suggests that, once the functional input has been projected onto a fixed basis representation, the computational burden depends much more on the downstream optimization procedure than on the raw observation grid size itself.

By contrast, the sample size $n$ has a more visible impact on runtime.
At fixed $T_{obs}=200$, increasing $n$ from 1000 to 3000 leads to a clear increase in both training and evidence computation time, and hence in the overall runtime per repeat.
This is consistent with the fact that the optimization and model-evaluation steps scale primarily with the number of samples rather than with the original number of observed time points.

% The goal is to assess whether the proposed framework remains computationally feasible when the functional curves are observed on dense grids.

\begin{table}[h]
\centering
\small
\setlength{\tabcolsep}{8pt}
\caption{Runtime vs. observed points $T_{obs}$ at fixed $n=1000$. Entries are mean(sd) over 5 repeats (seconds).}
\label{tab:cost-tobs-sbay-repeat5}
\begin{tabular}{ccccc}
\toprule
$T_{obs}$ & Projection ($\times 10^{-3}$ s) & Training & Evidence & Runtime/Repeat \\
\midrule
200 & 2.0 (5.0) & 83.8 (22.1) & 41.3 (3.6) & 125.1 (22.8) \\
400 & 2.0 (4.0) & 80.3 (24.8) & 41.5 (7.6) & 121.7 (20.1) \\
600 & 2.0 (4.0) & 105.9 (66.0) & 38.8 (7.0) & 144.7 (62.8) \\
1000 & 3.0 (6.0) & 68.4 (18.0) & 36.3 (3.7) & 104.7 (15.1) \\
1500 & 2.0 (4.0) & 68.0 (18.5) & 36.2 (3.0) & 104.2 (15.8) \\
2000 & 2.0 (5.0) & 67.6 (19.3) & 36.1 (3.1) & 103.7 (16.7) \\
\bottomrule
\end{tabular}
\end{table}

\begin{table}[h]
\centering
\small
\setlength{\tabcolsep}{4pt}
\caption{Runtime vs. sample size $n$ at fixed $T_{obs}=200$. Entries are mean(sd) over 5 repeats (seconds).}
\label{tab:cost-n-sbay-repeat5}
\begin{tabular}{ccccc}
\toprule
$n$ & Projection ($\times 10^{-3}$ s) & Train & Evidence & Runtime/Repeat \\
\midrule
1000 & 2.0 (5.0) & 83.8 (22.1) & 41.3 (3.6) & 125.1 (22.8) \\
2000 & 2.0 (5.0) & 122.2 (83.3) & 80.3 (56.9) & 202.5 (58.3) \\
3000 & 3.0 (6.0) & 151.2 (97.5) & 49.4 (11.5) & 200.6 (96.3) \\
\bottomrule
\end{tabular}
\end{table}

{\subsection{Cross-seed Stability of Plug-in PIPs and Region Selection}}
\label{app:cross-seed-stability}

To assess whether plug-in PIPs and the resulting selected regions are sensitive to random initialization and optimization path, we perform a cross-seed stability analysis under the simple $\beta(t)$ and composite $g$ setting at SNR $=10$ and SNR $=5$.
For each scenario, we generate 50 independent data replications, and within each replication we refit the model using 10 random seeds while keeping the dataset and all other settings fixed.

For each replication, we summarize cross-seed variability under two reference schemes: comparison to the seed-0 fit and comparison to the cross-seed mean. We report both PIP-level and selection-level stability metrics.
Specifically, \emph{PIP Corr} and \emph{PIP MeanDiff} compare the estimated PIP curves across seeds, measuring similarity in shape and average numerical discrepancy, respectively.
After thresholding the PIP curves to obtain selected regions, we further compute \emph{Interval SymDiff}, namely the symmetric-difference length between two selected supports, and $\Delta$F1, which measures the seed-induced change in support recovery performance.
These replication-level summaries are then aggregated over the 50 replications, and the resulting distributions are shown in Figure~\ref{fig:cross_seed_stability}.

\begin{figure}[t]
\centering
\includegraphics[width=0.85\textwidth]{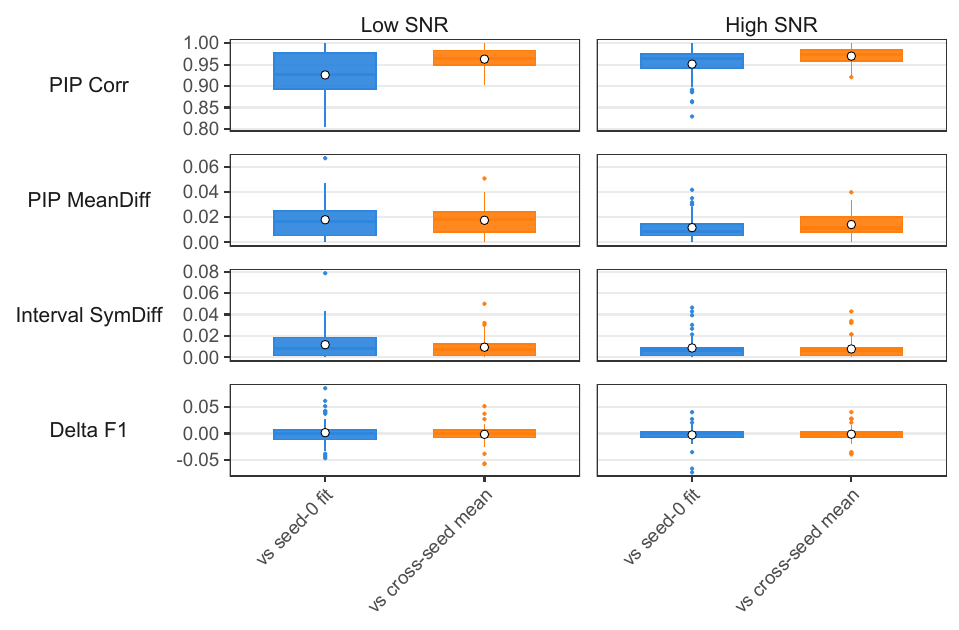}
\caption{
Cross-seed stability of the plug-in PIPs and the resulting selected regions under the simple $\beta(t)$ and complex $g$ setting, shown for both low and high SNR.
Higher values indicate better stability for \textit{PIP Corr},
whereas lower values indicate better stability for \textit{PIP MeanDiff} and \textit{Interval SymDiff};
for $\Delta$F1, values closer to zero indicate smaller seed-induced variation.
}
\label{fig:cross_seed_stability}
\end{figure}

Figure~\ref{fig:cross_seed_stability} shows that cross-seed variability is low under both low and high SNR scenarios.
In particular, the PIP correlations are consistently high, while the mean PIP differences and interval symmetric differences remain small.
Moreover, $\Delta$F1 stays close to zero across seeds, indicating that the final region-selection performance is only mildly affected by initialization and optimization path.
Overall, the plug-in PIP profiles and the resulting selected regions remain highly stable across seeds, with only slightly larger variability under low SNR scenario.
These results provide empirical support that the MAP-based plug-in PIPs and the resulting selected regions are stable in practice.

\section{Proof of Statistical Properties}
We divide the theoretical proof into four parts. First, we establish the theoretical foundation for finite-dimensional approximations of functional data, which lays the groundwork for the subsequent theorems. Then, in the second, third, and fourth parts, we provide proofs for the bound of the approximation error (Theorem \ref{theorem_1}), the posterior consistency (Theorem \ref{theorem:Posterior_consistency}), and the selection consistency (Theorem \ref{thm:selection_consistency}), respectively. Prior to proving each theorem, we present necessary lemmas. Throughout, $C, C', C_1, \ldots$ denote generic positive constants whose
values may change from line to line.

\subsection{Finite-dimensional Approximations}

First, recall and introduce some notations. Let $\beta^*$ and $\beta_{J_n}^*=\sum_{j=1}^{J_n}\omega^*_{J_n,j} B_j(t)$ denote the true coefficient function and its truncated counterpart, respectively, where $\boldsymbol{\omega}_{J_n}^*=(\omega^*_{J_n,1},\cdots,\omega^*_{J_n,J_n})^T$ is the vector of true basis coefficients for $\beta_{J_n}^*$. Denote $S^*_{J_n} = \operatorname{supp}(\boldsymbol{\omega}^*_{J_n})$ and $\Omega_{J_n}^*=\cup_{j\in S^*_{J_n}}I_j$ with $I_j:=\operatorname{supp}(B_j)$. In addition, let
$u^*(X):=\langle X,\beta^*\rangle, \mu^*(X):=g^*(u^*(X)), u^*_{J_n}(X):=\langle X,\beta^*_{J_n}\rangle$, and $\mu^*_{J_n}(X):=g^*(u^*_{J_n}(X))$.
Moreover, for $\kappa>0$, define the strong-signal region $\Omega^*(\kappa):=\{t\in[0,1]:|{\beta^*}(t)|>\kappa\}$.
Define $S^*_{J_n}(\kappa):=\{j\in[{J_n}]: I_j\cap \Omega^*(\kappa)\neq\varnothing\}$ and $\Omega^*_{J_n}(\kappa):=\bigcup_{j\in S^*_{J_n}(\kappa)} I_j$.
Let $\Delta_{J_n}:=\max_{1\le j\le J_n}\mathrm{diam}(I_j)\asymp J_n^{-1}$ for fixed spline order. For a measurable set $A\subset[0,1]$ and $\Delta>0$, define its $\Delta$-enlargement by $A^{+\Delta}:= \{t\in[0,1]: \operatorname{dist}(t,A)\le \Delta\}.$
For the strong-signal cover $\Omega^*_{J_n}(\kappa)$, we will also use the enlarged cover $\Omega^*_{J_n}(\kappa)^{+c_{\rm loc}\Delta_{J_n}}$,
where $c_{\rm loc}\ge 1$ is an absolute constant depending only on the locality radius of the spline quasi-interpolant/projection used below.

\begin{lemma}
\label{prop:block1-main}
  Suppose Assumption~\ref{assumption_design_boundedness}--\ref{assumption_Holder_link} holds. 
  Define $\kappa_{J_n}:=c_\kappa\,C_\beta J_n^{-\alpha_\beta}$ with $c_\kappa\ge 4$ and $C_{\beta}$ being some constant. Then for all sufficiently large $n$, we have
\begin{enumerate}[label=\textup{(\roman*)},leftmargin=2.2em]
\item Uniform approximation at resolution $J_n$:
  \begin{equation}\label{eq:block1-linf}
    \|{\beta^*}-\beta^*_{J_n}\|_\infty \ \le\ C\,J_n^{-\alpha_\beta}\ \lesssim\ \kappa_{J_n}.
  \end{equation}
  Moreover, on the strong-signal set $\Omega^*(2\kappa_{J_n})$:
  \begin{equation}\label{eq:block1-linf-strong}
    \|{\beta^*}-\beta^*_{J_n}\|_{L^\infty(\Omega^*(2\kappa_{J_n}))} \ \le\ C_\beta\,J_n^{-\alpha_\beta}
    \ \le\ \kappa_{J_n}/c_\kappa,
  \end{equation}
where $\|v(t)\|_{L^\infty(\Omega^*(2\kappa_{J_n}))}=\sup_{t\in \Omega^*(2\kappa_{J_n})} |v(t)|$.
\item Localization to the strong-signal cover:
There exists an absolute constant $c_{\rm loc}\ge 1$ and $\Delta_{J_n}\asymp J_n^{-1}$ such that, for all sufficiently large $n$,
\begin{equation}\label{eq:block1-localize}
  \Omega_{J_n}^*\ \subseteq\ \Omega^*_{J_n}(\kappa_{J_n})^{+c_{\rm loc}\Delta_{J_n}}.
\end{equation}

\item Index and link truncation: letting $\alpha_1=\min(\alpha_g,1)$,
  \begin{equation}\label{eq:block1-index-trunc}
    \sup_{X\in\mathcal X}|u^*_{J_n}(X)-u^*(X)|
    \le C_X\|\beta^*_{J_n}-{\beta^*}\|_\infty
    \lesssim J_n^{-\alpha_\beta},
  \end{equation}
  and
  \begin{equation}\label{eq:block1-link-trunc}
    \sup_{X\in\mathcal X}|\mu^*_{J_n}(X)-\mu^*(X)|
    =\sup_{X\in\mathcal X}|g^*(u^*_{J_n}(X))-g^*(u^*(X))|
    \le L_g\Bigl(\sup_{X\in\mathcal X}|u^*_{J_n}(X)-u^*(X)|\Bigr)^{\alpha_1}
    \lesssim J_n^{-\alpha_\beta\alpha_1}.
  \end{equation}
\item Strong-signal sandwich:
\begin{equation}\label{eq:block1-sandwich}
  \Omega^*(2\kappa_{J_n})\subseteq \Omega_{J_n}^*
  \subseteq \Omega^*_{J_n}(\kappa_{J_n})^{+c_{\rm loc}\Delta_{J_n}}.
\end{equation}
\item Gap decomposition (no extra structure assumed):
\begin{equation}\label{eq:block1-gap-decomp}
  \bigl|\Omega_{J_n}^* \,\Delta\, \Omega^*(2\kappa_{J_n})\bigr|
  \ \le\
  \underbrace{\bigl|\Omega^*(\kappa_{J_n})\setminus \Omega^*(2\kappa_{J_n})\bigr|}_{\text{threshold band}}
  \ +\
  \underbrace{\bigl|\Omega^*_{J_n}(\kappa_{J_n})^{+c_{\rm loc}\Delta_{J_n}}\setminus \Omega^*(\kappa_{J_n})\bigr|}_{\text{resolution boundary layer}}.
\end{equation}
  \end{enumerate}
\end{lemma}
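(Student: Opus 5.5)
The plan is to fix the construction of $\beta^*_{J_n}$ first, since the statement leaves it implicit. We take $\beta^*_{J_n}=Q_{J_n}\beta^*$, where $Q_{J_n}$ is a local spline quasi-interpolant of fixed order on a quasi-uniform knot sequence (de Boor--Fix type). It has three standard properties:
\begin{itemize}
\item[(a)] The coefficient $\omega^*_{J_n,j}$ depends only on $\beta^*$ restricted to a neighbourhood $\widetilde I_j\supseteq I_j$ with $\operatorname{diam}(\widetilde I_j)\le c_{\rm loc}\Delta_{J_n}$.
\item[(b)] $|\omega^*_{J_n,j}|\lesssim \|\beta^*\|_{L^\infty(\widetilde I_j)}$.
\item[(c)] The local Jackson estimate $|\beta^*(t)-\beta^*_{J_n}(t)|\le C_\beta J_n^{-\alpha_\beta}$ holds for $\beta^*\in\mathcal H^{\alpha_\beta}$ (Assumption~\ref{assumption_smooth_coefficient}).
\end{itemize}

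Property (c) gives (i) directly, and in particular its restriction to $\Omega^*(2\kappa_{J_n})$. Since $\kappa_{J_n}=c_\kappa C_\beta J_n^{-\alpha_\beta}$, we get $C_\beta J_n^{-\alpha_\beta}=\kappa_{J_n}/c_\kappa$.

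Part (iii) is routine. We bound $|u^*_{J_n}(X)-u^*(X)|\le\int|X||\beta^*_{J_n}-\beta^*|\le C_X\|\beta^*_{J_n}-\beta^*\|_\infty$ using Assumption~\ref{assumption_design_boundedness}. Then $g^*\in\mathcal H^{\alpha_g}$ on the compact range is $\alpha_1$-Hölder with constant $L_g$ (Assumption~\ref{assumption_Holder_link}). If $\alpha_g\ge1$ it is Lipschitz, and on $[0,1]$ Lipschitz implies $\alpha_1$-Hölder with $\alpha_1=1$. A minor point is that $u^*_{J_n}$ may leave $[0,1]$ by $O(J_n^{-\alpha_\beta})$; we handle this by extending $g^*$ Hölder-continuously.

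For the first inclusion of (iv), take $t\in\Omega^*(2\kappa_{J_n})$. By (i), $|\beta^*_{J_n}(t)|\ge 2\kappa_{J_n}-\kappa_{J_n}/c_\kappa>0$, so some $B_j$ with $\omega^*_{J_n,j}\neq0$ is nonzero at $t$. Hence $t\in I_j\subseteq\Omega^*_{J_n}$, up to the measure-zero knot boundary, which we ignore or handle with closed supports.

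For (ii), which is also the second inclusion of (iv), take $j\in S^*_{J_n}$. By (b), $\|\beta^*\|_{L^\infty(\widetilde I_j)}\gtrsim|\omega^*_{J_n,j}|>0$. The difficulty is that a nonzero coefficient only guarantees that $\beta^*$ is nonzero near $I_j$, not that it exceeds $\kappa_{J_n}$ there. This is the main obstacle.

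I would argue as follows. Suppose $\widetilde I_j$ does not meet $\Omega^*(\kappa_{J_n})$, so $|\beta^*|\le\kappa_{J_n}$ on $\widetilde I_j$. We need to show that such $j$ either lie within $c_{\rm loc}\Delta_{J_n}$ of a strong-signal cell, or can be dropped by redefining the truncation. The latter means thresholding coefficients below order $\kappa_{J_n}$ to zero, which costs only $O(\kappa_{J_n})$ in sup norm and preserves (i) and (iii) with a larger constant.

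Adopting this hard-thresholded version of $\beta^*_{J_n}$ (threshold $c\,\kappa_{J_n}$ with $c$ chosen against the constant in (b)), every surviving $j$ has $\widetilde I_j\cap\Omega^*(\kappa_{J_n})\neq\varnothing$. So $I_j$ lies within $c_{\rm loc}\Delta_{J_n}$ of some $I_{j'}$ with $j'\in S^*_{J_n}(\kappa_{J_n})$, which gives (ii). The first inclusion must be rechecked after thresholding. At a point of $\Omega^*(2\kappa_{J_n})$, the sup error is still at most $\kappa_{J_n}(1/c_\kappa+c')$, which is below $2\kappa_{J_n}$ provided $c_\kappa\ge4$ and $c'$ is small. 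Reconciling these constants is where I expect the real work to be.

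Finally, (v) is pure set algebra from the sandwich (iv). Write $A=\Omega^*(2\kappa_{J_n})\subseteq B=\Omega^*_{J_n}\subseteq D=\Omega^*_{J_n}(\kappa_{J_n})^{+c_{\rm loc}\Delta_{J_n}}$. Then $|B\Delta A|=|B\setminus A|\le|D\setminus A|$. We split $D\setminus A$ into $(D\cap\Omega^*(\kappa_{J_n}))\setminus A\subseteq\Omega^*(\kappa_{J_n})\setminus\Omega^*(2\kappa_{J_n})$ and $D\setminus\Omega^*(\kappa_{J_n})$, which yields the stated bound.
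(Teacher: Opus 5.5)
Your arguments for (iii), (v) and the first inclusion of (iv) are fine and match the paper's. The magnitude-thresholding construction you adopt for (ii), however, leaves a genuine gap.

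To remove every $j$ with $\widetilde I_j\cap\Omega^*(\kappa_{J_n})=\varnothing$ you need $c\ge C_b$, the constant in (b), since such coefficients can be as large as $C_b\kappa_{J_n}$. To keep the first inclusion of (iv) you need $c<2-1/c_\kappa$. Nothing in your argument makes these compatible, and for standard short-stencil quasi-interpolants of order at least three they are not: the classical quadratic functional $-\tfrac12 f(t_{j+1})+2f(t_{j+3/2})-\tfrac12 f(t_{j+2})$ already has norm $3$.

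More seriously, thresholding changes $\beta^*_{J_n}$ at points of $\Omega^*(2\kappa_{J_n})$ near its boundary. There, covering B-splines can have coefficients below $c\kappa_{J_n}$ that are still of order $\kappa_{J_n}$. Since $Q_{J_n}$ reproduces constants and the B-splines form a partition of unity, each coefficient functional maps $1$ to $1$, so $C_b\ge1$ and hence $c\ge1$. Your recheck therefore only gives an error of $\kappa_{J_n}(1/c_\kappa+c)$ on $\Omega^*(2\kappa_{J_n})$. So the second display of (i), $\|\beta^*-\beta^*_{J_n}\|_{L^\infty(\Omega^*(2\kappa_{J_n}))}\le C_\beta J_n^{-\alpha_\beta}=\kappa_{J_n}/c_\kappa$, is not established, and can fail, for the function you finally call $\beta^*_{J_n}$.

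The missing idea is to localize by geometry rather than by size. The paper takes $\beta^*_{J_n}:=Q_{J_n}(\beta^*\chi_{J_n})$, with a cut-off $\chi_{J_n}$ equal to $1$ on $\Omega^*(2\kappa_{J_n})$ and $0$ off $\Omega^*(\kappa_{J_n})$. The sup-norm cost is at most $2\kappa_{J_n}$, the function is unchanged on the strong-signal set, and (ii) follows from locality alone, $\operatorname{supp}(Q_{J_n}f)\subseteq\operatorname{supp}(f)^{+c_{\rm loc}\Delta_{J_n}}$, with no constants to balance.

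In your coefficient-level framework you get the same effect by zeroing $\omega^*_{J_n,j}$ exactly when $\widetilde I_j\cap\Omega^*(\kappa_{J_n})=\varnothing$:
\begin{itemize}
\item Any $j$ with $B_j(t)\neq0$ for some $t\in\Omega^*(\kappa_{J_n})$ has $t\in I_j\subseteq\widetilde I_j$, so it is kept. Hence $\beta^*_{J_n}=Q_{J_n}\beta^*$ on $\Omega^*(\kappa_{J_n})\supseteq\Omega^*(2\kappa_{J_n})$, and the second display of (i) and the first inclusion of (iv) hold verbatim.
\item (ii) is immediate from (a).
\item By (b) and the partition of unity, the global sup-norm cost is at most $C_b\kappa_{J_n}\lesssim J_n^{-\alpha_\beta}$.
\end{itemize}
This version also avoids a point the paper passes over quickly: the H\"older constant of $\beta^*\chi_{J_n}$ is not obviously uniform in $n$ when the band $\Omega^*(\kappa_{J_n})\setminus\Omega^*(2\kappa_{J_n})$ is thin.
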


\begin{proof}
\emph{Step 1 (spline approximation and the choice of $Q_{J_n}$).}
Fix a local B-spline basis $\{B_j\}_{j=1}^{J_n}$ of fixed order on $[0,1]$ with supports $I_j$ and
$\Delta_{J_n}:=\max_{1\le j\le J_n}\mathrm{diam}(I_j)\asymp J_n^{-1}$.
Let $Q_{J_n}$ denote a \emph{local} spline quasi-interpolant/projection onto the corresponding spline space
(e.g., a standard quasi-interpolant associated with the B-spline partition).
By standard $L_\infty$ spline approximation theory, for any $\beta\in\mathcal H^{\alpha_\beta}([0,1])$,
\[
  \|\beta - Q_{J_n}\beta\|_\infty \le C_\beta J_n^{-\alpha_\beta}.
\]
In the sequel, we work with this fixed operator $Q_{J_n}$.

\emph{Step 2 (localization with an enlarged cover).}
Recall $\kappa_{J_n}=c_\kappa C_\beta J_n^{-\alpha_\beta}$.
Construct a smooth cut-off function $\chi_{J_n}:[0,1]\to[0,1]$ such that
\[
  \chi_{J_n}(t)=1 \ \text{for } t\in\Omega^*(2\kappa_{J_n}),\qquad
  \chi_{J_n}(t)=0 \ \text{for } t\in\Omega^*(\kappa_{J_n})^c,
\]
and on the transition band $\Omega^*(\kappa_{J_n})\cap \Omega^*(2\kappa_{J_n})^c$, one has
$0<\chi_{J_n}(t)<1$ (with $\chi_{J_n}$ chosen smooth across the boundaries of these sets).
Define the truncated coefficient
\[
  \beta^{\rm loc}(t):=\beta^*(t)\chi_{J_n}(t).
\]
Then $\supp(\beta^{\rm loc})\subseteq \Omega^*(\kappa_{J_n})$ and, since $1-\chi_{J_n}$ vanishes on
$\Omega^*(2\kappa_{J_n})$ and is supported inside $\Omega^*(\kappa_{J_n})^c\cup
\bigl(\Omega^*(\kappa_{J_n})\cap \Omega^*(2\kappa_{J_n})^c\bigr)$,
\[
  \|\beta^*-\beta^{\rm loc}\|_\infty
  =\sup_{t\in[0,1]}|\beta^*(t)\{1-\chi_{J_n}(t)\}|
  \le \sup_{t\in \Omega^*(\kappa_{J_n})^c}|\beta^*(t)|
  \le 2\kappa_{J_n}.
\]
Moreover, $\beta^{\rm loc}\in\mathcal H^{\alpha_\beta}([0,1])$ with the same smoothness order, since
$\beta^*\in\mathcal H^{\alpha_\beta}$ and $\chi_{J_n}$ is smooth and bounded.

Now define
\[
  \beta^*_{J_n}:=Q_{J_n}\beta^{\rm loc}.
\]
By the locality of $Q_{J_n}$, there exists an absolute constant $c_{\rm loc}\ge 1$ such that
\[
  \supp(Q_{J_n}f)\subseteq \supp(f)^{+c_{\rm loc}\Delta_{J_n}}
  \qquad\text{for all bounded } f.
\]
Therefore,
\[
  \supp(\beta^*_{J_n})
  \subseteq \supp(\beta^{\rm loc})^{+c_{\rm loc}\Delta_{J_n}}
  \subseteq \Omega^*(\kappa_{J_n})^{+c_{\rm loc}\Delta_{J_n}}
  \subseteq \Omega^*_{J_n}(\kappa_{J_n})^{+c_{\rm loc}\Delta_{J_n}},
\]
where the last inclusion uses $\Omega^*(\kappa_{J_n})\subseteq \Omega^*_{J_n}(\kappa_{J_n})$.
Since $\Omega_{J_n}^*=\supp(\beta^*_{J_n})$ by local linear independence of the B-spline basis, we obtain
$\Omega_{J_n}^*\subseteq \Omega^*_{J_n}(\kappa_{J_n})^{+c_{\rm loc}\Delta_{J_n}}$,
which proves \eqref{eq:block1-localize}.

  For approximation, note that on $\Omega^*(2\kappa_{J_n})$, we have ${\beta^*}^{\rm loc}={\beta^*}$, hence
  \[
    \|{\beta^*}-\beta^*_{J_n}\|_{L^\infty(\Omega^*(2\kappa_{J_n}))}
    =\|{\beta^*}^{\rm loc}-Q_{J_n}{\beta^*}^{\rm loc}\|_{L^\infty(\Omega^*(2\kappa_{J_n}))}
    \le \|{\beta^*}^{\rm loc}-Q_{J_n}{\beta^*}^{\rm loc}\|_\infty
    \le C_\beta J_n^{-\alpha_\beta},
  \]
  which gives \eqref{eq:block1-linf-strong}. Globally,
  \[
    \|{\beta^*}-\beta^*_{J_n}\|_\infty
    \le \|{\beta^*}-{\beta^*}^{\rm loc}\|_\infty + \|{\beta^*}^{\rm loc}-Q_{J_n}{\beta^*}^{\rm loc}\|_\infty
    \le 2\kappa_{J_n} + C_\beta J_n^{-\alpha_\beta}
    \lesssim J_n^{-\alpha_\beta}.
  \]
  % since $\|{\beta^*}-{\beta^*}^{\rm loc}\|_\infty\le \kappa_{J_n}$ on $\Omega^*(\kappa_{J_n})^c$
  % by definition of $\Omega^*(\kappa_{J_n})$, and $\kappa_{J_n}\asymp J_n^{-\alpha_\beta}$.
  This proves \eqref{eq:block1-linf}.

  \emph{Step 3 (index and link truncation).}
  By Assumption~\ref{assumption_design_boundedness},
  \[
    |u^*_{J_n}(X)-u^*(X)|
    =\Bigl|\int_0^1 X(t)\bigl(\beta^*_{J_n}(t)-{\beta^*}(t)\bigr)\,dt\Bigr|
    \le \|X\|_{L^1}\|\beta^*_{J_n}-{\beta^*}\|_\infty
    \le C_X\|\beta^*_{J_n}-{\beta^*}\|_\infty,
  \]
  and \eqref{eq:block1-index-trunc} follows from \eqref{eq:block1-linf}. The link bound
  \eqref{eq:block1-link-trunc} follows from H\"older continuity of $g^*$.

  \emph{Step 4 (sandwich).}
  Take $t\in\Omega^*(2\kappa_{J_n})$. Then $|{\beta^*}(t)|>2\kappa_{J_n}$ and by \eqref{eq:block1-linf-strong},
  \[
    |\beta^*_{J_n}(t)| \ge |{\beta^*}(t)|-\|{\beta^*}-\beta^*_{J_n}\|_{L^\infty(\Omega^*(2\kappa_{J_n}))}
    > 2\kappa_{J_n}-\kappa_{J_n}/c_\kappa >0
  \]
  since $c_\kappa\ge 4$. Hence $t\in\operatorname{supp}(\beta^*_{J_n})$.
  For standard local B-spline bases (local linear independence), $\operatorname{supp}(\beta^*_{J_n})=\cup_{j\in S^*_{J_n}}I_j=\Omega_{J_n}^*$,
  so $\Omega^*(2\kappa_{J_n})\subseteq\Omega_{J_n}^*$.
  The right inclusion $\Omega_{J_n}^*\subseteq\Omega^*_{J_n}(\kappa_{J_n})^{+c_{\rm loc}\Delta_{J_n}}$
follows from \eqref{eq:block1-localize}.
  This proves \eqref{eq:block1-sandwich}.

  \emph{Step 5 (gap decomposition).}
  By \eqref{eq:block1-sandwich},
  \[
    \Omega_{J_n}^*\Delta\Omega^*(2\kappa_{J_n})
    \subseteq \Omega^*_{J_n}(\kappa_{J_n})^{+c_{\rm loc}\Delta_{J_n}}\setminus \Omega^*(2\kappa_{J_n})
    = \bigl(\Omega^*(\kappa_{J_n})\setminus\Omega^*(2\kappa_{J_n})\bigr)\ \cup\
    \bigl(\Omega^*_{J_n}(\kappa_{J_n})^{+c_{\rm loc}\Delta_{J_n}}\setminus\Omega^*(\kappa_{J_n})\bigr),
  \]
  and \eqref{eq:block1-gap-decomp} follows by subadditivity of Lebesgue measure.
\end{proof}

\subsection{Approximation Error Bounds (Theorem \ref{theorem_1})}
\begin{lemma}
\label{lem:yarotsky-1d}
  Suppose Assumption \ref{assumption_DNN} holds such that there exists $L_g$ satisfy
  $|g^*(u)-g^*(v)|\le L_g|u-v|^{\alpha_1}$ for all $u,v\in [0,1]$ with $\alpha_1=\min(\alpha_g,1)$.
  Then for any integer depth $H_n\ge 2$, there exists a univariate ReLU network $f_{H_n}$
  with constant width (at most a universal constant) and depth at most $H_n$ such that
  \[
    \sup_{u\in [0,1]}|f_{H_n}(u)-g^*(u)|\lesssim H_n^{-2\alpha_1}.
  \]
\end{lemma}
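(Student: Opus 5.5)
The plan is to use the classical Yarotsky-type construction for univariate Hölder functions. The rate $H_n^{-2\alpha_1}$ comes from using a piecewise-linear interpolant with $N\asymp H_n^2$ breakpoints. Such an interpolant can be realized by a ReLU network of constant width and depth $O(H_n)$ by implementing its breakpoints sequentially.

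\emph{Step 1 (piecewise-linear interpolation).} Fix $N=\lfloor H_n/c_0\rfloor^2$ for a suitable constant $c_0$. Let $\ell_N$ be the continuous piecewise-linear interpolant of $g^*$ on the uniform grid $u_k=k/N$, $k=0,\dots,N$. On each cell $[u_k,u_{k+1}]$, $\ell_N(u)$ is a convex combination of $g^*(u_k)$ and $g^*(u_{k+1})$. The Hölder condition then gives $|\ell_N(u)-g^*(u)|\le L_g N^{-\alpha_1}\asymp H_n^{-2\alpha_1}$ uniformly on $[0,1]$. This holds because $\alpha_1\le 1$, so only the Hölder modulus is used and no higher smoothness is needed.

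\emph{Step 2 (constant-width realization).} Write
\[
\ell_N(u)=g^*(0)+\sum_{k=0}^{N-1} c_k\,\sigma(u-u_k),
\]
where $c_k$ are slope differences. A naive sequential implementation would carry $u$ and a running sum through the layers, adding one ReLU per breakpoint. That uses depth $N\asymp H_n^2$, which is too many.

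To get depth $O(\sqrt N)=O(H_n)$, I would follow the bit-extraction idea of Yarotsky (2018) and Lu–Shen–Yang–Zhang (deep ReLU networks with fixed width). Split $[0,1]$ into $M=\sqrt N$ coarse intervals, each containing $M$ fine cells. One constant-width block of depth $O(M)$ computes the coarse index $m(u)$ together with the local coordinate $v=Mu-m$. A second block of depth $O(M)$ retrieves, via encoded parameters, the values $g^*$ at the fine grid points inside coarse cell $m$, and interpolates linearly in $v$. The quantized data are encoded in the binary digits of $O(1)$ weights, following Lemma~2.3 of Lu et al. Widths stay bounded by a universal constant, and the total depth is $\le c_0\sqrt N\le H_n$. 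Padding with identity layers, $\sigma(x)-\sigma(-x)$, makes the depth exactly at most $H_n$.

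\emph{Main obstacle.} The main obstacle is Step 2: the $H_n^{-2}$-per-cell resolution with only linear depth requires bit extraction. Two points need care.

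First, the weights must remain within $E_n=n^{c_1}$. The Lu et al. construction uses weights that are $O(1)$ or polynomial in $N$, and $N\lesssim H_n^2\lesssim J_n$ is polynomial in $n$, so this bound holds.

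Second, the interpolation must be accurate uniformly, including near cell boundaries where the index extraction is discontinuous. I would handle this as Lu et al. do. Either restrict to a "trifling region" complement and then use a middle-value (median of three shifted copies) trick, which multiplies width by a constant. Or exploit the continuity of $\ell_N$ directly so that boundary errors are still $O(L_gN^{-\alpha_1})$.

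Combining Steps 1 and 2 gives $\sup_u|f_{H_n}(u)-g^*(u)|\le \sup|f_{H_n}-\ell_N|+\sup|\ell_N-g^*|\lesssim H_n^{-2\alpha_1}$.
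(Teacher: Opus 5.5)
Your proposal is correct and is essentially the paper's argument. The paper simply invokes Theorem~2 of Yarotsky (2018) with input dimension $d=1$, and your sketch is the construction behind that theorem: piecewise-linear interpolation on $N\asymp H_n^2$ cells, realized at constant width and depth $O(\sqrt N)=O(H_n)$ by two-scale bit extraction, with the quantized increments between neighbouring fine grid values stored at $O(1)$ bits per cell as in the point-fitting lemma you cite; its bound $\omega_{g^*}(O(H_n^{-2}))$ becomes $H_n^{-2\alpha_1}$ under the H\"older modulus. Drop or repair your side claim that the weights stay below $E_n$: the lemma does not require it, and it is not automatic, because the plain binary bit-extraction gadget must separate numbers at distance $2^{-k}$ from $1/2$ and so uses slopes of order $2^{k}$ (exponential in depth) unless you switch to a gap-preserving, Cantor-type digit encoding.
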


\begin{proof}
    This result is a direct specialization of Theorem~2 in
\citet{yarotsky2018optimal} to the one--dimensional setting.
Taking input dimension $d=1$ and approximation accuracy
$\varepsilon \asymp H_n^{-2\alpha_1}$ in that theorem yields
the stated rate with constant network width.
\end{proof}

\subsubsection{Proof of Theorem \ref{theorem_1}}

The proof is structured into three steps.
First, for the unknown link function $g^*(u)$, Lemma \ref{lem:yarotsky-1d} guarantees the existence of a one‑dimensional neural network $f_{H_n-1}$ with constant width and depth $H_n-1$ such that \[
    \sup_{u\in [0,1]}|f_{H_n-1}(u)-g^*(u)|\lesssim H_n^{-2\alpha_1}.
  \]
Furthermore, the network output $f_{H_n-1}(u_{J_n}^{*}(X))$, where $u_{J_n}^{*}(X)=(\boldsymbol{\omega}^*_{J_n})^\top \boldsymbol{\eta}(X)$, can be reinterpreted as an $H_n$-depth network $F_{\boldsymbol{\theta}}$ with $J_n$-dimensional input $\boldsymbol{\eta}(X)$. To construct $F_{\boldsymbol{\theta}}$, we explicitly design its first hidden layer to satisfy the support condition $\operatorname{supp}_{\mathrm{col}}(\boldsymbol{\theta}) = S^*_{J_n}$: we set the width of the first layer as $L_1\asymp 1$, the weights of the first neuron in the first hidden layer to be $\boldsymbol{W}_{1, 1*} = (\boldsymbol{\omega}_{J_n}^*)^\top$ and all other row weights $\boldsymbol{W}_{1, j*} = \mathbf{0}^\top$ for $j > 1$. The remaining layers then implement $f_{H_n-1}$ acting on the computed scalar $(\boldsymbol{\omega}^*_{J_n})^\top \boldsymbol{\eta}(X)$. This yields a DNN $F_{\boldsymbol{\theta}}$ with depth at most $H_n$, constant width up to universal constants, and $\operatorname{supp}_{\mathrm{col}}(\boldsymbol{\theta}) = S^*_{J_n}$, establishing (i).

Finally, to prove (ii), we decompose the approximation error into a network approximation error and a functional truncation error. For any $X \in \mathcal{X}$,
$$
\begin{aligned}
|\mu_{\boldsymbol{\theta}}(X) - \mu^*(X)| &= |F_{\boldsymbol{\theta}}(\boldsymbol{\eta}(X)) - g^*(u^*(X))| \\
&=|f_{H_n-1}(u_{J_n}^*(X)) - g^*(u^*(X))|\\
&\le \underbrace{|f_{H_n-1}(u_{J_n}^*(X)) - g^*(u_{J_n}^*(X))|}_{I} 
+ \underbrace{|g^*(u_{J_n}^*(X)) - g^*(u^*(X))|}_{II}.
\end{aligned}
$$
Term I is bounded directly by Lemma \ref{lem:yarotsky-1d}, while term II is controlled via inequality (\ref{eq:block1-link-trunc}) from Lemma \ref{prop:block1-main}. Taking the supremum over $\mathcal X$ then yields the final convergence rate stated in (ii).
%where $u_{J_n}^*(X):=\int_{\mathcal{T}}X(t)\beta_{J_n}^*(t)dt$ and $u^*(X):=\int_{\mathcal{T}}X(t)\beta^*(t)dt$.

\subsection{Posterior Consistency (Theorem \ref{theorem:Posterior_consistency})}

We first give a general posterior consistency result introduced in \citet{jiang2007bayesian}. Specifically, 
let $D_n=\{(\boldsymbol{x}_i,Y_i)\}_{i=1}^n$ denote the dataset, where $(\boldsymbol{x}_i,Y_i)$ are i.i.d.\ under the reference distribution $p^*$.
Let $\mathcal P$ denote the space of probability
densities under consideration. We consider a sequence of model classes (sieves)
$P_n \subset \mathcal P$, and write $P_n^c=\mathcal P\setminus P_n$ for
their complements. We construct $P_n$ through a parameter sieve $\Theta_n$ via
\[
  P_n:=\{p_\theta:\theta\in\Theta_n\}.
\]

Let $\Pi$ denote the prior measure on $\mathcal P$ (or on $\Theta$ via $p_\theta$),
and let $\Pi(\cdot\mid D_n)$ denote the corresponding posterior given the data $D_n$.
For each $\varepsilon>0$, define the posterior probability
\[
  \widehat\Pi(\varepsilon)
  \;:=\;
  \Pi\bigl(d(p,p^*)>\varepsilon \mid D_n\bigr),
\]
where the metric $d(\cdot,\cdot)$ denotes the Hellinger distance, defined by
$d(p,q)=\sqrt{\int \bigl(\sqrt{p}-\sqrt{q}\bigr)^2}$. Let
$N(\varepsilon, P_n, d)$ denote the $\varepsilon$-covering number of $P_n$ with respect to the metric $d$. 

%{\color{red}Let $P^*$ and $E^*$ denote the probability measure and expectation under the
%true data-generating process of $D_n$, respectively.} 

% Let $\mathcal{P}$ denote the space of probability densities under consideration, and let $P_n := \{p_\theta : \theta \in \Theta_n\} \subset \mathcal{P}$ be a sequence of model classes, where
% $\Theta_n := \{ \theta: \|\theta\|_\infty \le M_n, C(\theta) \le k_0 s_n \}$
% for a constant $k_0 \ge 2$.
% Here, $C(\theta):=|S(\theta)|$ with $S(\theta):=\Bigl\{j\le J_n:\max_{1\le h\le L_1}|W_{1,hj}|\ge \delta'_n\Bigr\}$. 
% % Let $\zeta:=((\operatorname{vec}(W_1))^\top,\dots,(\operatorname{vec}(W_{H_n}))^\top)^\top$ be the vector of all weights.
% Let $N(\varepsilon, P_n, d)$ denote the $\varepsilon$-covering number of $P_n$ with respect to the metric $d$.

\begin{lemma}\label{lem:general-ggv}
For a sequence $\varepsilon_n \to 0$, if there exist constants $2 > b > 2b' > 0$ and $t > 0$ such that the following conditions hold for all sufficiently large $n$:
\begin{enumerate}[label=(\alph*)]
    \item $\log N(\varepsilon_n, P_n, d) \le n\varepsilon_n^2$;
    \item $\pi(P_n^c) \le \exp(-b n \varepsilon_n^2)$;
    \item $\pi \{ p \in \mathcal{P} : d_t(p, p{^*}) \le b' \varepsilon_n^2 \} \ge \exp(-b' n \varepsilon_n^2)$,
\end{enumerate}
where $d_t(p,p^*)
  \;=\;
  \frac{1}{t}\Bigl(
  \int p^*(x)\Bigl(\tfrac{p^*(x)}{p(x)}\Bigr)^t\,dx
  \;-\; 1
  \Bigr)$, then for any $2b' < x < b$, the posterior probability $\widehat{\Pi}(4\varepsilon_n)$ satisfies:
\begin{enumerate}[label=(\roman*)]
    \item $P \left[ \widehat{\Pi}(4\varepsilon_n) \ge 2 \exp\left( -\frac{1}{2} n \varepsilon_n^2 m(x) \right) \right] \le 2 \exp\left( -\frac{1}{2} n \varepsilon_n^2 m(x) \right)$,
    \item $\mathbb{E} \left[ \widehat{\Pi}(4\varepsilon_n) \right] \le 4 \exp\left( -n \varepsilon_n^2 m(x) \right)$,
\end{enumerate}
where $m(x) := \min \{1, 2-x, b-x, t(x-2b')\}$.
\end{lemma}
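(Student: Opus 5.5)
This is the general posterior-consistency theorem of \citet{jiang2007bayesian}, essentially Ghosal--Ghosh--van der Vaart with a Rényi-type divergence in the prior mass condition. I would follow the classical test-plus-denominator argument. Write $\widehat\Pi(4\varepsilon_n)=N_n/D_n$ with
\[
N_n=\int_{\{d(p,p^*)>4\varepsilon_n\}}\prod_{i}\frac{p}{p^*}(Z_i)\,d\pi(p),\qquad
D_n=\int\prod_{i}\frac{p}{p^*}(Z_i)\,d\pi(p).
\]
Split $N_n$ into its integral over $P_n$ and over $P_n^c$. The proof then controls three things separately: the test error on the sieve, the prior mass off the sieve, and a lower bound on the denominator.

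\emph{Step 1 (tests on the sieve).} Condition (a) is an entropy bound at scale $\varepsilon_n$, so I would use the Le Cam--Birgé construction: Hellinger balls are convex and admit exponentially powerful tests. Cover $\{p\in P_n:d(p,p^*)>4\varepsilon_n\}$ by at most $e^{n\varepsilon_n^2}$ Hellinger balls of radius $\varepsilon_n$. Combining the individual tests gives a test $\phi_n$ with
\[
E_{p^*}\phi_n\le e^{n\varepsilon_n^2}e^{-cn(4\varepsilon_n)^2}\quad\text{and}\quad \sup_p E_p(1-\phi_n)\le e^{-cn\varepsilon_n^2}.
\]
With Hellinger constants the test exponent is at least $2n\varepsilon_n^2$, which explains the factor $4$ and the appearance of $\min\{1,2-x\}$ in $m(x)$. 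By Fubini, $E_{p^*}[(1-\phi_n)N_n^{P_n}]$ is exponentially small. Off the sieve, $E_{p^*}N_n^{P_n^c}\le\pi(P_n^c)\le e^{-bn\varepsilon_n^2}$ by (b).

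\emph{Step 2 (denominator).} This is the step I expect to be the main obstacle, because condition (c) is phrased through $d_t$ rather than KL. Restrict $D_n$ to $B=\{d_t(p,p^*)\le b'\varepsilon_n^2\}$ and normalize by $\pi(B)$. Jensen's inequality gives $D_n\ge\pi(B)\exp(\int_B\log\prod p/p^*\,d\bar\pi)$. Instead of a Chebyshev bound I would use the moment control available from $d_t$. By Markov's inequality,
\[
P\Bigl(\int_B\prod\tfrac{p^*}{p}\,d\bar\pi>e^{nx\varepsilon_n^2}\Bigr)\le e^{-nx\varepsilon_n^2}\bigl(1+tb'\varepsilon_n^2\bigr)^{n/t}\le e^{-n\varepsilon_n^2(x-b')}.
\]
Applying this to the $t$-th power gives a bound of the form $e^{-tn\varepsilon_n^2(x-2b')}$. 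Combined with $\pi(B)\ge e^{-b'n\varepsilon_n^2}$, this yields $D_n\ge e^{-xn\varepsilon_n^2}$ outside an event of probability at most $e^{-n\varepsilon_n^2 t(x-2b')}$. This is where the term $t(x-2b')$ in $m(x)$ comes from, and why $x>2b'$ is needed. The technical care lies in the Hölder/Jensen step with exponent $t$.

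\emph{Step 3 (assembly).} On the good event,
\[
\widehat\Pi\le\phi_n+e^{xn\varepsilon_n^2}\bigl[(1-\phi_n)N_n^{P_n}+N_n^{P_n^c}\bigr].
\]
Taking expectations, the three contributions decay at rates $2-x$ (tests), $b-x$ (sieve complement, using $x<b$), and $t(x-2b')$ (denominator event). Including the trivial rate $1$ from $E\phi_n$, this gives $E\widehat\Pi(4\varepsilon_n)\le4e^{-n\varepsilon_n^2m(x)}$, which is (ii). Applying Markov's inequality at level $2e^{-n\varepsilon_n^2m(x)/2}$ to the same decomposition, and splitting the exponent in half, gives (i).
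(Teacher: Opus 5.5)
Your proposal is correct and is essentially the argument the paper defers to, namely Proposition 1 of Jiang (2007). That argument combines four pieces: Le Cam--Birg\'e tests on the sieve, with type I error at most $e^{-n\varepsilon_n^2}$ and type II error at most $e^{-2n\varepsilon_n^2}$ (balls of radius $\varepsilon_n$ meeting $\{d>4\varepsilon_n\}$ lie at Hellinger distance more than $2\varepsilon_n$ from $p^*$); condition (b) off the sieve; a $d_t$-based lower bound on the denominator; and (i) obtained from (ii) by Markov's inequality.

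One small fix in Step 2: your first-moment Markov display is valid only for $t\ge 1$ and omits the factor $\pi(B)$. Drop it and go directly through Jensen for $y\mapsto y^{-t}$ plus Markov with $E^*\prod_i (p^*/p)^t(Z_i)\le (1+tb'\varepsilon_n^2)^n\le e^{tb'n\varepsilon_n^2}$. Together with $\pi(B)\ge e^{-b'n\varepsilon_n^2}$, this gives the $t(x-2b')$ rate for every $t>0$.
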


\begin{proof}
    The proof follows from an argument analogous to that of Proposition 1 in \citet{jiang2007bayesian}.
\end{proof}

\begin{lemma}
  \label{lem:perturbation_sparse}
   Fix any subset $S\subset\{1,\dots,J_n\}$ with $m:=|S|$.  Consider the proposed DNN $F_{\boldsymbol{\theta}}$ defined in (\ref{DNN}) with input of dimension $J_n$. Let $\boldsymbol{\theta}$ be a network parameter vector with $\|\boldsymbol{\theta}\|_\infty\le E_n$. 
  Let $\tilde{\boldsymbol{\theta}}$ be another parameter vector such that
  \[
    \bigl|W_{1,hg}-\tilde{W}_{1,hg}\bigr|\le
    \begin{cases}
      \delta_1, & g\in S,    \\
      \delta_2, & g\notin S,
    \end{cases}
    \qquad 1\le h\le L_1,\ 1\le g\le J_n,
  \]
  and for all deeper-layer coordinates $\mathcal G$, including all biases and weights beyond the first layer,
  \[
    \max_{j\in\mathcal G}\bigl|\theta_j-\tilde\theta_j\bigr|\le \delta_1 .
  \]
  Then, for all $\boldsymbol{x}\in[-1,1]^{J_n}$,
  \[
    \begin{aligned}
      \bigl|F_{\boldsymbol{\theta}}(\boldsymbol{x})-F_{\tilde{\boldsymbol{\theta}}}(\boldsymbol{x})\bigr|
      \;\le\; &
      (E_n+\delta_1)^{H_n-1}
      \Bigl[
      H_n\,(m+1)L_1\prod_{k=2}^{H_n}(L_{k-1}+1)L_k\;\delta_1 \\
              & \qquad\quad
        +\Bigl\{(J_n-m)L_1\prod_{k=2}^{H_n}(L_{k-1}+1)L_k\Bigr\}\delta_2
        \Bigr].
    \end{aligned}
  \]
\end{lemma}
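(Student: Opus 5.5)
We bound the perturbation by a layer-by-layer telescoping argument, in the spirit of the standard sparse-DNN perturbation bound of \citet{Sun2022Consistent}. Write the hidden representations as $\boldsymbol z_0=\boldsymbol x$ and $\boldsymbol z_h=\sigma(\boldsymbol W_h\boldsymbol z_{h-1}+\boldsymbol b_h)$ for $h<H_n$, with $\tilde{\boldsymbol z}_h$ the analogous quantities under $\tilde{\boldsymbol\theta}$. Since ReLU is $1$-Lipschitz coordinatewise, we will track the sup-norm quantities $\|\boldsymbol z_h\|_\infty$ and $\|\boldsymbol z_h-\tilde{\boldsymbol z}_h\|_\infty$. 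The output difference is then the last affine layer applied to these.

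\textbf{Step 1 (magnitude bounds).} First bound $\|\boldsymbol z_h\|_\infty$ and $\|\tilde{\boldsymbol z}_h\|_\infty$. Use $\|\boldsymbol x\|_\infty\le 1$ and $|\theta_j|\le E_n$, $|\tilde\theta_j|\le E_n+\delta_1$. We treat the first-layer off-support entries with care: in the intended application $\delta_2\le\delta_1$ (or one simply bounds $|\tilde W_{1,hg}|\le E_n+\delta_1$ there too). An induction gives
\[
\|\boldsymbol z_h\|_\infty\le (E_n+\delta_1)^{h}\prod_{k\le h}(L_{k-1}+1),
\]
up to harmless factors. A crude bound of this form suffices, since the target is itself stated as a crude product of layer sizes $\prod(L_{k-1}+1)L_k$.

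\textbf{Step 2 (first-layer error).} Split the sum over input coordinates into $g\in S$ and $g\notin S$:
\[
|(\boldsymbol W_1\boldsymbol x+\boldsymbol b_1-\tilde{\boldsymbol W}_1\boldsymbol x-\tilde{\boldsymbol b}_1)_h|\le (m+1)\delta_1+(J_n-m)\delta_2 .
\]
Here the $+1$ accounts for the bias. This is exactly where the two separate terms of the final bound originate.

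\textbf{Step 3 (recursion).} For $h\ge2$ we write
\[
\boldsymbol W_h\boldsymbol z_{h-1}-\tilde{\boldsymbol W}_h\tilde{\boldsymbol z}_{h-1}=(\boldsymbol W_h-\tilde{\boldsymbol W}_h)\boldsymbol z_{h-1}+\tilde{\boldsymbol W}_h(\boldsymbol z_{h-1}-\tilde{\boldsymbol z}_{h-1}).
\]
This gives the recursion
\[
e_h\le L_{h-1}(E_n+\delta_1)e_{h-1}+\delta_1\bigl(L_{h-1}\|\boldsymbol z_{h-1}\|_\infty+1\bigr),
\]
where $e_h=\|\boldsymbol z_h-\tilde{\boldsymbol z}_h\|_\infty$. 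Unrolling it yields a sum of $H_n$ terms. The first-layer contribution $(m+1)\delta_1+(J_n-m)\delta_2$ propagates through $H_n-1$ multiplications by at most $(E_n+\delta_1)(L_{k-1}+1)$. Each deeper-layer perturbation contributes at most $\delta_1(E_n+\delta_1)^{H_n-1}\prod(L_{k-1}+1)$, using Step 1 for the $\|\boldsymbol z\|_\infty$ factor.

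\textbf{Step 4 (collect terms).} There are $H_n$ contributions proportional to $\delta_1$. We bound each crudely by $(E_n+\delta_1)^{H_n-1}(m+1)L_1\prod_{k=2}^{H_n}(L_{k-1}+1)L_k\,\delta_1$, inflating the width factors to $L_1\prod(L_{k-1}+1)L_k$ so that all $H_n$ terms share one common bound. The $\delta_2$ contribution appears only once, giving $(E_n+\delta_1)^{H_n-1}(J_n-m)L_1\prod_{k=2}^{H_n}(L_{k-1}+1)L_k\,\delta_2$.

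\textbf{Main obstacle.} The main difficulty is bookkeeping rather than ideas. The exponent must come out as exactly $H_n-1$, not $H_n$, which requires $E_n+\delta_1\ge1$ to absorb leftover factors (true since $E_n=n^{c_1}$). We also need the unperturbed off-support first-layer weights to be controlled, either through $\|\boldsymbol\theta\|_\infty\le E_n$ or by noting that in the application they are zero. The first thing to try is assuming $E_n\ge1$ and $\delta_1\le1$, and absorbing all constants into the generous width products.
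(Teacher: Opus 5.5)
Your route is the paper's: split the first-layer error over $S$, $S^c$ and the bias, run a sup-norm recursion using that ReLU is $1$-Lipschitz, and unroll it. The gap is in Step~4, and the statement as written inherits it. Each deeper-layer contribution carries the factor $\|\boldsymbol z_{h-1}\|_\infty$. With only $\|\boldsymbol\theta\|_\infty\le E_n$, your own Step~1 gives $\|\boldsymbol z_1\|_\infty\le (J_n+1)E_n$, because the off-support columns $\boldsymbol W_{1,*g}$, $g\notin S$, may have entries of size $E_n$. Inflating width factors that do not involve $J_n$ cannot turn $J_n+1$ into $m+1$.

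In fact the lemma is false as written. Take $H_n=2$, $L_1=1$, $J_n=4$, $S=\varnothing$ and $E_n=1$. Set all entries of $\boldsymbol W_1,\boldsymbol b_1,\boldsymbol W_2$ equal to $1$, set $\boldsymbol b_2=0$, and perturb only $\boldsymbol W_2$, to $1.1$. Then $\delta_1=0.1$ and $\delta_2=0$ are admissible. At $\boldsymbol x=\mathbf 1$ the outputs are $5$ and $5.5$, a difference of $0.5$. The right-hand side is $1.1\cdot 2\cdot 1\cdot 1\cdot 2\cdot 0.1=0.44$. In general this construction gives a difference $L_1(J_n+1)E_n\delta_1$ against a bound $2L_1(L_1+1)(E_n+\delta_1)\delta_1$.

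What is missing is an extra hypothesis: $\boldsymbol W_{1,*g}=\mathbf 0$ for $g\notin S$, together with $E_n\ge 1$. With it:
\begin{itemize}
\item $\|\boldsymbol z_1\|_\infty\le (m+1)E_n$, and induction gives $\|\boldsymbol z_h\|_\infty\le (m+1)E_n^{h}\prod_{k=2}^{h}(L_{k-1}+1)$.
\item Your recursion multiplies $\boldsymbol W_h-\tilde{\boldsymbol W}_h$ by the \emph{unperturbed} activations and propagates with $\tilde{\boldsymbol W}_h$. It therefore bounds each of the $H_n-1$ deeper-layer terms by $\delta_1(m+1)(E_n+\delta_1)^{H_n-1}\prod_{k=2}^{H_n}(L_{k-1}+1)$, with no $\delta_1\delta_2$ cross terms.
\item Adding the propagated first-layer error then gives the stated bound.
\end{itemize}
Your decomposition is slightly better than the paper's here. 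The paper multiplies by the perturbed $\tilde{\boldsymbol z}_{k-1}$, so it would pick up $(J_n-m)\delta_2$ cross terms. Its sketch also asserts $\|\tilde a_{k-1}\|_\infty\le (E_n+\delta_1)^{k-1}$, dropping exactly the width and input-dimension factors at issue, so it shares the hole.

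The strengthened lemma suffices for every use in the proof of Theorem~\ref{theorem:Posterior_consistency}, because the lemma's $\boldsymbol\theta$ can always be taken to be a parameter with vanishing off-support columns:
\begin{itemize}
\item $\boldsymbol\theta^{(S)}$, against the original $\boldsymbol\theta$ (with $\delta_1=0$);
\item $\boldsymbol\theta^{(S)}$, against a net point in $\Theta_n^{\mathrm{trunc}}(S)$;
\item $\boldsymbol\theta^*$, in condition (c).
\end{itemize}
So of the two options in your ``main obstacle'', control via $\|\boldsymbol\theta\|_\infty\le E_n$ does not work. Only the second does, and it must be stated as a hypothesis of the lemma rather than invoked informally.
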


\begin{proof}
Consider the pre-activation at the first layer $z_{1,h} = \sum_{g=1}^{J_n} W_{1,hg} x_g + b_{1,h}$. For any $\boldsymbol{x} \in [-1,1]^{J_n}$, the difference satisfies:
\begin{equation*}
|z_{1,h} - \tilde{z}_{1,h}| \le \sum_{g \in S} \delta_1 |x_g| + \sum_{g \notin S} \delta_2 |x_g| + \delta_1 \le (m+1)\delta_1 + (J_n-m)\delta_2.
\end{equation*}
For $k \ge 2$, let $\boldsymbol{z}_{k}$ denote the pre-activation vector of layer $k$. Since ReLU is $1$-Lipschitz, the error propagates as:
\begin{equation*}
\|\boldsymbol{z}_{k} - \tilde{\boldsymbol{z}}_{k}\|_\infty \le \|\boldsymbol{W}_k\|_\infty \|\boldsymbol{z}_{k-1} - \tilde{\boldsymbol{z}}_{k-1}\|_\infty + \|\boldsymbol{W}_k - \tilde{\boldsymbol{W}}_k\|_\infty \|\tilde{\boldsymbol{z}}_{k-1}\|_\infty + \|\boldsymbol{b}_k - \tilde{\boldsymbol{b}}_k\|_\infty.
\end{equation*}
Note that $\|\tilde{a}_{k-1}\|_\infty \le (E_n + \delta_1)^{k-1}$. By induction over $k=2, \dots, H_n$ and accounting for the total number of parameters in each layer (width products $\prod L_k$), the perturbations $\delta_1$ across $H_n$ layers accumulate linearly. The initial perturbation from inactive columns $(J_n-m)\delta_2$ is magnified by the depth-induced factor $(E_n + \delta_1)^{H_n-1}$. Summing these contributions yields the desired bound.
\end{proof}

\subsubsection{Definition of $\tilde{M}_{n,1}(\varepsilon_{n})$ and $\tilde{M}_{n,2}(\varepsilon_{n})$ in Theorem \ref{theorem:Posterior_consistency}}
\begin{itemize}
\item $\sigma_{0,n}^2 \le \tilde{M}_{n,1}(\varepsilon_n)$;
\item $\max\{\sigma^2,\sigma_{0,n}^2,\sigma_{1,n}^2\}\le \tilde{M}_{n,2}(\varepsilon_n)$.
\end{itemize}
Here, 
\begin{equation}\label{M_1}
\tilde{M}_{n,1}(\varepsilon_{n})=\min\left\{ \frac{(\delta'_n)^2}{2\tau A_n + 2\log(4J_nL_1^2)}, \frac{(\omega_n')^2}{2\log(4J_nL_1^2)} \right\},
\end{equation}
and 
\begin{equation}\label{M_2}
\tilde{M}_{n,2}(\varepsilon_{n})= \frac{M_n^{2}}{2\Bigl[b_1\,n\varepsilon_n^{2}+\log(2K_n)\Bigr]},
\end{equation}
where $A_n= H_n\log n+H_n\log\bar L+\log\{(J_n+1)L_1\}$, 
\begin{equation}\label{delta}
\delta'_n =\frac{c_{1}\varepsilon_{n}}{H_n\,J_n\,L_1\,(\bar L)^{2(H_n-1)}(c_0M_n)^{H_n-1}},
\end{equation} 
and
\begin{equation}\label{omega}
\omega_n' =\frac{c_{1}\varepsilon_{n}}{J_n\,L_1\,(\bar L)^{2(H_n-1)}(c_0E_n)^{H_n-1}},
\end{equation}
with  
  $c_0$ and $c_1$ being some positive constants. In addition, $K_n=(J_n+1)L_1+H_n\bar L^2$, $\log M_n = O(\log n)$, and that for sufficiently large $n$, $M_n \ge E_n$.

\subsubsection{Proof of Theorem \ref{theorem:Posterior_consistency}}

We consider the specific scenario developed in Section \ref{sec:methodology}. The observed data is $D_n=\{(\boldsymbol{x}_i(t),Y_i)\}_{i=1}^n$. 

%where $\boldsymbol{x}_i$ represents projected features:
% \[
%   \boldsymbol{x}_i=\boldsymbol{\eta}(X_i),\qquad
%   x_{ij}=\eta_j(X_i)=\frac{1}{B_z}\int_0^1 X_i(t)\,B_j(t)\,dt.
% \]

Throughout the posterior contraction analysis, we take the reference truth to be the original law
\[
  p^* \;:=\; p_{\mu^*},
\]
where the model family is indexed by $\mu_{\boldsymbol{\theta}}(X):=F_{\boldsymbol{\theta}}(\boldsymbol{\eta}(X))$ and the true mean is
$\mu^*(X)=g^*(\langle X,\beta^*\rangle)$. Here, $p_{\mu^*}(\cdot\mid X)$ is the true conditional density of $Y$ given $X$. Let $P_n := \{p_{\mu_{\boldsymbol{\theta}}}: \boldsymbol{\theta} \in \Theta_n\} \subset \mathcal{P}$ be a sequence of model classes, where $p_{\mu_{\boldsymbol{\theta}}}(\cdot\mid X)$ is the approximate density induced by the finite-dimensional representation (\ref{eq:finite_rep}) and the sparse DNN defined in (\ref{DNN}) and $\Theta_n := \{\boldsymbol{\theta}: \|\boldsymbol{\theta}\|_\infty \le M_n, C(\boldsymbol{\theta}) \le k_0 s_n \}$
for a constant $k_0 \ge 2$.
Here, $C(\boldsymbol{\theta}):=|S(\boldsymbol{\theta})|$ with $S(\boldsymbol{\theta}):=\Bigl\{j\le J_n:\max_{1\le h\le L_1}|W_{1,hj}|\ge \delta'_n\Bigr\}$.

To prove Theorem \ref{theorem:Posterior_consistency}, according to Lemma~\ref{lem:general-ggv}, it suffices to verify that for a sequence $ \varepsilon_n^2
  \;\lesssim\;
  \frac{s_n\log(J_n/s_n)}{n}
  +\frac{s_n\bigl(H_n\log n+\log J_n\bigr)}{n}
  +\Bigl(H_n^{-2\alpha_1}+J_n^{-\alpha_\beta\alpha_1}\Bigr)^2$, the following conditions hold for all sufficiently large $n$:
\begin{enumerate}[label=(\alph*)]
    \item $\log N(\varepsilon_n, P_n, d) \le n\varepsilon_n^2$;
    \item $\pi(P_n^c) \le \exp(-b n \varepsilon_n^2)$;
    \item $\pi \{ p_{\mu_{\boldsymbol{\theta}}} \in \mathcal{P} : d_t(p_{\mu_{\boldsymbol{\theta}}}, p_{\mu^*}) \le b' \varepsilon_n^2 \} \ge \exp(-b' n \varepsilon_n^2)$.
\end{enumerate}

\paragraph{Verification of condition (a).}
Set
% \begin{equation}\label{eq:deltas-two-stage}
%   \delta_{1,n}
%   := \frac{c_2\varepsilon_n}{H_n\,(k_0s_n+1)\,L_1\,(\bar L)^{2(H_n-1)}(c_0M_n)^{H_n-1}},
%   \qquad c_2\in(0,1).
% \end{equation}
\[\delta'_n =\frac{c_{1}\varepsilon_{n}}{H_n\,J_n\,L_1\,(\bar L)^{2(H_n-1)}(c_0M_n)^{H_n-1}}.\]
% {\color{red}This choice ensures that perturbations of size $\delta_{1,n}$ on the active first-layer weights and on all remaining (non--first-layer) coordinates induce an $O(\varepsilon_n)$ perturbation of the network output; for inactive first-layer columns, no extra discretisation is needed since $|W_{1,hg}|<\delta'_n$ holds by definition
% of $S(\theta)$.}

Fix $\boldsymbol{\theta}\in \Theta_n$ and let $S:=S(\boldsymbol{\theta})$ with $|S|\le k_0s_n$.
Define the truncated parameter $\boldsymbol{\theta}^{(S)}$
% $\theta^{(S)}:=((\operatorname{vec}(W_1^{(S)}))^\top,\dots,(\operatorname{vec}(W_{H_n}^{(S)}))^\top)^\top$ 
by zeroing out non-activated columns:
\[
  W^{(S)}_{1,hg}:=W_{1,hg}\mathbf 1\{g\in S\},\qquad
  1\le h\le L_1,\ 1\le g\le J_n,
\]
and keep all remaining coordinates unchanged: $\theta^{(S)}_{j}:=\theta_j$ for $j \in \mathcal{G}$ ($\mathcal G$ includes all biases and weights beyond the first layer).
Then for $g\notin S$,
\[
  \max_{h\le L_1}\bigl|W_{1,hg}-W^{(S)}_{1,hg}\bigr|
  = \max_{h\le L_1}|W_{1,hg}|
  <\delta'_n.
\]
Applying Lemma~\ref{lem:perturbation_sparse} with $(\delta_1,\delta_2)=(0,\delta'_n)$ yields
\begin{equation}\label{eq:bridge-trunc}
  \sup_{\boldsymbol{x}\in[-1,1]^{J_n}}|F_{\boldsymbol{\theta}}(\boldsymbol{x})-F_{\boldsymbol{\theta}^{(S)}}(\boldsymbol{x})|
  \le {C\,\varepsilon_n},
\end{equation}
with $C$ being some constant.

Now consider the truncated class
\[
  \Theta_n^{\rm trunc}(S)
  :=\Bigl\{\boldsymbol{\theta}:\ \|\boldsymbol{\theta}\|_\infty\le M_n,\boldsymbol{W}_{1,* g}\equiv 0\ \text{for }g\notin S\Bigr\}.
\]
Set
\begin{equation}\label{eq:deltas-two-stage}
  \delta_{1,n}
  := \frac{c_2\varepsilon_n}{H_n\,(k_0s_n+1)\,L_1\,(\bar L)^{2(H_n-1)}(c_0M_n)^{H_n-1}},
  \qquad c_2\in(0,1).
\end{equation}

Let $\tilde{\boldsymbol{\theta}}\in \Theta_n^{\rm trunc}(S)$
% $\tilde{\theta}:=((\operatorname{vec}(\tilde{W}_1))^\top,\dots,(\operatorname{vec}(\tilde{W}_{H_n}))^\top)^\top\in \Theta_n^{\rm trunc}(S)$ 
satisfy the coordinate-wise bounds
\[
  \max_{g\in S,\ h\le L_1}|W^{(S)}_{1,hg}-\tilde{W}_{1,hg}|\le \delta_{1,n},
  \qquad
  \max_{j\in\mathcal G}|\theta^{(S)}_{j}-\tilde{\theta}_{j}|\le \delta_{1,n}.
\]
For $g\notin S$, we have $W^{(S)}_{1,hg}=\tilde{W}_{1,hg}=0$, hence the same bound holds
with $\delta_2=0\le \delta'_n$. Therefore Lemma~\ref{lem:perturbation_sparse}
(with $(\delta_1,\delta_2)=(\delta_{1,n},\delta'_n)$) and \eqref{eq:deltas-two-stage} give
\begin{equation}\label{eq:grid-step}
\sup_{\boldsymbol{x}\in[-1,1]^{J_n}}|F_{\boldsymbol{\theta}^{(S)}}(\boldsymbol{x})-F_{\tilde{\boldsymbol{\theta}}}(\boldsymbol{x})|
  \le {C\,\varepsilon_n}.
\end{equation}
Combining \eqref{eq:bridge-trunc}--\eqref{eq:grid-step},
\[
  \sup_{\boldsymbol{x}\in[-1,1]^{J_n}}|F_{\boldsymbol{\theta}}(\boldsymbol{x})-F_{\tilde{\boldsymbol{\theta}}}(\boldsymbol{x})|
  \le C\,\varepsilon_n .
\]
As $\mu_{\boldsymbol{\theta}}(X)=F_{\boldsymbol{\theta}}(\boldsymbol{x})$ with $\boldsymbol{x}=\boldsymbol{\eta}(X)$, we further have
\[
  \sup_{X \in \mathcal{X} }|\mu_{\boldsymbol{\theta}}(X)-\mu_{\tilde{\boldsymbol{\theta}}}(X)|
  \le C\,\varepsilon_n .
\]
Next, since $d^2(p,q)\le d_0(p,q)$,
it suffices to control the Kullback--Leibler divergence. 
For the Gaussian regression case, the KL divergence is bounded by the squared difference of mean functions. Hence,
% For the normal likelihood,
% $d_0(p_\theta(\cdot|x),p_{\tilde\theta}(\cdot|x))  =(\mu(\theta,x)-\mu(\tilde\theta,x))^2/(2\sigma^2)$.
% ; for the Bernoulli--logistic likelihood,
% $d_0(p_\theta(\cdot|x),p_{\tilde\theta}(\cdot|x))\le (\mu(\theta,x)-\mu(\tilde\theta,x))^2/8$.
% Hence
 % in both cases,

\[
  d_0(p_{\mu_{\boldsymbol{\theta}}},p_{\mu_{\tilde{\boldsymbol{\theta}}}})
  \le C\,\mathbb{E}_X\bigl[\mu_{\boldsymbol{\theta}}(X)-\mu_{\tilde{\boldsymbol{\theta}}}(X)\bigr]^2
  \le C'\varepsilon_n^2,
\]

and therefore $d(p_\theta,p_{\tilde\theta})\le C''\varepsilon_n$ for all large $n$.
Consequently, any coordinate-wise $\ell_\infty$--net for $\Theta_n^{\rm trunc}(S)$
induces a $c\,\varepsilon_n$--net
(for some constant $c>0$) under $d$. Since $N(\varepsilon;P_n,d)$ is non-increasing in
$\varepsilon$, it suffices to bound $N(c\,\varepsilon_n;P_n,d)$, which is of the same order.

Fix $S\subset\{1,\dots,J_n\}$ with $m:=|S|\le k_0s_n$ and let $K_{\rm deep}:=|\mathcal G|\le H_n\bar L^2$.
Discretise (i) the $mL_1$ active first-layer weights and (ii) the $K_{\rm deep}$ deep-layer
coordinates on a uniform grid over $[-M_n,M_n]$ with mesh width $\delta_{1,n}$.
Let $R_n:=\Bigl\lceil\frac{2M_n}{\delta_{1,n}}\Bigr\rceil$.
By product covering, there exists an $c\varepsilon_n$--net $\mathcal N(S)$ of
$\{p_\theta:\theta\in \Theta_n^{\rm trunc}(S)\}$
under $d$ such that $|\mathcal N(S)|\le R_n^{mL_1+K_{\rm deep}}$.

Since $m\le k_0s_n$,
\[
  \sum_{m=0}^{k_0s_n}\binom{J_n}{m}
  \le\Bigl(\frac{eJ_n}{k_0s_n}\Bigr)^{k_0s_n}.
\]
Let $\mathcal N:=\bigcup_{|S|\le k_0s_n}\mathcal N(S)$.
For any $\theta\in \Theta_n$, letting $S=S(\theta)$, the truncation bridge and the net $\mathcal N(S)$
yield some $\tilde\theta\in\mathcal N(S)\subset\mathcal N$ such that
$d(p_\theta,p_{\tilde\theta})\le \varepsilon_n$. Therefore, for the constant $c>0$ above,
\begin{align*}
  \log N(c\varepsilon_n;P_n,d)
   & \le k_0s_n\log\frac{eJ_n}{k_0s_n}
  +\bigl(k_0s_nL_1+K_{\rm deep}\bigr)\log R_n \\
   & \le k_0s_n\log\frac{eJ_n}{k_0s_n}
  +\bigl(k_0s_nL_1+H_n\bar L^2\bigr)\log\frac{2M_n}{\delta_{1,n}} .
\end{align*}
Since $N(\varepsilon;P_n,d)$ is non-increasing in $\varepsilon$,
$\log N(\varepsilon_n;P_n,d)\le \log N(c\varepsilon_n;P_n,d)$.
By $\log M_n=O(\log n)$ and {$\log(1/\varepsilon_n)=O(\log n)$},
and noting that $k_0$ is fixed and $k_0s_n\le J_n$, we obtain
{
\[
  \log N(\varepsilon_n;P_n,d)
  \ \lesssim\
  s_n \log\frac{eJ_n}{s_n}
  +\bigl[s_nL_1+H_n\bar L^2\bigr]
  \Bigl(H_n\log n+H_n\log\bar L+\log(J_nL_1)\Bigr).
\]
}

By the rate condition of $\varepsilon_n$, the right-hand side is $O(n\varepsilon_n^2)$,
hence $\log N(\varepsilon_n;P_n,d)\le n\varepsilon_n^2$ for all large $n$,
verifying condition~(a).

\paragraph{Verification of condition (b).}

By the definition of $\Theta_n$, we have
\[
  \Pi(\Theta_n^c)\le T_{1,n}+T_{2,n},
  \qquad
  T_{1,n}:=\Pi(\|\boldsymbol{\theta}\|_\infty>M_n),\quad
  T_{2,n}:=\Pi(C(\boldsymbol{\theta})>k_0s_n).
\]
Note that, under the induced prior on densities,
\[
  \Pi(P_n^c)=\Pi\{\boldsymbol{\theta}\notin\Theta_n\}=\Pi(\Theta_n^c).
\]
Denote $\sigma_{\max }^{2}=\max\{\sigma^2,\sigma_{0,n}^2,\sigma_{1,n}^2\}$ and $K_n=(J_n+1)L_1+H_n\bar L^2$.
For every coordinate $\theta_j$,
$\Pr(|\theta_j|>M_n
  )\le 2\exp\!\bigl(-M_n
  ^{2}/2\sigma_{\max }^{2}\bigr)$.
With the condition (\ref{M_2}) on $\sigma_{\max }^{2}$ and by the union bound, we have
\[
  T_{1,n}\le 2K_n\exp\!\Bigl(-\frac{M_n
    ^{2}}{2\sigma_{\max }^{2}}\Bigr)
  \le \exp\!\bigl(-b_1\,n\varepsilon_n^{2}\bigr).
\]

For each column $g$, define
\[
  I_g:=\mathbf 1\Bigl\{\max_{1\le h\le L_1}|W_{1,hg}|\ge \delta'_n\Bigr\},
  \qquad C(\boldsymbol{\theta})=\sum_{g=1}^{J_n} I_g.
\]
Under the specific prior, the indicators $\{I_g\}_{g=1}^{J_n}$ are i.i.d.\ Bernoulli ($p_n$) with
\begin{equation}\label{p_n}
  p_n:=\Pr(I_g=1)=(1-\lambda_n)p_{0,n}+\lambda_n p_{1,n}\le p_{0,n}+\lambda_n,
\end{equation}
where
\[
  \begin{aligned}
    p_{0,n}
     & := \Pr\Bigl(\max_{1\le h\le L_1}|Z_h|\ge \delta'_n\Bigr),
    \qquad Z_h \overset{\mathrm{i.i.d.}}{\sim} N(0,\sigma_{0,n}^2), \\
    p_{1,n}
     & := \Pr\Bigl(\max_{1\le h\le L_1}|W_h|\ge \delta'_n\Bigr),
    \qquad W_h \overset{\mathrm{i.i.d.}}{\sim} N(0,\sigma_{1,n}^2).
  \end{aligned}
\]

By the union bound and Gaussian tails,
\[
  p_{0,n}\le 2L_1\exp\!\Bigl(-\frac{(\delta'_n)^2}{2\sigma_{0,n}^2}\Bigr),
  \qquad
  p_{1,n}\le 1.
\]

Under assumption (\ref{M_1}) for $\sigma_{0,n}^2$, we have
\[
  \frac{(\delta'_n)^2}{2\sigma_{0,n}^2}\ \ge\ \tau A_n+\log(4J_nL_1^2),
\]
with $A_n= H_n\log n+H_n\log\bar L+\log\{(J_n+1)L_1\}$,
and hence
\begin{equation}\label{q_0n}
  p_{0,n}\le 2L_1\exp\!\Bigl(-\frac{(\delta'_n)^2}{2\sigma_{0,n}^2}\Bigr)
  \le 2L_1\exp\!\bigl\{-\tau A_n-\log(4J_nL_1^2)\bigr\}
  = \frac{1}{2J_nL_1}\,e^{-\tau A_n}.
\end{equation}
Therefore $J_np_{0,n}\le \frac{1}{2L_1}e^{-\tau A_n}$, {which is $o(s_n)$} and in particular
implies $J_np_{0,n}\le \frac14 k_0s_n$ for all large $n$.
Moreover, under Assumption \ref{assumption_parameter}, we have $J_n\lambda_n\lesssim \bigl[(n\bar L)^{H_n}(J_n+1)L_1\bigr]^{-\tau'}$,
so for all large $n$,
{
\[
  J_n\lambda_n\le \frac{1}{4}k_0s_n .
\]
}
Combining the last two displays yields
\[
  J_n p_n \;\le\; J_n(p_{0,n}+\lambda_n)\;\le\;\frac{1}{2}k_0s_n .
\]

Let $q_n:=k_0s_n/J_n$ and assume $q_n\le 1/2$ for all large $n$.
Since $\{I_g\}_{g=1}^{J_n}$ are i.i.d.\ Bernoulli$(p_n)$, we have
\[
  C(\boldsymbol{\theta})=\sum_{g=1}^{J_n} I_g \ \sim\ \mathrm{Bin}(J_n,p_n).
\]
Under the above bounds, we have $p_n\le q_n/2$, hence $q_n>p_n$.
Applying \citet[Theorem~1]{zubkov2013complete} to
$X\sim\mathrm{Bin}(J_n,p_n)$ with $k=\lfloor J_n q_n\rfloor-1$ yields
\[
  \Pr\{C(\boldsymbol{\theta})\ge k_0s_n\}
  \;\le\;
  1-\Phi\!\Big(\sqrt{2J_n\,H\!\big(p_n,q_n\big)}\Big),
\]
where
\[
  H(p_n,q_n)
  = q_n\log\frac{q_n}{p_n}+(1-q_n)\log\frac{1-q_n}{1-p_n}
  = \operatorname{KL}(q_n\|p_n).
\]
Using the standard Gaussian tail bound $1-\Phi(t)\le e^{-t^2/2}$ for $t>0$,
we obtain
\[
  \Pr\{C(\boldsymbol{\theta})\ge k_0s_n\}
  \;\le\;
  \exp\!\big\{-J_n\,\operatorname{KL}(q_n\|p_n)\big\}.
\]
Moreover, since $p_n\le q_n/2$ and {$\log(q_n/p_n)\to\infty$} in our regime,
the negative term $(1-q_n)\log\{(1-q_n)/(1-p_n)\}$ is negligible compared to
$q_n\log(q_n/p_n)$, and in particular for all sufficiently large $n$,
\[
  \operatorname{KL}(q_n\|p_n)\ \ge\ \frac12\,q_n\log\frac{q_n}{p_n}.
\]
Therefore,
\[
  \Pr\{C(\boldsymbol{\theta})\ge k_0s_n\}
  \;\le\;
  \exp\!\Big\{-c\,k_0s_n\log\frac{q_n}{p_n}\Big\}
\]
for some absolute constant $c>0$.

Moreover, under Assumption \ref{assumption_parameter} and (\ref{p_n}) and (\ref{q_0n}), we can ensure
$p_n\lesssim e^{-\tau A_n}/J_n$ for some $\tau>0$, 
so that
\[
  \log\frac{q_n}{p_n}
  \ \gtrsim\
  \tau A_n+\log(k_0s_n)
  \ \gtrsim\ A_n
  \quad\text{for all large $n$}.
\]
Therefore,
\[
  -\log \Pr\{C(\boldsymbol{\theta})\ge k_0s_n\}
  \ \gtrsim\ k_0s_n\,A_n.
\]
% {\color{red}If the rate is chosen so that $n\varepsilon_n^2\lesssim s_nA_n$
% (e.g. in the regime $H_n\bar L^2\lesssim s_nL_1$),}
Then, under Assumption \ref{assumption_DNN}, we have $n\varepsilon_n^2\lesssim s_nA_n$, and thus
for some $b_2>0$,
\[
  T_{2,n}=\Pi\{C(\boldsymbol{\theta})>k_0s_n\}\le \exp(-b_2\,n\varepsilon_n^2),
\]
verifying condition~(b).

\paragraph{Verification of condition (c).}

We check condition (c) for $t=1$.
% Here, we take $\theta^*=\theta_n^\dagger$ to be an
% approximating parameter such that
% \[
%   \operatorname{supp}_{\mathrm{col}}(\theta_n^\dagger)=S^*_{J_n},\qquad \|\theta_n^\dagger\|_\infty\le E_n,
% \]
% and
% \[
%   \sup_{X\in\mathcal X}\bigl|\mu(\theta_n^\dagger,\eta(X))-\mu^*(X)\bigr|
%   \ \le\ \tilde\xi_n .
% \]
Consider the set
\[
  \mathcal A_n=\Bigl\{\boldsymbol{\theta}:
  \max_{g\in S^*_{J_n}}\max_{h\le L_1}|W_{1,hg}-W^*_{1,hg}|\le \omega_n,\
  \max_{g\notin S^*_{J_n}}\max_{h\le L_1}|W_{1,hg}|\le \omega_n',\
  \|\boldsymbol{\theta}_{\mathcal G}-\boldsymbol{\theta}^*_{\mathcal G}\|_\infty\le \omega_n
  \Bigr\},
\]
where $\omega_n=\frac{c_1\varepsilon_n}{[\,H_n(s_n+1)L_1(\bar{L})^{2(H_n-1)}(c_0E_n)^{H_n-1}\,]}$ and $\omega_n'$ are defined in (\ref{delta}) and (\ref{omega}), {and $\boldsymbol{\theta}^*$ is the network parameter vector of the DNN $F_{\boldsymbol{\theta}^*}$ obtained in Theorem \ref{theorem_1}.}
If $\boldsymbol{\theta}\in \mathcal A_n$, then by Lemma~\ref{lem:perturbation_sparse} we have
\[
\sup_{X\in\mathcal X}\bigl|\mu_{\boldsymbol{\theta}}(X)-\mu_{\boldsymbol{\theta}^*}(X)\bigr|=\sup_{\boldsymbol{x}\in[-1,1]^{J_n}}\bigl|F_{\boldsymbol{\theta}}(\boldsymbol{x})-F_{\boldsymbol{\theta}^*}(\boldsymbol{x})\bigr|
  \ \le\ 3c_1\varepsilon_n,
\]
where $\boldsymbol{x}=\boldsymbol{\eta}(X)$ and $\mu_{\boldsymbol{\theta}}(X)=F_{\boldsymbol{\theta}}(\boldsymbol{\eta}(X))$.
Define
\[
  \tilde\xi_n
  \;:=\;
  \inf_{\boldsymbol{\theta}:\ \supp_{\mathrm{col}}(\boldsymbol{\theta})=S^*_{J_n},\ \|\boldsymbol{\theta}\|_\infty\le E_n}
  \ \sup_{X\in\mathcal X}\bigl|\mu_{\boldsymbol{\theta}}(X)-\mu^*(X)\bigr|.
\]
Then, Theorem \ref{theorem_1} gives $\tilde\xi_n \lesssim
      H_n^{-2\alpha_1} + J_n^{-\alpha_\beta\alpha_1}$.

Since
\[
  \sup_{X\in\mathcal X}\bigl|\mu_{\boldsymbol{\theta}^*}(X)-\mu^*(X)\bigr|
  \ \le\ \tilde \xi_n,
\]
we have,
\[
 \sup_{X\in\mathcal X}\bigl|\mu_{\boldsymbol{\theta}}(X)-\mu^*(X)\bigr|
  \ \le\ 3c_1\varepsilon_n+\tilde \xi_n.
\]
For normal models, we obtain
\[
  d_1\bigl(p_{\mu_{\boldsymbol{\theta}}},p_{\mu^*}\bigr)
  \;\le\;
  C\,(1+o(1))\,\mathbb{E}_X\bigl[\mu_{\boldsymbol{\theta}}(X)-\mu^*(X)\bigr]^2 
  \;\le\;
  C\,(1+o(1))\,(3c_1\varepsilon_n +\tilde  \xi_n)^2,
  \quad\text{if }\boldsymbol{\theta}\in \mathcal A_n,
\]
for some constant $C$. Under Assumption \ref{assumption_DNN}, we have $n\varepsilon_n^2\ge M_0\,n\,\tilde{\xi}_n^2$ for large $M_0$. Thus for any small $b'>0$, condition (c) holds as long as $c_1$ is sufficiently small, {and the prior satisfies $-\log\Pi(\mathcal A_n)\le b'\,n\varepsilon_n^2$}.

Let $S^*_{J_n}\subset\{1,\dots,J_n\}$ be the true active column set with $|S^*_{J_n}|=s_n$,
and define the configuration $\boldsymbol{\gamma}^*$ by $\gamma_g^*=\mathbf 1\{g\in S^*_{J_n}\}$.
Write $K_{\rm deep}:=|\mathcal G|\le H_n\bar L^2$.

Since columns are independent under the hierarchical prior and $(\gamma_g)$ are i.i.d.,
\[
  \Pi(\mathcal A_n)\ \ge\ \Pi(\boldsymbol{\gamma}=\boldsymbol{\gamma}^*)\,
  \Pi(\mathcal A_n\mid \boldsymbol{\gamma}=\boldsymbol{\gamma}^*).
\]

To bound $\Pi(\mathcal{A}_n)$ from below, we consider the event where the selection indicators match the target indices exactly, i.e., $\gamma_g = 1$ if $g \in S^*_{J_n}$ and $\gamma_g = 0$ otherwise.

\medskip\noindent\textit{(i) Consider $\Pi(\boldsymbol{\gamma}=\boldsymbol{\gamma}^*)$. }
\[
  \Pi(\boldsymbol{\gamma}=\boldsymbol{\gamma}^*)=\lambda_n^{s_n}(1-\lambda_n)^{J_n-s_n},
  \quad\Rightarrow\quad
  -\log \Pi(\gamma=\gamma^*)
  \le s_n\log\frac1{\lambda_n}+ (J_n-s_n)\lambda_n .
\]
Under Assumption \ref{assumption_parameter}, $(J_n-s_n)\lambda_n=o(n\varepsilon_n^2)$ and
$s_n\log(1/\lambda_n)\lesssim s_n\{H_n\log n+H_n\log\bar L+\log(J_nL_1)\}$.

\medskip\noindent\textit{(ii) Consider $\Pi(\mathcal A_n\mid \boldsymbol{\gamma}=\boldsymbol{\gamma}^*)$.}

For $X\sim N(0,\sigma^2)$ and any $|a|\le E_n$,
\[
  \Pr(|X-a|\le \omega)\ \ge\ 2\omega\cdot \inf_{|u-a|\le \omega}\phi(u;0,\sigma^2)
  \ \ge\ c\,\frac{\omega}{\sigma}\exp\!\Bigl(-\frac{(E_n+\omega)^2}{2\sigma^2}\Bigr),
\]
hence
\[
  -\log \Pr(|X-a|\le \omega)
  \ \lesssim\ \log\frac{\sigma}{\omega}+\frac{(E_n+1)^2}{2\sigma^2}.
\]

Applying this bound to the active first-layer weights
($s_nL_1$ coordinates with slab variance $\sigma_{1,n}^2$) and to the deep parameters
($K_{\rm deep}$ coordinates with variance $\sigma^2$), we obtain
\begin{align*}
  -\log \Pi(\mathcal A_n\mid \gamma=\gamma^*)
   & \le
  C\Biggl[
    s_nL_1\Bigl\{\log\frac{\sigma_{1,n}}{\omega_n}+\frac{(E_n+1)^2}{2\sigma_{1,n}^2}\Bigr\}
    +
    K_{\rm deep}\Bigl\{\log\frac{\sigma}{\omega_n}+\frac{(E_n+1)^2}{2\sigma^2}\Bigr\}
    \Biggr]
  \\&\quad
  -\log \Pi\Bigl(\max_{g\notin S^*_{J_n}}\max_{h\le L_1}|W_{1,hg}|\le \omega_n'\,\Big|\,\gamma_g=0\ \forall g\notin S^*_{J_n}\Bigr).
\end{align*}

For the inactive part, by (\ref{M_2}) and (\ref{omega}), we have
\[
  \frac{(\omega_n')^2}{2\sigma_{0,n}^2}\ \ge\ \log(4J_nL_1^2),
\]
so for $Z\sim N(0,\sigma_{0,n}^2)$,
\[
  \Pr(|Z|>\omega_n')\le 2\exp\!\Bigl(-\frac{(\omega_n')^2}{2\sigma_{0,n}^2}\Bigr)
  \le \frac{1}{2J_nL_1^2}.
\]
Hence, for each $(g,h)$ with $g\notin S^*_{J_n}$,
\[
  \Pr(|W_{1,hg}|\le \omega_n'\mid \gamma_g=0)\ge 1-\frac{1}{2J_nL_1^2}.
\]
By independence over $(g,h)$,
\[
  \Pi\Bigl(\max_{g\notin S^*_{J_n}}\max_{h\le L_1}|W_{1,hg}|\le \omega_n'
  \,\Big|\,\gamma_g=0\ \forall g\notin S^*_{J_n}\Bigr)
  \ge \Bigl(1-\frac{1}{2J_nL_1^2}\Bigr)^{(J_n-s_n)L_1}
  \ge e^{-1}
\]
for all large $n$.

Combining the above bounds and using $K_{n}\le H_n\bar L^2$ and
$\log(1/\omega_n)=O\!\bigl(H_n\log n+H_n\log\bar L+\log(s_nL_1)\bigr)$,
we conclude that
\[
  -\log \Pi(\mathcal A_n)
  \ \le\
  C'\Bigl\{
  s_n\log\frac1{\lambda_n}
  +
  \bigl[s_nL_1+H_n\bar L^2\bigr]
  \bigl(H_n\log n+H_n\log\bar L+\log(s_nL_1)\bigr)
  \Bigr\}
  \ \le\ b'n\varepsilon_n^2,
\]
where the last inequality follows from the rate condition and the hyper-parameter bounds in Assumption \ref{assumption_parameter}.
Consequently, $\Pi(\mathcal A_n)\ge \exp(-b'n\varepsilon_n^2)$, verifying condition (c).

\subsection{Selection Consistency (Theorem \ref{thm:selection_consistency})}
Denote $q_j=\Pi(r_j=1|D_n)=\mathbb{E}(r_j|D_n)$ and $A_n(\varepsilon_n)=\Bigl\{\boldsymbol{\theta}:\ d\bigl(p_{\mu_{\boldsymbol{\theta}}},p_{\mu^*}\bigr)\ge \varepsilon_n\Bigr\}$.

\subsubsection{Proof of Theorem \ref{thm:selection_consistency}}

Fix $j\in\{1,\ldots,J_n\}$. By the definition of $q_j$, we have
  \[
    |q_j-e_j^*|
    = \Big|\mathbb{E}((\gamma_j-e_j^*)|D_n)\Big|
    \le \mathbb{E}\left(\left.|\gamma_j-e_j^*| \right|D_n\right).
  \]
  Split according to $A_n(4\varepsilon_n)$:
  \[
    \mathbb{E}\left(\left.|\gamma_j-e_j^*| \right|D_n\right)
    \le
    \mathbb{E}\Bigl[\left.|\gamma_j-e_j^*|\mathbf 1\{\theta\notin A_n(4\varepsilon_n)\}\right|D_n\Bigr]
    +
    \mathbb{E}\Bigl[\left.|\gamma_j-e_j^*|\mathbf 1\{\theta\in A_n(4\varepsilon_n)\right| D_n\}\Bigr].
  \]
  Since $|\gamma_j-e_j^*|\le 1$, the second term is bounded by
  $\Pi\!\bigl(A_n(4\varepsilon_n)\mid D_n\bigr)$, while the first term is controlled by
  $\rho_n(4\varepsilon_n)$ in Assumption \ref{assumption_identifiability}. Taking the maximum over $j$ gives
  \[
    \max_j |q_j-e_j^*|
    \le \rho_n(4\varepsilon_n) + \Pi\!\bigl(A_n(4\varepsilon_n)\mid D_n\bigr)
    \xrightarrow{P}0.
  \]
Then based on Theorem 2.3 stated in Sun et al. (2022), with an appropriate choice of prior hyperparameters, the estimated $\hat{q}_j=Pr(r_j=1|\hat{\boldsymbol{\theta}})$ based on the MAP estimate $\hat{\boldsymbol{\theta}}$ and $q_j$ are approximately the same as $n\rightarrow \infty$. Thus, $\hat{q}_j$ is also a consistent estimator of $e_j^*$, which proves (i). Parts (ii) and (iii) follow immediately from (i).

{To prove (iv),
recall that the estimated active region on the original domain is
$\widehat{\Omega}:=\bigcup_{j\in \widehat{S}_{1/2}} I_j$, and the population (truncated) active region at
resolution $J_n$ is $\Omega_{J_n}^*=\bigcup_{j\in S^*_{J_n}} I_j$.
By Parts (i)--(iii), $\Pr(\widehat{S}_{1/2}=S^*_{J_n})\to 1$, hence also
$\Pr(\widehat{\Omega}=\Omega_{J_n}^*)\to 1$.
Therefore, it suffices to show $|\Omega_{J_n}^*\Delta \Omega^*|\to 0$.

By the triangle inequality for symmetric differences,
\begin{equation}\label{eq:tri-decomp}
  |\Omega_{J_n}^*\Delta \Omega^*|
  \le
  |\Omega_{J_n}^*\Delta \Omega^*(2\kappa_{J_n})|
  +
  |\Omega^*(2\kappa_{J_n})\Delta \Omega^*|.
\end{equation}

For the first term, Lemma~\ref{prop:block1-main}(v) gives
\[
  |\Omega_{J_n}^*\Delta \Omega^*(2\kappa_{J_n})|
  \le
  |\Omega^*(\kappa_{J_n})\setminus \Omega^*(2\kappa_{J_n})|
  +
  \Bigl|\Omega^*_{J_n}(\kappa_{J_n})^{+c_{\rm loc}\Delta_{J_n}}\setminus \Omega^*(\kappa_{J_n})\Bigr|.
\]
The first summand is controlled by Assumption~\ref{assumption_smooth_coefficient}, since
$\Omega^*(\kappa_{J_n})\setminus \Omega^*(2\kappa_{J_n})
\subseteq \{t\in\Omega^*:|\beta^*(t)|\le 2\kappa_{J_n}\}$ and $2\kappa_{J_n}\downarrow 0$, hence
$|\Omega^*(\kappa_{J_n})\setminus \Omega^*(2\kappa_{J_n})|
\le |\{t\in\Omega^*:|\beta^*(t)|\le 2\kappa_{J_n}\}|\to 0$.

For the second summand, we relate $\Omega^*_{J_n}(\kappa)$ to $\Omega^*(\kappa)$ via a $\Delta_{J_n}$-enlargement.
Indeed, by definition $\Omega^*(\kappa)\subseteq \Omega^*_{J_n}(\kappa)$; moreover, since each spline support interval
$I_j$ has diameter at most $\Delta_{J_n}$, if $t\in I_j$ for some $j$ with $I_j\cap \Omega^*(\kappa)\neq\varnothing$,
then there exists $s\in I_j\cap \Omega^*(\kappa)$ such that
$|t-s|\le \mathrm{diam}(I_j)\le \Delta_{J_n}$, implying
$\Omega^*_{J_n}(\kappa)\subseteq \Omega^*(\kappa)^{+\Delta_{J_n}}$.
Consequently,
\[
  \Omega^*_{J_n}(\kappa)^{+c_{\rm loc}\Delta_{J_n}}
  \subseteq
  \bigl(\Omega^*(\kappa)^{+\Delta_{J_n}}\bigr)^{+c_{\rm loc}\Delta_{J_n}}
  =
  \Omega^*(\kappa)^{+(c_{\rm loc}+1)\Delta_{J_n}},
\]
and hence
\[
  \Bigl|\Omega^*_{J_n}(\kappa_{J_n})^{+c_{\rm loc}\Delta_{J_n}}\setminus \Omega^*(\kappa_{J_n})\Bigr|
  \le
  \bigl|\Omega^*(\kappa_{J_n})^{+(c_{\rm loc}+1)\Delta_{J_n}}\setminus \Omega^*(\kappa_{J_n})\bigr|.
\]
Since $\Omega^*(\kappa_{J_n})$ is a finite union of intervals, its boundary has finite cardinality; therefore there exists
$C_{\partial}>0$ such that for all $\delta>0$,
$\bigl|\Omega^*(\kappa_{J_n})^{+\delta}\setminus \Omega^*(\kappa_{J_n})\bigr|\le C_{\partial}\delta$.
Taking $\delta=(c_{\rm loc}+1)\Delta_{J_n}$ and using $\Delta_{J_n}\asymp J_n^{-1}\to 0$ yields
$\bigl|\Omega^*_{J_n}(\kappa_{J_n})^{+c_{\rm loc}\Delta_{J_n}}\setminus \Omega^*(\kappa_{J_n})\bigr|\to 0$.
This shows $|\Omega_{J_n}^*\Delta \Omega^*(2\kappa_{J_n})|\to 0$.

For the second term in \eqref{eq:tri-decomp}, note that
$\Omega^*\setminus \Omega^*(2\kappa_{J_n})\subseteq \{t\in\Omega^*:|\beta^*(t)|\le 2\kappa_{J_n}\}$, and hence
\[
  |\Omega^*(2\kappa_{J_n})\Delta \Omega^*|
  =
  |\Omega^*\setminus \Omega^*(2\kappa_{J_n})|
  \le
  \bigl|\{t\in\Omega^*:|\beta^*(t)|\le 2\kappa_{J_n}\}\bigr|
  \to 0
\]
by Assumption~\ref{assumption_smooth_coefficient}.
Combining the above bounds gives $|\Omega_{J_n}^*\Delta \Omega^*|\to 0$, and therefore
\[
  |\widehat{\Omega}\Delta \Omega^*|
  \le
  |\widehat{\Omega}\Delta \Omega_{J_n}^*| + |\Omega_{J_n}^*\Delta \Omega^*|
  \xrightarrow{P} 0,
\]
which completes the proof of (iv).
}

\section{Additional Simulation Analysis}
\subsection{Simulation Details and Supplementary Results (Data from Section~6.1)}

Following the functional-covariate generation mechanism in AdaFNN, we generate $X_i(\cdot)$ from a truncated cosine expansion on $\mathcal T=[0,1]$.
Let $\phi_1(t)\equiv 1$ and $\phi_k(t)=\sqrt{2}\cos\!\big((k-1)\pi t\big)$ for $k=2,\ldots,K$ with $K=50$, and set
$X_i(t)=\sum_{k=1}^{K} c_{ik}\phi_k(t)$.
We draw $c_{ik}=z_k r_{ik}$ with $r_{ik}\overset{\text{i.i.d.}}{\sim}\mathrm{Unif}(-\sqrt3,\sqrt3)$ and
$z_1=20$, $z_2=z_3=15$, and $z_k=1$ for $k\ge 4$.
We observe each curve on a discrete grid over $[0,1]$.

We consider three localized settings for $\beta(\cdot)$, which we label as
\emph{Simple}, \emph{Medium}, and \emph{Complex} according to increasing difficulty in region recovery.
Let $T(t;a,b):=(t-a)(b-t)\mathbf 1\{t\in[a,b]\}$ and let $\tilde T(t;a,b)$ denote its normalized version on $[a,b]$
(so that $\max_{t\in[a,b]}\tilde T(t;a,b)=1$). Thus, on each active interval, $\beta(\cdot)$ has a smooth quadratic bump
that vanishes at the endpoints, and multiple active intervals are represented by sums over disjoint bumps.
Specifically, we use:
(i) (\textbf{Simple}) a single centered bump on $[0.4,0.6]$, $\beta(t)=5\,\tilde T(t;0.4,0.6)$;
(ii) (\textbf{Medium}) a single boundary-adjacent bump on $[0.1,0.3]$, $\beta(t)=5\,\tilde T(t;0.1,0.3)$; and
(iii) (\textbf{Complex}) two separated narrow bumps with within-region oscillation.
Let $W=[a_1,b_1]\cup[a_2,b_2]$ with $(a_1,b_1)=(0.05,0.15)$ and $(a_2,b_2)=(0.75,0.85)$, and set
$\beta(t)=2.5\sum_{m=1}^{2}\tilde T(t;a_m,b_m)\sin\!\bigl(2\pi(t+0.1)\bigr)$. The shapes of the three $\beta(t)$ types are illustrated in Figure \ref{beta}.

\begin{figure}[htbp]
    \centering
    \includegraphics[width=1\textwidth]{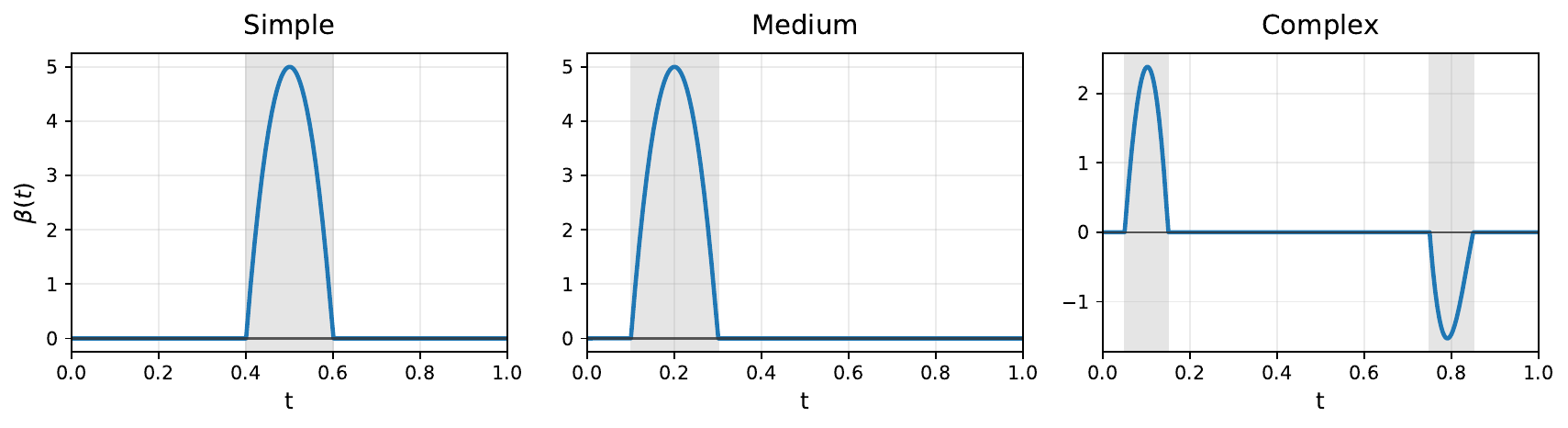} 
    \caption{Illustration of the three types of coefficient functions $\beta(t)$ used in simulations: Simple (single interior bump), Medium (single boundary bump), and Complex (two separated oscillating bumps).}
    \label{beta}
\end{figure}

To vary the nonlinearity of the response mechanism, we consider four choices of the link function $g^*$ with increasing complexity:
(i) linear $g^*(u)=u$;
(ii) logistic-type $g^*(u)=\{1+\exp(u)\}^{-1}$;
(iii) sinusoidal $g^*(u)=\sin(u)$;
and (iv) a composite link $g^*(u)=\tanh(u)+\sin(4u)\exp(-0.01u^2)$.

Finally, given a draw of $X_i(\cdot)$ and a choice of $(\beta,g^*)$, we generate the response as
\[
Y_i = g^*\!\left(\int_0^1 X_i(t)\beta(t)\,dt\right) + \varepsilon_i,
\qquad \varepsilon_i\overset{\text{i.i.d.}}{\sim}\mathcal N(0,\sigma_\varepsilon^2).
\]
To make noise levels comparable across settings, we calibrate additive Gaussian noise by a target signal-to-noise ratio,
$\mathrm{SNR}=\mathrm{Var}(\text{signal})/\mathrm{Var}(\text{noise})$.
In all simulations, the latent curves $X_i(\cdot)$ generate the responses above, but the learning algorithms observe only discretely sampled noisy curves obtained by adding i.i.d.\ Gaussian measurement noise at $\mathrm{SNR}=10$ on the observation grid.
The response noise variance $\sigma_\varepsilon^2$ is chosen so that the resulting responses attain the target response SNR (we consider $\mathrm{SNR}\in\{5,10\}$) in each scenario.
When required, we apply a denoising step to the noisy curve observations before constructing spline features. For each scenario, we simulate 100 replicates and summarize the results in Figures \ref{fig:f1-total}, \ref{fig:rmse-total}, \ref{fig:recall-total}, and \ref{fig:precision-total}.

\begin{figure*}[htbp]
  \centering
  \includegraphics[width=\textwidth]{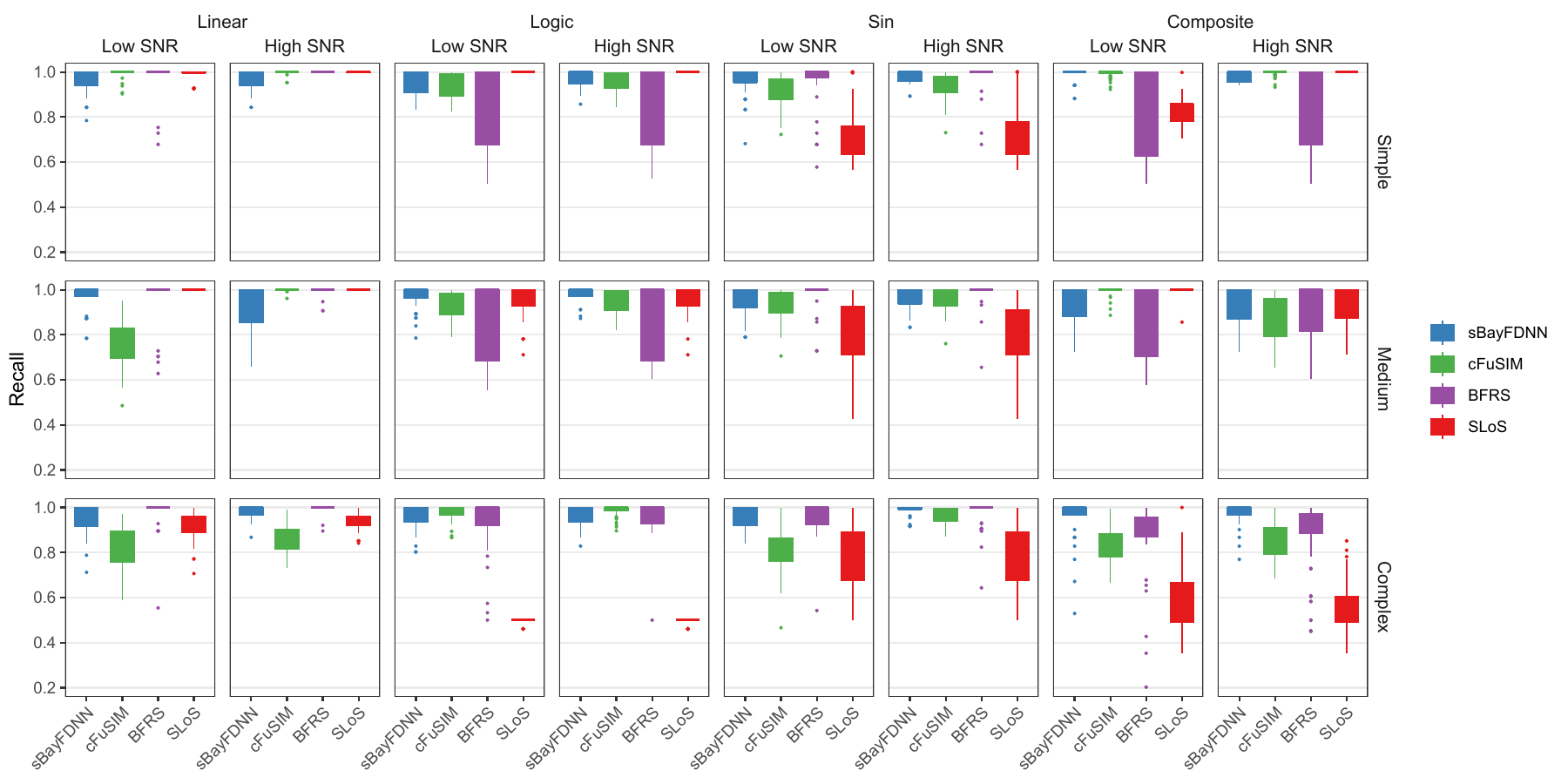}
  \caption{Recall values across $g$ functions, SNR settings, and $\beta(t)$ scenarios.}
  \label{fig:recall-total}
\end{figure*}

\begin{figure*}[htbp]
  \centering
  \includegraphics[width=\textwidth]{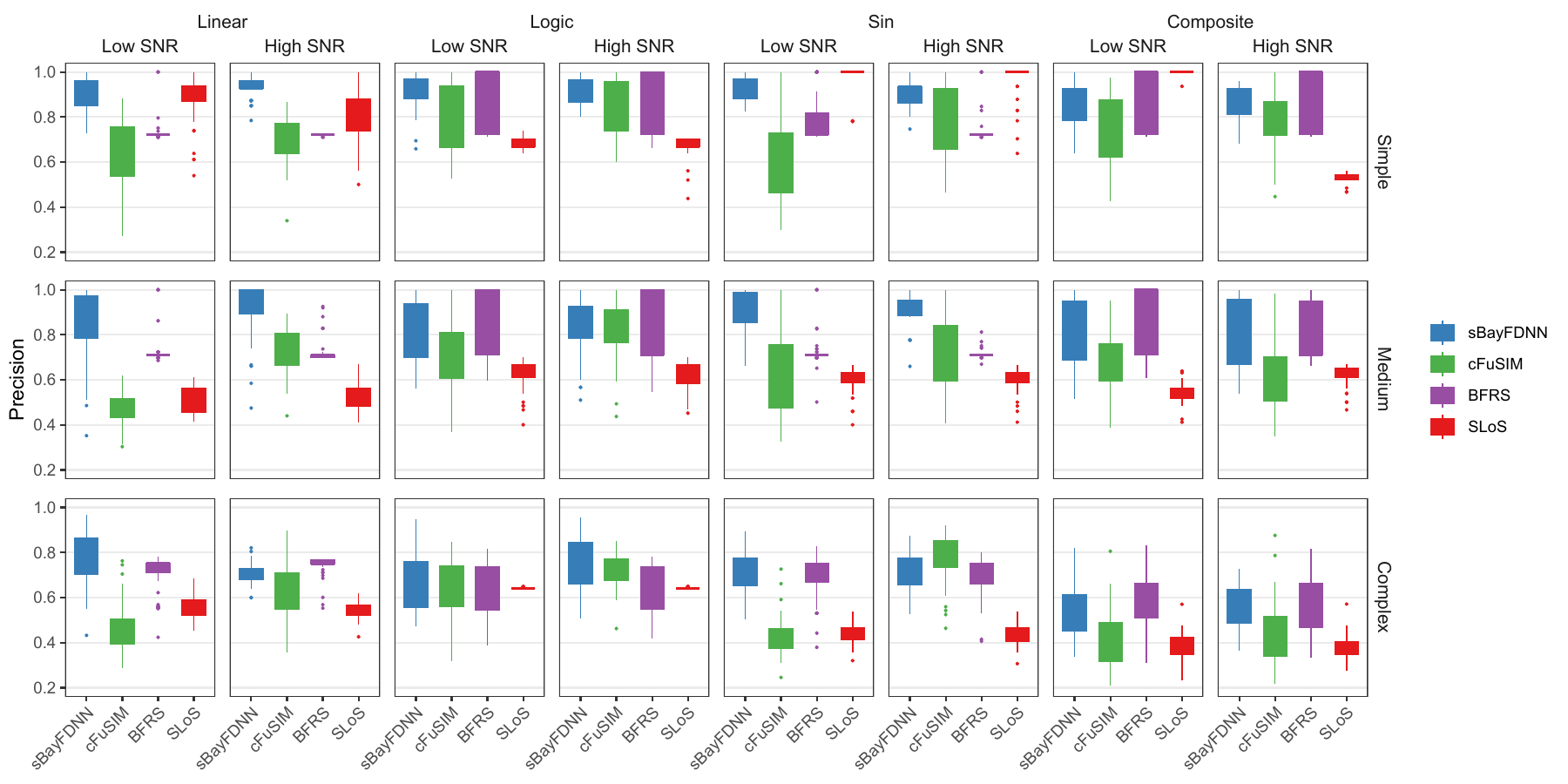}
  \caption{Precision values across $g$ functions, SNR settings, and $\beta(t)$ scenarios.}
  \label{fig:precision-total}
\end{figure*}
%%%%%%%%%%%%%%%%%%%%%%%%%%%%%%%%%%%%%%%%%%%%%%%%%%%%%%%%%%%%%%%%%%%%%%%%%%%%%%%
%%%%%%%%%%%%%%%%%%%%%%%%%%%%%%%%%%%%%%%%%%%%%%%%%%%%%%%%%%%%%%%%%%%%%%%%%%%%%%%

\subsection{Additional Simulation Analysis with Lower SNR}
\label{app:low-snr}
To further evaluate the proposed method under more challenging noise conditions, we conduct an additional analysis focusing on low-SNR scenarios (SNR = 2 and 3). Both prediction metrics and region‑selection metrics, reported as mean (SD) over the 50 replications, are shown in Table~\ref{tab:lowsnr_combined_vertical} for the scenario with composite link function $g$ and various settings of $\beta(t)$. As expected, all methods become more challenging, yet our approach remains stable and continues to outperform the baselines.
% The goal was to assess whether the method can still maintain reasonable prediction accuracy and support recovery performance when the signal is weak and the distinction between relevant and irrelevant regions becomes less pronounced.

\begin{table*}[htbp]
\centering
\small
\setlength{\tabcolsep}{6pt}
\caption{Simulation results under scenarios with composite $g$ and lower SNR levels. Prediction and selection metrics are reported as mean(standard deviation).}
\label{tab:lowsnr_combined_vertical}
\begin{tabular}{lllcccc}
\toprule
SNR & $\beta(t)$ Setting & Method & RMSE & F1 & Recall & Precision \\
\midrule
\multirow{18}{*}{2.0}
& \multirow{6}{*}{Complex}
& sBayFDNN & \textbf{1.019(0.071)} & \textbf{0.641(0.066)} & 0.811(0.238) & \textbf{0.601(0.169)} \\
& & FNN      & 1.100(0.108) & -- & -- & -- \\
& & AdaFNN   & 1.175(0.209) & -- & -- & -- \\
& & cFuSIM   & 1.023(0.068) & 0.488(0.053) & 0.766(0.124) & 0.358(0.079) \\
& & BFRS     & 1.073(0.081) & 0.517(0.268) & 0.503(0.307) & 0.583(0.270) \\
& & SLoS     & 1.072(0.078) & 0.621(0.060) & \textbf{0.912(0.106)} & 0.472(0.044) \\
\cmidrule(lr){2-7}
& \multirow{6}{*}{Medium}
& sBayFDNN & \textbf{0.853(0.042)} & \textbf{0.850(0.081)} & 0.964(0.060) & 0.771(0.132) \\
& & FNN      & 0.949(0.070) & -- & -- & -- \\
& & AdaFNN   & 0.887(0.144) & -- & -- & -- \\
& & cFuSIM   & 1.113(0.324) & 0.606(0.049) & \textbf{1.000(0.000)} & 0.437(0.048) \\
& & BFRS     & 0.946(0.051) & 0.735(0.260) & 0.721(0.298) & \textbf{0.798(0.310)} \\
& & SLoS     & 0.944(0.055) & 0.578(0.094) & 0.735(0.174) & 0.493(0.111) \\
\cmidrule(lr){2-7}
& \multirow{6}{*}{Simple}
& sBayFDNN & 0.866(0.058) & \textbf{0.924(0.053)} & 0.979(0.028) & 0.882(0.095) \\
& & FNN      & 0.944(0.054) & -- & -- & -- \\
& & AdaFNN   & \textbf{0.855(0.171)} & -- & -- & -- \\
& & cFuSIM   & 0.883(0.116) & 0.855(0.077) & \textbf{0.988(0.027)} & 0.754(0.124) \\
& & BFRS     & 0.959(0.065) & 0.767(0.098) & 0.691(0.194) & 0.930(0.117) \\
& & SLoS     & 0.960(0.065) & 0.901(0.032) & 0.831(0.049) & \textbf{0.986(0.044)} \\
\midrule
\multirow{18}{*}{3.0}
& \multirow{6}{*}{Complex}
& sBayFDNN & \textbf{0.908(0.063)} & \textbf{0.651(0.068)} & 0.818(0.127) & 0.549(0.069) \\
& & FNN      & 0.998(0.080) & -- & -- & -- \\
& & AdaFNN   & 0.964(0.185) & -- & -- & -- \\
& & cFuSIM   & 0.926(0.078) & 0.530(0.071) & 0.805(0.080) & 0.395(0.090) \\
& & BFRS     & 0.974(0.076) & 0.562(0.256) & 0.606(0.323) & \textbf{0.556(0.225)} \\
& & SLoS     & 0.972(0.068) & 0.609(0.057) & \textbf{0.912(0.102)} & 0.459(0.047) \\
\cmidrule(lr){2-7}
& \multirow{6}{*}{Medium}
& sBayFDNN & 0.742(0.040) & \textbf{0.843(0.079)} & 0.964(0.060) & \textbf{0.761(0.131)} \\
& & FNN      & 0.845(0.064) & -- & -- & -- \\
& & AdaFNN   & \textbf{0.729(0.070)} & -- & -- & -- \\
& & cFuSIM   & 0.839(0.245) & 0.636(0.137) & 0.835(0.129) & 0.540(0.189) \\
& & BFRS     & 0.847(0.052) & 0.724(0.257) & 0.741(0.305) & 0.748(0.292) \\
& & SLoS     & 0.847(0.051) & 0.455(0.082) & \textbf{1.000(0.000)} & 0.298(0.070) \\
\cmidrule(lr){2-7}
& \multirow{6}{*}{Simple}
& sBayFDNN & 0.753(0.055) & \textbf{0.908(0.055)} & \textbf{0.989(0.023)} & 0.844(0.093) \\
& & FNN      & 0.837(0.058) & -- & -- & -- \\
& & AdaFNN   & 0.802(0.187) & -- & -- & -- \\
& & cFuSIM   & \textbf{0.734(0.062)} & 0.662(0.119) & \textbf{0.989(0.029)} & 0.497(0.137) \\
& & BFRS     & 0.864(0.064) & 0.826(0.032) & 0.787(0.141) & 0.910(0.124) \\
& & SLoS     & 0.865(0.062) & 0.874(0.088) & 0.839(0.075) & \textbf{0.947(0.168)} \\
\bottomrule
\end{tabular}
\end{table*}

\subsection{Additional Simulation Analysis with Model-Misspecification and Non-Gaussian Noise}
\label{app:stress-tests}

To further assess robustness beyond the main simulation settings, we consider two additional scenarios.
The first is an FGAM-type mean misspecification setting: 
$Y_i^{\mathrm{clean}}=\int_0^1 F(X_i(t),t)\,dt$ where $F(x,t)= -0.5+\exp\!\left[-\left(\frac{x}{5.0}\right)^2-\left(\frac{t-0.5}{0.3}\right)^2\right]$,
and the second is a heavy-tailed error setting with Student-$t$ noise (degrees of freedom $=3$) under the original single-index mean with simple $\beta(t)$ and composite link function $g$.
For both scenarios, we evaluate performance at SNR $=5$ and SNR $=10$.
Tables~\ref{tab:stress-fgam} and~\ref{tab:stress-nongaussian} report the corresponding prediction and selection results as mean(standard deviation) over these 50 replications.
\begin{table*}[htbp]
\centering
\small
\setlength{\tabcolsep}{4pt}
\caption{Simulation results for the stress setting with FGAM mean under two noise levels. Prediction and selection metrics are reported as mean(standard deviation).}
\label{tab:stress-fgam}
\begin{tabular}{c lcccc}
\toprule
SNR & Method & RMSE & F1 & Recall & Precision \\
\midrule
\multirow{6}{*}{5}
& sBayFDNN & \textbf{0.049(0.006)} & \textbf{0.907(0.196)} & \textbf{0.873(0.268)} & \textbf{1.000(0.000)} \\
& FNN      & 0.049(0.004) & -- & -- & -- \\
& AdaFNN   & 0.051(0.002) & -- & -- & -- \\
& cFuSIM   & 0.051(0.003) & 0.824(0.006) & 0.701(0.009) & \textbf{1.000(0.000)} \\
& BFRS     & 0.053(0.003) & 0.000(0.000) & 0.000(0.000) & 0.000(0.000) \\
& SLoS     & 0.053(0.003) & 0.000(0.000) & 0.000(0.000) & 0.000(0.000) \\
\cmidrule(lr){1-6}
\multirow{6}{*}{10}
& sBayFDNN & \textbf{0.040(0.006)} & \textbf{0.806(0.246)} & \textbf{0.736(0.331)} & \textbf{1.000(0.000)} \\
& FNN      & 0.041(0.003) & -- & -- & -- \\
& AdaFNN   & 0.049(0.002) & -- & -- & -- \\
& cFuSIM   & 0.042(0.006) & 0.700(0.061) & 0.542(0.074) & \textbf{1.000(0.000)} \\
& BFRS     & 0.051(0.003) & 0.000(0.000) & 0.000(0.000) & 0.000(0.000) \\
& SLoS     & 0.051(0.003) & 0.000(0.000) & 0.000(0.000) & 0.000(0.000) \\
\bottomrule
\end{tabular}
\end{table*}
\begin{table*}[htbp]
\centering
\small
\setlength{\tabcolsep}{4pt}
\caption{Simulation results for the stress setting with non-Gaussian error, simple $\beta(t)$, and composite link function $g$ under two noise levels. Prediction and selection metrics are reported as mean(standard deviation).}
\label{tab:stress-nongaussian}
\begin{tabular}{c lcccc}
\toprule
SNR & Method & RMSE & F1 & Recall & Precision \\
\midrule
\multirow{6}{*}{5}
& sBayFDNN & \textbf{0.658(0.077)} & \textbf{0.921(0.044)} & 0.953(0.053) & \textbf{0.900(0.092)} \\
& FNN      & 0.704(0.084) & -- & -- & -- \\
& AdaFNN   & 0.796(0.295) & -- & -- & -- \\
& cFuSIM   & 0.661(0.093) & 0.736(0.138) & 0.988(0.035) & 0.602(0.169) \\
& BFRS     & 0.777(0.076) & 0.802(0.054) & 0.852(0.196) & 0.811(0.137) \\
& SLoS     & 0.777(0.077) & 0.625(0.021) & \textbf{1.000(0.000)} & 0.455(0.022) \\
\cmidrule(lr){1-6}
\multirow{6}{*}{10}
& sBayFDNN & \textbf{0.553(0.066)} & \textbf{0.944(0.025)} & 0.973(0.038) & \textbf{0.921(0.061)} \\
& FNN      & 0.614(0.067) & -- & -- & -- \\
& AdaFNN   & 0.579(0.055) & -- & -- & -- \\
& cFuSIM   & 0.585(0.063) & 0.751(0.125) & 0.989(0.032) & 0.620(0.158) \\
& BFRS     & 0.712(0.066) & 0.830(0.045) & 0.854(0.166) & 0.858(0.150) \\
& SLoS     & 0.713(0.065) & 0.389(0.039) & \textbf{1.000(0.001)} & 0.242(0.030) \\
\bottomrule
\end{tabular}
\end{table*}

Under the FGAM mean misspecification setting, sBayFDNN achieves the best prediction performance at both SNR levels and also delivers the strongest support recovery overall.
In particular, it attains the highest F1 among the competing region-selection methods, indicating that the proposed approach remains effective even when the true mean structure departs from the assumed single-index form.
The results therefore suggest a useful degree of robustness to mean misspecification.

Under the heavy-tailed error setting, sBayFDNN again performs strongly across both prediction and selection metrics.
It achieves the best or nearly best prediction accuracy and yields the highest F1 and precision among the region-selection competitors.
Although performance differences are naturally smaller in some prediction metrics, the overall pattern indicates that the proposed method remains robust when the Gaussian noise assumption is violated by heavy-tailed disturbances.

\section{Details for the Real-world Datasets}

\paragraph{Implementation details.}
Relative to the simulation defaults in Appendix~\ref{app:hyperparams}, we keep the same network architecture and use mini-batch size 32.
For smaller-sample real datasets, we recommend a less aggressive first-layer sparsification prior to mitigate underfitting; accordingly, we use $(\lambda_n,\sigma_{0,n}^2,\sigma_{1,n}^2)=(10^{-1},10^{-5},5\times 10^{-2})$ for Tecator, Bike, and IHPC.
(Unless otherwise stated, other hyperparameters follow Appendix~\ref{app:hyperparams}.)

\textbf{ECG.}
The ECG dataset is from the EchoNext dataset on PhysioNet \citep{PhysioNet-echonext-1.1.0,goldberger2000physiobank} and downloaded 
from https://physionet.org/content/echonext/1.1.0/.
To reduce phase variability, we detect R-peaks and align each waveform to the detected R-peak, extract a fixed-length beat-centered window, and resample it to a common grid of length $L=256$ at 250\,Hz, following standard ECG preprocessing practice \cite{kachuee2018ecg,makowski2021neurokit2}.
To better approximate an i.i.d.\ sample, we remove repeated measurements from the same subject; after deduplication, the final sample sizes are 26{,}192/4{,}618/5{,}434 for train/validation/test, equal to the numbers of unique patients in each split.
The window length is selected using the training split only by screening candidate pre/post windows and choosing the configuration that minimizes variability of the aligned R-peak location across subjects (trimmed standard deviation below $0.03$ on the normalized phase), which yields a symmetric window of 0.3\,s before and 0.3\,s after the R-peak.

To assess region identification, we define silver-standard intervals on the original domains and then map them to the normalized domain $\mathcal T=[0,1]$ induced by our fixed-window. For ECG, we use a $120$\,ms window \citep{yu2003high,HUMMEL2009553} centered at the R peak, i.e., $[-0.06,0.06]$ seconds relative to the R peak, as a silver-standard proxy for the QRS complex extent. 

To adjust for scalar covariates, we residualize the response by fitting an OLS regression on the training split (age, sex, acquisition year, location setting, and race/ethnicity) and applying the fitted adjustment to validation/test splits.
For ECG, we use a higher-capacity network (7 hidden layers of width 512; total depth 8) trained with mini-batch size 512 and learning rate $5\times 10^{-4}$; we consider $\mathcal J=\{180,200,220\}$ and use B-splines of degree 8 to accommodate the larger sample size and sharply localized QRS morphology. Here, a larger $J_n$ is used because the signal contains a narrow peak; smaller $J_n$ would yield overly broad selections. Table~\ref{tab:ecg-jn-detailed} reports the ECG sensitivity results over different spline resolutions $J_n$; all metrics are computed as described in Appendix~\ref{app:jn-exploration-case3}. It is observed that the selected set of candidate $J_n$ values achieves a good balance between recall and precision.

\begin{table*}[htbp]
\centering
\small
\setlength{\tabcolsep}{4pt}
\caption{Results for ECG dataset under different $J_n$ settings.}
\label{tab:ecg-jn-detailed}
\begin{tabular}{cccccccc}
\toprule
$J_n$ & MeanLen & Curve Roughness & RMSE & F1 & Recall & Precision & MinLen/Mesh \\
\midrule
20  & 1.0000 & 0.0000 & 14.2331 & 0.3333 & 1.0000 & 0.2000 & 20.0000 \\
40  & 1.0000 & 0.0410 & 12.8696 & 0.3333 & 1.0000 & 0.2000 & 40.0000 \\
60  & 1.0000 & 0.1559 & 12.3688 & 0.3333 & 1.0000 & 0.2000 & 60.0000 \\
80  & 1.0000 & 0.1063 & 12.3784 & 0.3333 & 1.0000 & 0.2000 & 80.0000 \\
100 & 0.2935 & 0.1818 & 12.1960 & 0.5083 & 1.0000 & 0.3407 & 1.0870 \\
120 & 0.1845 & 0.2235 & 12.1353 & 0.5308 & 1.0000 & 0.3613 & 1.0714 \\
140 & 0.1591 & 0.1899 & 12.0042 & 0.5906 & 1.0000 & 0.4190 & 1.0606 \\
160 & 0.1066 & 0.1660 & 12.1815 & 0.5458 & 1.0000 & 0.3753 & 1.0526 \\
180 & 0.1570 & 0.1251 & 11.9560 & 0.6359 & 1.0000 & 0.4758 & 20.9302 \\
200 & 0.0955 & 0.1296 & 11.9265 & 0.6039 & 0.7344 & 0.4216 & 9.3750 \\
220 & 0.1486 & 0.1233 & 12.0542 & 0.5979 & 0.8302 & 0.4156 & 22.8302 \\
240 & 0.0733 & 0.0887 & 12.0183 & 0.4764 & 0.5000 & 0.4549 & 9.3103 \\
\bottomrule
\end{tabular}
\end{table*}

%  (ECG) and wavelength normalization (Tecator)

% To assess region identification, we define widely-used silver-standard intervals on the original domains: for ECG, a $120$ms window centered at the R-peak (i.e., $[-0.06, 0.06]$ seconds) as a proxy for the QRS complex \citep{yu2003high, HUMMEL2009553}; for Tecator, the water-absorption band around 970–980 nm, specifically the interval $[965, 985]$nm \citep{VANKOLLENBURG2021121865}. These intervals are then mapped to the normalized domain $\mathcal{T} = [0,1]$ via fixed-window (ECG) and wavelength (Tecator) normalization. Further details on the datasets are provided in the Appendix. 

\textbf{Tecator.}
The Tecator dataset is downloaded from https://lib.stat.cmu.edu/datasets/tecator.
It contains near-infrared absorbance spectra of 240 meat samples measured on 100 wavelength channels ranging from 850\,nm to 1{,}050\,nm, together with moisture (water), fat and protein percentages determined by analytic chemistry.
We use water as the response and follow the official split (129 training, 43 monitoring/validation, and 43 testing samples).
Relative to the simulation defaults in Appendix~\ref{app:hyperparams}, we keep the same network architecture but use mini-batch size 32. 
We use $\mathcal J=\{80,100,120\}$. We use the water-related absorption band around 970–980 nm\citep{VANKOLLENBURG2021121865} and define the silver-standard wavelength interval as $[965,985]$\,nm. The interval is then mapped to the normalized domain $\mathcal{T} = [0,1]$ via  wavelength normalization.

\paragraph{\textbf{Bike.}}
We use the Bike Sharing dataset \citep{bike_sharing_275} in its hourly-resolution form and represent each day as a functional observation with $T=24$ equally spaced time points on $\mathcal T=[0,1]$. The dataset  can be obtained from {https://archive.ics.uci.edu/dataset/275/bike\%2Bsharing\%2Bdataset}.
We define the response as the total demand over the next 7 days and adopt a chronological train/validation/test split with sample sizes $453/97/98$ (total $N=648$), covering the date range 2011-01-16 to 2012-12-30 (train end: 2012-06-12; validation end: 2012-09-17). We use $\mathcal J=\{8,10,12,14\}$.

\paragraph{\textbf{IHPC.}}
We use the Individual Household Electric Power Consumption (IHPC) dataset from the UCI Machine Learning Repository https://archive.ics.uci.edu/dataset/235/individual+household+electric+power+consumption \citep{individual_household_electric_power_consumption_235}. 
We use the minute-averaged global active power trajectory as the functional input, yielding daily curves of length $T=1440$ on $\mathcal T=[0,1]$. 
After restricting to complete days and constructing next-day prediction pairs, we obtain $N=1290$ samples and use a chronological train/validation/test split with sample sizes $672/329/289$, spanning 2006-12-17 to 2010-11-24.We use $\mathcal J=\{15,20,30,40\}$.

Table~\ref{tab:runtime-ihpc-methods} reports runtime on the IHPC dataset. 
Under this high-resolution setting, sBayFDNN remains practically feasible, with an average runtime of about 674 seconds on the reported hardware configuration.
% Although the proposed procedure includes multiple random restarts for more stable optimization, the additional computational burden remains manageable in practice.
Although sBayFDNN is not the fastest method overall, but it is faster than BFRS and substantially more practical than the available CPU-only FNN implementation and cFuSIM under default dense-grid settings. This suggests that the proposed projection-based framework remains computationally feasible for high-resolution functional inputs, even when repeated fitting is incorporated.

\begin{table}[h]
\centering
\small
\setlength{\tabcolsep}{5pt}
\caption{IHPC runtime (mean (sd), seconds) comparison across methods under (AMD Ryzen 5 5600H CPU, NVIDIA RTX 3050 Ti Laptop GPU with 4GB VRAM, and 16GB RAM).}
\label{tab:runtime-ihpc-methods}
\begin{tabular}{lcccccc}
\toprule
& sBayFDNN & AdaFNN & FNN & cFuSim & BFRS & SLoS \\
\midrule
Runtime  & 673.641 (27.851) & 513.336 (23.642) & $>$20 min & $>$30 min & 819.805 (37.151) & 104.144 (12.219) \\
\bottomrule
\end{tabular}
% \vspace{2pt}

% \footnotesize{FNN (available R/Keras implementation) is CPU-only and substantially slower (typically $>20$ min/fit); cFuSIM becomes prohibitively slow on dense grids under default basis settings.}
\end{table}

\end{document}